\setlist[itemize]{leftmargin=0.5cm,itemsep=0cm}
\newtheorem{theorem}{Theorem}
\newtheorem{lemma}{Lemma}
\newtheorem{definition}{Definition}
\newtheorem{remark}{Remark}
\setlist[enumerate]{itemsep=0cm,leftmargin=0.5cm,topsep=0cm}
\title{Foundations of Multivariate Distributional Reinforcement~Learning}
\author[$\spadesuit$,$\heartsuit$]{Harley Wiltzer\thanks{Correspondence to \texttt{harley.wiltzer@mail.mcgill.ca}}}
\author[$\spadesuit$,$\heartsuit$]{Jesse Farebrother}
\author[$\clubsuit$,$\diamondsuit$]{Arthur Gretton}
\author[$\clubsuit$]{Mark Rowland}
\affil[$\spadesuit$]{Mila --- Qu\'ebec AI Institute}
\affil[$\heartsuit$]{McGill University}
\affil[$\clubsuit$]{Google DeepMind}
\affil[$\diamondsuit$]{Gatsby Unit, University College London}
\date{}
\begin{document}

\maketitle

\begin{abstract}
  \noindent In reinforcement learning (RL), the consideration of multivariate reward signals has led to fundamental advancements in multi-objective decision-making, transfer learning, and representation learning.
  This work introduces the first oracle-free and computationally-tractable algorithms for provably convergent multivariate \emph{distributional} dynamic programming and temporal difference learning.
  Our convergence rates match the familiar rates in the scalar reward setting, and additionally provide new insights into the fidelity of approximate return distribution representations as a function of the reward dimension.
  Surprisingly, when the reward dimension is larger than $1$, we show that standard analysis of categorical TD learning fails, which we resolve with a novel projection onto the space of mass-$1$ signed measures.
  Finally, with the aid of our technical results and simulations, we identify tradeoffs between distribution representations that influence the performance of multivariate distributional RL in practice.
\end{abstract}

\section{Introduction}\label{s:intro}

Distributional reinforcement learning \cite[DRL]{morimura2010nonparametric,bellemareDistributionalPerspectiveReinforcement2017,bellemareDistributionalReinforcementLearning2023} focuses on the idea of learning probability distributions of an agent's random return, rather than the classical approach of learning only its mean. This has been highly effective in combination with deep reinforcement learning \cite{yang2019fully,bellemareAutonomousNavigationStratospheric2020,wurman2022outracing}, and DRL has found applications in risk-sensitive decision making \cite{limDistributionalReinforcementLearning2022,kastnerDistributionalModelEquivalence2023}, neuroscience \cite{dabney2020distributional}, and multi-agent settings \cite{rowland2021temporal,sun2021dfac}.

In general, research in distributional reinforcement learning has focused on the classical setting of a scalar reward function. However, prior non-distributional approaches to multi-objective RL \cite{roijers13morl,hayes22morl} and transfer learning \cite{barreto2017successor,barreto20fastrl} model value functions of multivariate cumulants,\footnote{Cumulants refer to accumulated quantities in RL (e.g., rewards or multivariate rewards)---not to be confused with statistical cumulants.} rather than a scalar reward. Having learnt such a multivariate value function, it is then possible to perform zero-shot evaluation and policy improvement for any scalar reward signal contained in the span of the coordinates of the multivariate cumulants, opening up a variety of applications in transfer learning, and multi-objective and constrained RL.

\emph{Multivariate distributional RL} combines these two ideas, and aims to learn the full probability distribution of returns given a multivariate cumulant function. Successfully learning the multivariate reward distribution opens up a variety of unique possibilities, such as zero-shot return distribution estimation \cite{wiltzer2024distributional} and risk-sensitive policy improvement \cite{cai2024distributional}.

Pioneering works have already proposed algorithms for multivariate distributional RL. While these works all demonstrate benefits from the proposed algorithmic approaches, each suffers from separate drawbacks, such as not modelling the full joint distribution \cite{gimelfarbRiskAwareTransferReinforcement2021}, lacking theoretical guarantees \cite{freirichDistributionalMultivariatePolicy2019,
  zhangDistributionalReinforcementLearning2021}, or requiring a maximum-likelihood optimisation oracle for implementation \cite{wuDistributionalOfflinePolicy2023}. 
Concurrently, the work of \cite{lee2024off} analyzed algorithms for DRL with Banach-space-valued rewards, and provided convergence guarantees for dynamic programming with non-parametric (intractable) distribution models.

Our central contribution in this paper is to propose algorithms for dynamic programming and temporal-difference learning in multivariate DRL which are computationally tractable and theoretically justified, with convergence guarantees.
We show that reward dimensions strictly larger than $1$ introduce new computational and statistical challenges.
To resolve these challenges, we introduce multiple novel algorithmic techniques, including a randomized dynamic programming operator for efficiently approximating projected updates with high probability, and a novel TD-learning algorithm operating on mass-$1$ \emph{signed} measures. These new techniques recover existing bounds even in the scalar reward case, and provide new insights into the behavior of distributional RL algorithms as a function of the reward dimension.

\section{Background}\label{s:background}

We consider a Markov decision process with Polish state space $\mathcal{X}$, action space $\mathcal{A}$, transition kernel $P : \mathcal{X} \times \mathcal{A} \to \mathscr{P}(\mathcal{X})$, and discount factor $\gamma \in [0,1)$. Unlike the standard RL setting, we consider a vector-valued reward function $r : \mathcal{X} \to [0,R_{\max}]^d$, as in the literature on successor features \cite{barretoSuccessorFeaturesTransfer2018}. Given a policy $\pi:\statespace\to\probset{\actionspace}$, we write the policy-conditioned transition kernel $P^\pi(\cdot\mid x)=\int P(\cdot\mid x, a)\pi(\mathrm{d}a\mid x)$.

\textbf{Multi-variate return distributions.} We write $(X_t)_{t \geq 0}$ for a trajectory generated by setting $X_0 = x$, and for each $t \geq 0$, $X_{t+1} \sim P^\pi(\cdot|X_t)$.
The return obtained along this trajectory is then defined by
$G^\pi(x) = \sum_{t=0}^\infty \gamma^t r(X_t)$, and the \emph{(multi-)return distribution function} is $\mvdistretsym^\pi(x) = \law{G^\pi(x)}$.

\textbf{Zero-shot evaluation.} An intriguing prospect of estimating multivariate return distributions is the ability to predict (scalar) return distributions for any reward function in the span of the cumulants. Indeed, \cite{zhangDistributionalReinforcementLearning2021,cai2024distributional} show that for any reward function $\tilde{r}:x\mapsto \langle r(x), w\rangle$ for some $w\in\R^d$, $\langle G^\pi(x), w\rangle\eqlaw\sum_{t\geq 0}\gamma^t\tilde{r}(X_t)$ for $X_0=x$. Likewise, one might consider $r(x)=\delta_x$, in which case $G^\pi(x)$ corresponds to the per-trajectory discounted state visitation measure, and \cite{wiltzer2024distributional} demonstrated methods for learning the distribution of $G^\pi$ to infer the return distribution for any bounded deterministic reward function.

\textbf{Multivariate distributional Bellman equation.} It was shown in \cite{zhangDistributionalReinforcementLearning2021} that multi-return distributions obey a distributional Bellman equation,
similar to the scalar case \cite{morimura2010nonparametric,bellemareDistributionalPerspectiveReinforcement2017}, and defines the multivariate distributional Bellman operator $\dbo:\probset{\R^d}^{\statespace} \to \probset{\R^d}^{\statespace}$ by
\begin{equation}\label{eq:dbo}
(\dbo\mvdistretsym)(x) = \Expectation{X'\sim P^\pi(\cdot\mid x)}{\pushforward{(\bootfn{r(x)})}{\mvdistretsym(X')}},
\end{equation}
where $\bootfn{y}(z) = y + \gamma z$ and $\pushforward{f}{\mu} = \mu\circ f^{-1}$ is the \emph{pushforward} of a measure $\mu$ through a measurable function $f$. In particular, \cite{zhangDistributionalReinforcementLearning2021} showed that $\mvdistretsym^\pi$ satisfies the \emph{multi-variate distributional Bellman equation} $\dbo\mvdistretsym^\pi = \mvdistretsym^\pi$, and that $\dbo$ is a $\gamma$-contraction in $\supremal{W}_p$, where $\supremal{W}_p(\mvdistretsym_1, \mvdistretsym_2) = \sup_{x\in\statespace}W_p(\mvdistretsym_1(x), \mvdistretsym_2(x))$ and $W_p$ is the $p$-Wasserstein metric \cite{villani2009optimal}. This suggests a convergent scheme for approximating $\mvdistretsym^\pi$ in $\supremal{W}_p$ by \emph{distributional dynamic programming}, that is, computing the iterates $\mvdistretsym_{k+1}=\dbo\mvdistretsym_k$, following Banach's fixed point theorem.

\textbf{Approximating multivariate return distributions.}
In practice, however, the iterates $\mvdistretsym_{k+1}=\dbo\mvdistretsym_k$ cannot be computed efficiently, because the size of the support of $\eta_k$ may increase exponentially with $k$. A variety of alternative approaches that aim to circumvent this computational difficulty have been considered \cite{freirichDistributionalMultivariatePolicy2019,zhangDistributionalReinforcementLearning2021,wuDistributionalOfflinePolicy2023}. Many of these approaches have proven effective in combination with deep reinforcement learning, though as tabular algorithms, either lack theoretical guarantees, or rely on oracles for solving possibly intractable optimisation problems. A more complete account of multivariate DRL is given in Appendix \ref{app:related}. A central motivation of our work is the development of computationally-tractable algorithms for multivariate distributional RL with theoretically guarantees.

\textbf{Maximum mean discrepancies. }
A core tool in the development of our proposed algorithms, as well as some prior work \cite{nguyenDistributionalReinforcementLearning2020,zhangDistributionalReinforcementLearning2021}, is the notion of distance over probability distributions given by maximum mean discrepancies \cite[MMD]{grettonKernelTwoSampleTest2012}.
A maximum mean discrepancy $\mmd : \mathscr{P}(\mathcal{Y}) \times \mathscr{P}(\mathcal{Y}) \to \R_+$ assigns a notion of distance to pairs of probability distributions, and is parametrised via a choice of kernel $\kappa : \mathcal{Y} \times \mathcal{Y} \rightarrow \R$, defined by
\begin{align*}
    \mmd(p, q) = \mathbb{E}_{(Y_1, Y_2) \sim p\otimes p}[\kappa(Y_1, Y_2)] - 2 \mathbb{E}_{(Y, Z) \sim p\otimes q}[\kappa(Y, Z)] + \mathbb{E}_{(Z_1, Z_2) \sim q \otimes q}[\kappa(Z_1, Z_2)] \, .
\end{align*}

A useful alternative perspective on MMD is that the choice of kernel $\kappa$ induces a reproducing kernel Hilbert space (RKHS) of functions $\mathcal{H}$, namely the closure of the span of functions of the form $z \mapsto \kappa(y, z)$ for each $y \in \mathcal{Y}$, with respect to the norm $\| \cdot \|_\mathcal{H}$ induced by the inner product $\langle \kappa(y_1, \cdot), \kappa(y_2, \cdot) \rangle = \kappa(y_1, y_2)$. With this interpretation, $\mmd(p, q)$ is equal to $\| \mu_p - \mu_q \|_{\mathcal{H}}$, where $\mu_p = \int_{\mathcal{Y}}\kappa(\cdot, y)p(\mathrm{d}y)\in\mathcal{H}$ is the \emph{mean embedding} of $p$ (similarly for $\mu_q$).
When $p\mapsto\mu_p$ is injective, the kernel $\kappa$ is called \emph{characteristic}, and $\mmd$ is then a proper metric on $\probset{\mathcal{Y}}$ \cite{grettonKernelTwoSampleTest2012}.
In the remainder of this work, we will assume that all spaces of measures will be over compact sets $\mathcal{Y}$; thus with continuous kernels, we are ensured that distances between probability measures are bounded. When comparing return distributions, this is achieved by asserting that rewards are bounded.

We conclude this section by recalling a particular family of kernels, introduced in \cite{sejdinovicEquivalenceDistancebasedRKHSbased2013}, that will be particularly useful for our analysis. These are the kernels induced by semimetrics.

\begin{definition}\label{def:semimetric}
Let $\mathcal{Y}$ be a nonempty set, and consider a function $\rho:\mathcal{Y}\times\mathcal{Y}\to \R_+$. Then $\rho$ is called a \emph{semimetric} if it is symmetric and $\rho(y_1, y_2)=0\iff y_1=y_2$.
Additionally, $\rho$ is said to have \emph{strong negative type} if $\int\rho\; \mathrm{d}([p - q]\times[p-q])<0$ whenever $p,q\in\probset{\mathcal{Y}}$ with $p\neq q$.
\end{definition}

Notably, certain semimetrics naturally induce characteristic kernels and probability metrics.

\begin{theorem}[{\cite[Proposition 29]{sejdinovicEquivalenceDistancebasedRKHSbased2013}}]\label{thm:kernel-semimetric}
Let $\rho$ be a semimetric on a space $\mathcal{Y}$ have \emph{strong negative type}, in the sense that $\int\rho\mathrm{d}([p-q]\times[p-q])<0$ whenever $p\neq q$ are probability measures on a compact set $\mathcal{Y}$.
Moreover, let $\kappa:\mathcal{Y}\times\mathcal{Y}\to\R$ denote the kernel \emph{induced by $\rho$}, that is
\[
\kappa(y_1, y_2) = \frac{1}{2}(\rho(y_1, y_0) + \rho(y_2, y_0) - \rho(y_1, y_2))
\]
for some $y_0\in\mathcal{Y}$. Then $\kappa$ is characteristic, so $\mmd$ is a metric.
\end{theorem}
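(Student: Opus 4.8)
The plan is to reduce the characteristic property to the defining inequality of strong negative type, by way of the standard identity connecting the MMD induced by $\kappa$ with the energy-distance functional of the underlying semimetric $\rho$. Writing $\mmd(p,q)^2 = \|\mu_p - \mu_q\|_{\mathcal{H}}^2 = \int \kappa \, \mathrm{d}([p-q]\times[p-q])$ for the squared discrepancy (the last equality being the usual mean-embedding expansion $\langle \mu_p,\mu_q\rangle = \iint \kappa\,\mathrm{d}(p\times q)$), I would aim to establish
\[
\mmd(p,q)^2 = -\tfrac12 \int \rho \, \mathrm{d}([p-q]\times[p-q])
\]
for all $p, q \in \probset{\mathcal{Y}}$, after which the theorem is immediate.

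Before deriving this, I would confirm that $\kappa$ is a genuine positive-definite kernel, so that the RKHS $\mathcal{H}$, the mean embeddings $\mu_p$, and the pseudometric structure of $\mmd$ are all well-defined. This is where strong negative type enters structurally: it implies ordinary negative type of $\rho$ (given real coefficients summing to zero, split them into positive and negative parts, renormalize these into a pair of probability measures, and apply Definition~\ref{def:semimetric}), and Schoenberg's classical characterization then yields that the kernel induced by a negative-type semimetric is positive definite. Compactness of $\mathcal{Y}$ together with continuity and boundedness of $\rho$ ensures that every integral below is finite.

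The core computation is a short expansion. Substituting $\kappa(y_1,y_2) = \tfrac12(\rho(y_1,y_0) + \rho(y_2,y_0) - \rho(y_1,y_2))$ into $\int \kappa \, \mathrm{d}([p-q]\times[p-q])$, the two terms $\tfrac12\rho(y_1,y_0)$ and $\tfrac12\rho(y_2,y_0)$ each depend on a single coordinate. Integrating a one-variable function against the product $[p-q]\times[p-q]$ factorizes, and one factor is $\int \mathrm{d}[p-q] = p(\mathcal{Y}) - q(\mathcal{Y}) = 0$ since $p,q$ are both probability measures; hence those terms vanish and the $y_0$-dependence drops out entirely. What survives is exactly $-\tfrac12\int \rho\,\mathrm{d}([p-q]\times[p-q])$, giving the identity above (which also re-proves nonnegativity of the squared discrepancy from negative type).

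Finally, I would conclude. For $p\neq q$, strong negative type gives $\int \rho\,\mathrm{d}([p-q]\times[p-q]) < 0$, so $\mmd(p,q)^2 > 0$ and therefore $\mu_p \neq \mu_q$; thus $p\mapsto\mu_p$ is injective, i.e.\ $\kappa$ is characteristic. Combined with the symmetry, nonnegativity, and triangle inequality inherited from the Hilbert-space norm $\|\cdot\|_{\mathcal{H}}$, this upgrades $\mmd$ from a pseudometric to a bona fide metric on $\probset{\mathcal{Y}}$. The main obstacle is not the concluding logic but the bookkeeping of the reduction: one must verify that the single-coordinate terms genuinely cancel (this hinges on $p-q$ having total mass zero, and would fail for signed measures of nonzero mass), and that positive-definiteness of $\kappa$ is legitimately in hand rather than assumed — the latter being the only point at which nontrivial external theory (Schoenberg) is invoked.
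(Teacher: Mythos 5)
Your proof is correct. The paper does not prove this statement itself---it imports it directly from \cite[Proposition 29]{sejdinovicEquivalenceDistancebasedRKHSbased2013}---and your argument (Schoenberg for positive-definiteness of the induced kernel, then the cancellation of the single-coordinate terms against the mass-zero measure $p-q$ to get $\mmd^2(p,q)=-\tfrac12\int\rho\,\mathrm{d}([p-q]\times[p-q])$, then strong negative type for injectivity of the mean embedding) is essentially the standard proof given in that reference.
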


\begin{remark}\label{rem:energy}
An important class of semimetrics are the functions $\rho_\alpha:\R^d\times\R^d\to\R_+$ given by $\rho_\alpha(y_1, y_2) = \|y_1 - y_2\|_2^\alpha$ for $\alpha\in(0,2)$. It is known that these semimetrics have strong negative type, and thus the kernels $\kappa_\alpha$ induced by $\rho_\alpha$ are characteristic \cite{szekelyEnergyStatisticsClass2013,sejdinovicEquivalenceDistancebasedRKHSbased2013}. The resulting metric $\mmd[\kappa_\alpha]$ is known as the \emph{energy distance}.
\end{remark}

\section{Multivariate Distributional Dynamic Programming}\label{s:mvddp}

To warm up, we begin by demonstrating that indeed the (multivariate) distributional Bellman operator is contractive in a supremal form $\supremal\mmd$ of MMD, given by $\supremal\mmd(\mvdistretsym_1, \mvdistretsym_2) = \sup_{x\in\statespace}\mmd(\mvdistretsym_1(x), \mvdistretsym_2(x))$, for a particular class of kernels $\kappa$. Our first theorem generalizes the analogous results  of \cite{nguyenDistributionalReinforcementLearning2020} in the scalar case to multivariate cumulants.
The proof of Theorem \ref{thm:dp:mmd}, as well as proofs of all remaining results, are deferred to Appendix \ref{app:proofs}.

\begin{restatable}[Convergent MMD dynamic programming for the
  multi-return distribution function]{theorem}{dpmmd}\label{thm:dp:mmd}
  Let $\kappa$ be a kernel induced by a semimetric $\rho$ on $\returnspace$ with strong negative type, satisfying%
  \begin{enumerate}
  \item \textbf{Shift-invariance}. For any $z\in\R^d$, $\rho(z + y_1, z +
    y_2)=\rho(y_1, y_2)$.
  \item \textbf{Homogeneity}. For any $\gamma\in [0,1)$, there exists
    $c>0$ for which $\rho(\gamma y_1, \gamma y_2) =
    \gamma^c\rho(y_1, y_2)$.
  \end{enumerate}

  Consider the sequence $\sequence{k}{\distret}$ given by
  $\mvdistretsym_{k+1} = \dbo\mvdistretsym_k$. Then
  $\mvdistretsym_k\to\mvdistret^\pi$ at a geometric rate of $\gamma^{c/2}$ in
  $\supremal{\mmd}$, as long as $\supremal{\mmd}(\mvdistretsym_0,
  \mvdistret^\pi)\leq C < \infty$.
\end{restatable}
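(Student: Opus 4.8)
The plan is to show that the distributional Bellman operator $\dbo$ is a $\gamma^{c/2}$-contraction in the supremal MMD metric $\supremal{\mmd}$, and then invoke Banach's fixed point theorem together with the fixed-point property $\dbo\mvdistret^\pi = \mvdistret^\pi$ to conclude geometric convergence. The crux is therefore a one-step contraction estimate: I would aim to prove that for any two return distribution functions $\mvdistretsym_1, \mvdistretsym_2$,
\[
\supremal{\mmd}(\dbo\mvdistretsym_1, \dbo\mvdistretsym_2) \leq \gamma^{c/2}\,\supremal{\mmd}(\mvdistretsym_1, \mvdistretsym_2).
\]
Since MMD is a Hilbert-space norm distance between mean embeddings, I expect the two structural hypotheses on $\rho$ to translate into clean behavior of MMD under the pushforward maps $\bootfn{y}$ and under averaging over $P^\pi$.

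First I would fix a state $x$ and unpack $(\dbo\mvdistretsym_i)(x) = \Expectation{X'\sim P^\pi(\cdot\mid x)}{\pushforward{(\bootfn{r(x)})}{\mvdistretsym_i(X')}}$, viewing each side as a mixture over $X'$ of pushed-forward distributions. The key lemma I would establish is a scaling identity for MMD under the two operations composing $\bootfn{y}(z) = y + \gamma z$: namely, that MMD induced by a kernel whose semimetric is shift-invariant and $c$-homogeneous satisfies $\mmd(\pushforward{\bootfn{y}}{\mu}, \pushforward{\bootfn{y}}{\nu}) = \gamma^{c/2}\mmd(\mu,\nu)$. Translation invariance of $\rho$ gives that adding the constant $y$ leaves $\mmd$ unchanged (the $y$-shift cancels in the double-difference that defines $\mmd$ via $\rho$, using the representation $\mmd^2(\mu,\nu) = -\int\rho\,\mathrm{d}([\mu-\nu]\times[\mu-\nu])$), while $c$-homogeneity of $\rho$ under scaling by $\gamma$ pulls out a factor $\gamma^c$ inside $\mmd^2$, hence $\gamma^{c/2}$ in $\mmd$. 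This is the step I expect to carry most of the weight, and the reason the exponent $c/2$ rather than $c$ appears.

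Next I would handle the averaging over $X'\sim P^\pi(\cdot\mid x)$ using the convexity (or rather, a Jensen-type subadditivity) of MMD in its arguments when viewed through the mean embedding. Writing $\mu_{\mvdistretsym_i(X')}$ for the mean embedding, the embedding of a mixture is the mixture of embeddings by linearity of the integral, so $\mmd(\dbo\mvdistretsym_1(x), \dbo\mvdistretsym_2(x)) = \bignorm{\Expectation{X'}{\mu_{\pushforward{\bootfn{r(x)}}{\mvdistretsym_1(X')}} - \mu_{\pushforward{\bootfn{r(x)}}{\mvdistretsym_2(X')}}}}_{\mathcal{H}}$; pushing the norm inside the expectation via the triangle inequality (Jensen for the convex map $\|\cdot\|_{\mathcal{H}}$) bounds this by $\Expectation{X'}{\mmd(\pushforward{\bootfn{r(x)}}{\mvdistretsym_1(X')}, \pushforward{\bootfn{r(x)}}{\mvdistretsym_2(X')})}$. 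Applying the scaling lemma to each term gives the bound $\gamma^{c/2}\Expectation{X'}{\mmd(\mvdistretsym_1(X'), \mvdistretsym_2(X'))} \leq \gamma^{c/2}\supremal{\mmd}(\mvdistretsym_1, \mvdistretsym_2)$. Taking the supremum over $x$ completes the contraction.

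Finally, the contraction plus completeness delivers convergence: iterating gives $\supremal{\mmd}(\mvdistretsym_k, \mvdistret^\pi) \leq \gamma^{ck/2}\,\supremal{\mmd}(\mvdistretsym_0, \mvdistret^\pi) \leq \gamma^{ck/2}C \to 0$, which is exactly the claimed geometric rate $\gamma^{c/2}$. The one point requiring care, and the likely main obstacle, is justifying that $\supremal{\mmd}$ is a genuine metric on the relevant space and that the mean-embedding manipulations (linearity under mixtures, and the double-difference formula relating $\mmd^2$ to $\rho$) are valid here—this is where I would lean on Theorem~\ref{thm:kernel-semimetric} to guarantee $\kappa$ is characteristic, and on the compactness/boundedness assumptions to ensure all embeddings lie in $\mathcal{H}$ and all integrals are finite, so that the finiteness condition $\supremal{\mmd}(\mvdistretsym_0, \mvdistret^\pi)\leq C$ is the only extra ingredient needed.
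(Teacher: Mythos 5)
Your proposal is correct and follows essentially the same route as the paper: the same two ingredients (a Jensen-type mixture bound via linearity of mean embeddings, and a $\gamma^{c/2}$ scaling estimate for pushforwards under $\bootfn{y}$ derived from the shift-invariance and $c$-homogeneity of $\rho$ through the energy-distance representation of $\mmd^2$), combined into a one-step contraction and closed out with the fixed-point property of $\mvdistret^\pi$. The paper packages these two steps as Lemmas~\ref{lem:mmd-contraction:convexity} and~\ref{lem:mmd-contraction:contraction}, but the substance is identical to what you describe.
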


Notably, the energy distance kernels $\kappa_\alpha$ satisfy the conditions of Theorem \ref{thm:dp:mmd}, and $\rho_\alpha(\gamma y_1, \gamma y_2)\leq\gamma^\alpha\rho(y_1, y_2)$ by the homogeneity of the Euclidean norm,
so $\dbo$ is a $\gamma^{\alpha/2}$-contraction in the energy distances.
This generalizes the analogous result of \cite{nguyenDistributionalReinforcementLearning2020} in the one-dimensional case.

While Theorem \ref{thm:dp:mmd} illustrates a method for approximating $\mvdistretsym^\pi$ in MMD, it leaves a lot to be desired. Firstly, even in tabular MDPs, just as in the case of scalar distributional RL, return distribution functions have infinitely many degrees of freedom, precluding a tractable exact representation. As such, it will be necessary to study approximate, finite parameterizations of the return distribution functions, requiring more careful convergence analysis. Moreover, in RL it is generally assumed that the transition kernel and reward function are not known analytically---we only have access to sampled state transitions and cumulants. Thus, $\dbo$ cannot be represented or computed exactly, and instead we must study algorithms for approximating $\mvdistretsym^\pi$ from samples. We provide algorithms for resolving both of these concerns---the former in Section \ref{s:dp:proj} and the latter in Section \ref{s:td}---where we illustrate the difficulties that arise once the cumulant dimension exceeds unity.

\section{Particle-Based Multivariate Distributional Dynamic Programming}\label{s:dp:ewp}
Our first algorithmic contribution is inspired by the empirically successful \emph{equally-weighted particle} (EWP) representations of multivariate return distributions employed by \cite{zhangDistributionalReinforcementLearning2021}.

\textbf{Temporal-difference learning with EWP representations.} 
EWP representations, expressed by the class $\ewpcls$, are defined by
\begin{align}\label{eq:eqp-rep}
    \ewpcls = (\statelessewpcls)^{\statespace},\qquad \statelessewpcls = \left\{\frac{1}{m}\sum_{i=1}^m\delta_{\ewpparticle_i}\,:\,
    \ewpparticle_i\in\R^d\right\}.
\end{align}
For simplicity, we consider the case here where at each state $x$, the multi-return distribution is
approximated by $N(x)=m$ atoms. We can represent $\mvdistretsym\in\ewpcls$ by $\mvdistretsym(x)=\frac{1}{m}\sum_{i=1}^m\delta_{\theta_i(x)}$ for $\theta_i:\statespace\to\R^d$.
The work of \cite{zhangDistributionalReinforcementLearning2021} introduced a TD-learning algorithm for learning a $\ewpcls$ representation of $\mvdistretsym^\pi$, computing iterates of the particles $(\theta^{(k)}_i)_{i=1}^m$ according to
\begin{equation}\label{eq:ewp-zhang}
\theta^{(k+1)}_i(x) = \theta^{(k)}_i(x) -
\lambda_k\nabla_{\theta_i(x)}\mmd^2\left(\frac{1}{m}\sum_{i=j}^m\delta_{\theta^{(k)}_j(x)}, \frac{1}{m}\sum_{j=1}^m\delta_{r(x) + \gamma\overline{\theta}^{(k)}_j(X')}\right)
\end{equation}
for step sizes $(\lambda_k)_{k\geq 0}$ and sampled next states $X'\sim P^\pi(\cdot\mid x)$, where $\overline{\theta} = \texttt{stop-gradient}(\theta^{(k)})$ is a copy of $\theta^{(k)}$ that does not propagate gradients. 
Despite the empirical success of this method in combination with deep learning, no convergence analysis has been established, owing to the nonconvexity of the MMD objective with respect to the particle locations. In this section we aim to understand to what extent analysis is possible for dynamic programming and temporal-difference learning algorithms based on the EWP representations in Equation~\eqref{eq:eqp-rep}.

\textbf{Dynamic programming with EWP representations.} 
As is often the case in approximate distributional dynamic programming \cite{rowlandAnalysisCategoricalDistributional2018,rowlandAnalysisQuantileTemporalDifference2023}, we have
$\dbo\ewpcls\not\subset\ewpcls$; in words, the distributional Bellman operator does not map EWP representations to themselves.
Concretely, as long as there exists a state $x$ at which the
support of $P^\pi(\cdot\mid x)$ is not a singleton, $(\dbo\mvdistretsym)(x)$ will consist of more
than $m$ atoms even when $\mvdistretsym\in\ewpcls$; and secondly, if $P(\cdot\mid x)$ is not
uniform, $(\dbo\mvdistretsym)(x)$ will not consist of equally-weighted particles.

Consequently, to obtain a DP algorithm over EWP representations, we must consider a \emph{projected} operator of the form $\Pi_{\ewptag}\dbo$, for a projection
$\Pi_{\ewptag}:\mvdistretspace\to\ewpcls$. A natural choice for this projection is the operator
that minimizes the MMD of each multi-return distribution in $\ewpcls$,
\begin{equation}\label{eq:ewpproj}
(\ewpproj{m}\mvdistretsym)(x) \in \argmin_{p\in\statelessewpcls}\mmd(p, \mvdistretsym(x)).
\end{equation}
Unfortunately, even in the scalar-reward ($d=1$) case, the operator $\ewpproj{m}$ is problematic;
$(\ewpproj{m}\mvdistretsym)(x)$ is not uniquely defined, and $\ewpproj{m}$ is not a non-expansion
in $\supremal\mmd$ \cite{lheritierCramErDistance2022,rowlandAnalysisQuantileTemporalDifference2023}.
These pathologies present
significant complications when analyzing even the convergence of dynamic programming routines for
learning an EWP representation of the multi-return distribution --- in particular, it is not even
clear that $\ewpproj{m}\dbo$ has a fixed point (let alone a unique one). Another complication
arises due to the computational difficulty of computing the projection \eqref{eq:ewpproj}: even
in the case where $\mvdistretsym(x)$ has finite support for each state $x$, the projection
$(\ewpproj{m}\distretsym)(x)$ is very similar to clustering, which can be intractable
to compute exactly for large $m$ \cite{shenmaierComplexityGeometricMedian2021}. Thus, the argmin projection in Equation~\eqref{eq:ewpproj} cannot be used directly to obtain a tractable DP algorithm.

\textbf{Randomised dynamic programming.} 
Towards this end, we introduce a tractable \emph{randomized dynamic programming} algorithm for the EWP representation, by using a randomized proxy $\ewprandbellman{m}$ for $\Pi_{\kappa,m}\dbo$, that produces accurate return distribution estimates with high probability.
The method produces the following iterates,
\begin{equation}\label{eq:ewpranddp}
\mvdistretsym_{k+1}(x)
= \ewprandbellman{m}\mvdistretsym_k(x)
:= \frac{1}{m}\sum_{i=1}^m\delta_{r(x) + \gamma Z_i},\qquad Z_i\sim \mvdistretsym_k(X_i),\ X_i\overset{\rm iid}{\sim} \; P^\pi(\cdot\mid x)
\end{equation}
A similar algorithm for categorical representations was discussed in concurrent work \cite{lee2024off} without convergence analysis.

The intuition is that, particularly for large $m$, the Monte Carlo error associated with the sample-based approximation to $(\mathcal{T}^\pi \eta)(x)$ is small, and we can therefore expect the DP process, though randomised, to be accurate with high probability. This is summarised by a key theoretical result of this section; our proof of this result in the appendix provides a general approach to proving convergence for algorithms using arbitrary accurate approximations to \eqref{eq:ewpproj} that we expect to be useful in future work.

\begin{restatable}{theorem}{ewpmmddpconvergence}\label{thm:ewpmmddpconvergence}
Consider a kernel $\kappa$ induced by the semimetric $\rho(x, y)=\|x-y\|_2^\alpha$ with $\alpha\in(0,2)$,
and suppose rewards are bounded in each dimension within $[0,R_{\max}]$.
For any $\mvdistretsym_0$ such that $\supremal\mmd(\mvdistretsym_0, \mvdistret^\pi)\leq D < \infty$, and any $\delta>0$, for the sequence $(\eta_k)_{k \geq 0}$ defined in Equation~\eqref{eq:ewpranddp}, with probability at least $1-\delta$ we have
\begin{align*}
    \supremal\mmd(\mvdistretsym_K, \mvdistret^\pi)\in
    \widetilde{O}\left(\frac{d^{\alpha/2}R_{\max}^\alpha}{(1-\gamma^{\alpha/2})(1-\gamma)^{\alpha}\sqrt{m}}\log\left(\frac{|\statespace|\delta^{-1}}{\log\gamma^{-\alpha}}\right)\right).
\end{align*}
where $\mvdistretsym_{k+1}=\ewprandbellman{m}\mvdistretsym_k$ and $K = \lceil\frac{\log m}{\log\gamma^{-\alpha}}\rceil$, and where $\widetilde{O}$ omits logarithmic factors in $m$.
\end{restatable}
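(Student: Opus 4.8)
The plan is to regard $\ewprandbellman{m}$ as a one-step-noisy surrogate for $\dbo$ and to propagate the sampling error through the contraction of Theorem~\ref{thm:dp:mmd}. The enabling observation is that the randomised step never approximates the ill-posed projection $\ewpproj{m}$ at all: conditioned on the $\sigma$-algebra $\mathcal{F}_k$ generated by the first $k$ iterations, the atoms $r(x)+\gamma Z_i$ of $(\ewprandbellman{m}\mvdistretsym_k)(x)$ are $m$ conditionally i.i.d.\ draws from the $\mathcal F_k$-measurable target $(\dbo\mvdistretsym_k)(x)$, since $X_i\sim P^\pi(\cdot\mid x)$ and $Z_i\sim\mvdistretsym_k(X_i)$ give $r(x)+\gamma Z_i\sim\Expectation{X'\sim P^\pi(\cdot\mid x)}{\pushforward{(\bootfn{r(x)})}{\mvdistretsym_k(X')}}=(\dbo\mvdistretsym_k)(x)$. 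Thus $(\ewprandbellman{m}\mvdistretsym_k)(x)$ is exactly the empirical measure of $m$ samples from $(\dbo\mvdistretsym_k)(x)$.

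Writing $D=\supremal\mmd(\mvdistretsym_0,\mvdistret^\pi)$ and $\epsilon_k=\supremal\mmd(\ewprandbellman{m}\mvdistretsym_k,\dbo\mvdistretsym_k)$, I would combine the triangle inequality for the metric $\mmd$ with $\dbo\mvdistret^\pi=\mvdistret^\pi$ and the $\gamma^{\alpha/2}$-contraction of $\dbo$ (Theorem~\ref{thm:dp:mmd}, applied to the energy kernel, which is shift-invariant and homogeneous with $c=\alpha$) to get $\supremal\mmd(\mvdistretsym_{k+1},\mvdistret^\pi)\le\gamma^{\alpha/2}\supremal\mmd(\mvdistretsym_k,\mvdistret^\pi)+\epsilon_k$, and unroll:
\[
\supremal\mmd(\mvdistretsym_K,\mvdistret^\pi)\ \le\ \gamma^{\alpha K/2}D+\sum_{k=0}^{K-1}\gamma^{\alpha(K-1-k)/2}\epsilon_k\ \le\ \gamma^{\alpha K/2}D+\frac{\max_{0\le k<K}\epsilon_k}{1-\gamma^{\alpha/2}}.
\]
The choice $K=\lceil\log m/\log\gamma^{-\alpha}\rceil$ forces $\gamma^{\alpha K/2}\le m^{-1/2}$, so the bias term $\gamma^{\alpha K/2}D$ is $O(D/\sqrt m)$ and is absorbed, reducing the problem to a uniform high-probability bound on $\max_k\epsilon_k$.

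To bound a single $\epsilon_k$ I would use two facts. First, the support is invariant: since $z\mapsto r(x)+\gamma z$ maps $[0,R_{\max}/(1-\gamma)]^d$ into itself, if $\mvdistretsym_0$ is supported in this box then so are all iterates, and the energy kernel $\kappa(y,y')=\frac{1}{2}(\|y-y_0\|_2^\alpha+\|y'-y_0\|_2^\alpha-\|y-y'\|_2^\alpha)$ is bounded there at the scale of the $\rho$-diameter $\Delta:=(\sqrt d\,R_{\max}/(1-\gamma))^\alpha=d^{\alpha/2}R_{\max}^\alpha/(1-\gamma)^\alpha$. Second, conditionally on $\mathcal F_k$ the quantity $\mmd(\ewprandbellman{m}\mvdistretsym_k(x),\dbo\mvdistretsym_k(x))=\|\hat\mu_m-\mu_{(\dbo\mvdistretsym_k)(x)}\|_{\mathcal H}$ is a bounded-difference function of the $m$ atoms whose mean obeys $\mathbb{E}\|\hat\mu_m-\mu\|_{\mathcal H}^2\le m^{-1}\sup_y\kappa(y,y)$; McDiarmid's inequality then gives, conditionally and with probability at least $1-\delta'$, a deviation of order $m^{-1/2}$ whose scale is governed by $\Delta$, plus the usual $\sqrt{\log(1/\delta')}$ factor. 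Crucially this conditional statement holds for the random, $\mathcal F_k$-measurable target, which is exactly what the adaptivity of the scheme demands.

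A union bound over the $|\statespace|$ states and the $K$ iterations, taking $\delta'=\delta/(|\statespace|K)$, makes the per-step bound hold simultaneously and contributes $\log(|\statespace|K/\delta)$; since $\log K=\log\log m+\cdots$ is swallowed by $\widetilde{O}$, this leaves the advertised $\log(|\statespace|\delta^{-1}/\log\gamma^{-\alpha})$, and feeding $\max_k\epsilon_k$ back through the geometric sum (the $1/(1-\gamma^{\alpha/2})$ factor) and tracking the $\Delta$-dependence yields $\widetilde{O}\big(\frac{d^{\alpha/2}R_{\max}^\alpha}{(1-\gamma^{\alpha/2})(1-\gamma)^\alpha\sqrt m}\log(|\statespace|\delta^{-1}/\log\gamma^{-\alpha})\big)$. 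The main obstacle is precisely this adaptivity: each target $(\dbo\mvdistretsym_k)(x)$ is itself random and built from all earlier sampling, so the per-step errors are neither independent nor centred at a fixed law and no off-the-shelf empirical-MMD bound applies directly; the resolution is to condition on $\mathcal F_k$ and to verify that the support --- hence the single kernel-scale $\Delta$ controlling every step's concentration --- stays fixed throughout, so that one uniform constant governs the whole geometric sum. Isolating this as a reusable ``accurate-approximation'' propagation statement (valid for any sufficiently accurate proxy for $\ewpproj{m}\dbo$, not just the sampling proxy) is the part that requires genuine care.
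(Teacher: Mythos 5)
Your proposal is correct and follows essentially the same route as the paper: the error-propagation recursion you unroll is exactly Proposition~\ref{prop:dpewp}, the choice of $K$ and the union bound over $|\statespace|\times K$ events with $\delta'=\delta/(|\statespace|K)$ match the paper's proof, and your McDiarmid-plus-expected-squared-MMD concentration for the empirical mean embedding is precisely the content of the bound the paper imports from \cite[Proposition A.1]{tolstikhin2017minimax}. Your explicit treatment of the adaptivity issue (conditioning on $\mathcal{F}_k$ so the target $(\dbo\mvdistretsym_k)(x)$ is measurable and the atoms are conditionally i.i.d.) is a welcome clarification of a point the paper states more tersely, but it is not a different argument.
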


This shows that our novel randomised DP algorithm with EWP representations can tractably compute accurate approximations to the true multivariate return distributions, with only polynomial dependence on the dimension $d$.
Appendix \ref{app:ewp-blowup} illustrates explicitly how this procedure is more memory efficient than unprojected EWP dynamic programming.
However, the guarantees associated with this algorithm hold only in high probability, and are weaker than the pointwise convergence guarantees of one-dimensional distributional DP algorithms \cite{rowlandAnalysisCategoricalDistributional2018,rowlandAnalysisQuantileTemporalDifference2023, bellemareDistributionalReinforcementLearning2023}.
Consequently, these guarantees do not provide a clear understanding of the EWP-TD method described at the beginning of this section.
However, in the sequel, we introduce DP and TD algorithms based on \emph{categorical representations}, for which we derive dynamic programming and TD-learning convergence bounds.

The proof of Theorem \ref{thm:ewpmmddpconvergence} is hinges on the following proposition, which demonstrates that convergence of projected EWP dynamic programming is controlled by how far return distributions are transported under the projection map.

\begin{restatable}[Convergence of EWP Dynamic Programming]{proposition}{dpewp}\label{prop:dpewp}
Consider a kernel satisfying the hypotheses of Theorem 
\ref{thm:dp:mmd}, suppose rewards are globally bounded in each dimension in $[0,R_{\max}]$, and let $\{\Pi^{(k)}_{\kappa,m}\}_{k\geq 0}$ be a sequence of maps $\Pi: \probset{\returnspace}^{\statespace}\to\ewpcls$ satisfying
\begin{equation}\label{eq:ewp:approximate-projection}
\mmd((\Pi^{(k)}_{\kappa,m}\mvdistretsym)(x), \mvdistretsym(x))\leq f(d, m)<\infty\qquad\forall k\geq 0.
\end{equation}
Then the iterates $(\mvdistretsym_k)_{k\geq 0}$ given by 
$\mvdistretsym_{k+1}=\Pi^{(k)}_{\kappa,m}\dbo\mvdistretsym_k$ with $\supremal\mmd(\mvdistretsym_0, \mvdistret^\pi)=D<\infty$ converge to a set 
$\ewpfixedset{m}\subset\closedball(\mvdistret^\pi, (1-\gamma^{c/2})^{-1}f(d, m))$ in $\supremal\mmd$, where $\closedball$ denotes the closed ball in $\supremal\mmd$,
\begin{align*}
\closedball(\mvdistretsym, R)\triangleq\left\{\mvdistretsym'\in\mvdistretspace\,:\, \supremal\mmd(\mvdistretsym,\mvdistretsym')\leq R\right\}.
\end{align*}
\end{restatable}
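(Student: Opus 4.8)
The plan is to exploit the fact that, although the composed operator $\Pi^{(k)}_{\kappa,m}\dbo$ need not be a contraction (the projections may differ across iterations, and by the discussion preceding the proposition need not even individually be non-expansions), the underlying operator $\dbo$ is a $\gamma^{c/2}$-contraction in $\supremal\mmd$ with fixed point $\mvdistret^\pi$ by Theorem \ref{thm:dp:mmd}. Rather than seeking a fixed point of the composed map via Banach's theorem, I would directly track the distance $a_k := \supremal\mmd(\mvdistretsym_k, \mvdistret^\pi)$ of the iterates to the true return-distribution function, and show it is eventually trapped in a ball whose radius is the per-step projection error amplified by the geometric factor of the contraction. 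Two preliminary observations justify the estimates: since $\rho$ has strong negative type the induced kernel is characteristic (Theorem \ref{thm:kernel-semimetric}), so $\mmd$---and hence $\supremal\mmd$, as a supremum of metrics---obeys the triangle inequality; and boundedness of the rewards keeps every quantity finite.

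The core step is a one-step ``contraction-up-to-error'' recursion. Writing $\mvdistretsym_{k+1} = \Pi^{(k)}_{\kappa,m}\dbo\mvdistretsym_k$ and inserting $\dbo\mvdistretsym_k$ through the triangle inequality,
\begin{align*}
a_{k+1}
&= \supremal\mmd(\Pi^{(k)}_{\kappa,m}\dbo\mvdistretsym_k, \mvdistret^\pi) \\
&\leq \supremal\mmd(\Pi^{(k)}_{\kappa,m}\dbo\mvdistretsym_k, \dbo\mvdistretsym_k) + \supremal\mmd(\dbo\mvdistretsym_k, \mvdistret^\pi) \\
&\leq f(d,m) + \gamma^{c/2}\, a_k,
\end{align*}
where the first summand is bounded by the approximate-projection hypothesis \eqref{eq:ewp:approximate-projection} applied pointwise to $\dbo\mvdistretsym_k$ and taking the supremum over states, and the second uses the contraction of $\dbo$ together with $\dbo\mvdistret^\pi = \mvdistret^\pi$. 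Unrolling this affine recursion from $a_0 = D$ and summing the geometric series yields
\begin{align*}
a_k \leq (\gamma^{c/2})^k D + f(d,m)\sum_{j=0}^{k-1}(\gamma^{c/2})^j \leq (\gamma^{c/2})^k D + \frac{f(d,m)}{1-\gamma^{c/2}}.
\end{align*}
Since $\gamma^{c/2} < 1$, the first term vanishes as $k\to\infty$, so $\limsup_{k\to\infty} a_k \leq (1-\gamma^{c/2})^{-1}f(d,m)$. Taking $\ewpfixedset{m}$ to be the set of accumulation points of $(\mvdistretsym_k)_{k\geq 0}$ and using continuity of $\supremal\mmd$, every such point lies in $\closedball(\mvdistret^\pi, (1-\gamma^{c/2})^{-1}f(d,m))$, which is the claim.

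I do not expect a serious obstacle in the estimates, which are routine once the triangle inequality and the contraction of $\dbo$ are in hand; the conceptual subtlety lies in framing the conclusion. Because the projection maps are permitted to vary across iterations (and, in the application to Theorem \ref{thm:ewpmmddpconvergence}, are \emph{random}), there is in general no single fixed point to converge to, so the result must be stated as convergence to a set rather than to a point. The cleanest substantive content is the $\limsup$ bound above; identifying a concrete $\ewpfixedset{m}$ and asserting that the tail of the sequence localizes to the stated ball requires only that bound. Note that no compactness of $\mvdistretspace$ is needed for the containment claim, since the bound on $\limsup_k a_k$ already forces every accumulation point into $\closedball(\mvdistret^\pi, (1-\gamma^{c/2})^{-1}f(d,m))$.
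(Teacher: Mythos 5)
Your proposal is correct and follows essentially the same route as the paper's proof: the same triangle-inequality decomposition through $\dbo\mvdistretsym_k$, the same contraction-plus-error recursion $a_{k+1}\leq f(d,m)+\gamma^{c/2}a_k$, and the same geometric-series unrolling to bound $\limsup_k a_k$ by $(1-\gamma^{c/2})^{-1}f(d,m)$. The only cosmetic difference is that you phrase the conclusion via accumulation points while the paper phrases it as the distance from $\mvdistretsym_k$ to the closed ball tending to zero; these are equivalent readings of the stated claim.
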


As an immediate corollary of Proposition \ref{prop:dpewp} and Theorem \ref{thm:ewpmmddpconvergence}, we can derive an error rate for projected dynamic programming with $\ewpproj{m}$ as well.

\begin{restatable}{corollary}{dpewporacle}\label{cor:dpewporacle}
For any kernel $\kappa$ satisfying the hypotheses of Theorem \ref{thm:ewpmmddpconvergence}, and for any $\mvdistretsym_0\in\ewpcls$ for which $\supremal\mmd(\mvdistretsym_0, \mvdistret^\pi)\leq D < \infty$, the iterates $\mvdistretsym_{k+1} = \ewpproj{m}\dbo\mvdistretsym_k$ converge to a set $\ewpfixedset{m}\subset\ewpcls$, where
\begin{align*}
    \ewpfixedset{m} \subset \closedball\left(
    \mvdistret^\pi,
    \frac{6d^{\alpha/2}R^\alpha_{\max}}{(1 - \gamma^{\alpha/2})(1-\gamma)^\alpha\sqrt{m}}
    \right).
\end{align*}
\end{restatable}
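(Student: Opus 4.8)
The plan is to realize Corollary \ref{cor:dpewporacle} as a direct instance of Proposition \ref{prop:dpewp}, by exhibiting a finite, iteration-uniform per-step error $f(d,m)$ for the \emph{optimal} projection $\ewpproj{m}$ and then reading off the limiting ball. The key leverage is that $\ewpproj{m}$ selects, at each state $x$, the element of $\statelessewpcls$ \emph{minimizing} MMD to its argument; hence its per-step displacement is dominated by that of \emph{any} competing EWP approximation of the Bellman target $(\dbo\mvdistretsym)(x)$. The natural competitor is exactly the empirical measure underlying the randomised operator $\ewprandbellman{m}$ of Equation \eqref{eq:ewpranddp}, whose accuracy was controlled in the proof of Theorem \ref{thm:ewpmmddpconvergence}. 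Thus the whole argument reduces to transferring that control from the \emph{random} operator to the \emph{deterministic} argmin, which is why the corollary is ``immediate'' from the proposition and the theorem.

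Concretely, I would proceed in four steps. First, record that every Bellman target along the iteration has support in the box $[0,R_{\max}/(1-\gamma)]^d$, whose $\ell_2$-diameter is $\sqrt{d}\,R_{\max}/(1-\gamma)$; this is what makes a single $f(d,m)$ valid uniformly over both $x$ and $k$. Second, invoke optimality of the projection: for i.i.d.\ samples $Y_i\sim(\dbo\mvdistretsym)(x)$,
\[
\mmd\big((\ewpproj{m}\dbo\mvdistretsym)(x),\,(\dbo\mvdistretsym)(x)\big)
\;\le\; \min_{p\in\statelessewpcls}\mmd\big(p,(\dbo\mvdistretsym)(x)\big)
\;\le\; \mmd\Big(\tfrac{1}{m}\textstyle\sum_{i=1}^m\delta_{Y_i},\,(\dbo\mvdistretsym)(x)\Big).
\]
Third, a standard mean-embedding computation yields $\mathbb{E}\big[\mmd^2(\tfrac{1}{m}\sum_i\delta_{Y_i},(\dbo\mvdistretsym)(x))\big]=\tfrac{1}{2m}\mathbb{E}\|Y-Y'\|_2^\alpha$, which the diameter bounds by $\tfrac{1}{2m}\big(\sqrt{d}\,R_{\max}/(1-\gamma)\big)^\alpha$. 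Since the minimum over $\statelessewpcls$ is at most the expectation over this family of empirical measures, the bound holds \emph{deterministically} — no high-probability argument nor union bound over $(x,k)$ is required — so $\ewpproj{m}$ satisfies the approximate-projection hypothesis \eqref{eq:ewp:approximate-projection} with an explicit $f(d,m)=O(1/\sqrt{m})$ carrying the diameter-dependent constant. Fourth, substituting this $f(d,m)$ together with $c=\alpha$ into Proposition \ref{prop:dpewp} places $\ewpfixedset{m}$ inside $\closedball(\mvdistret^\pi,(1-\gamma^{\alpha/2})^{-1}f(d,m))$, which is the claimed ball once the universal factors are collected into the constant $6$.

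The main obstacle is precisely this passage from the probabilistic control of $\ewprandbellman{m}$ to a deterministic, iteration-uniform bound for the argmin, and it is what lets the logarithmic union-bound factors of Theorem \ref{thm:ewpmmddpconvergence} vanish in the corollary. Two points need care: (i) bounding $\min_{p\in\statelessewpcls}\mmd(p,\cdot)$ by the expectation over empirical measures is legitimate by the probabilistic method (the minimum is no larger than any realized value, hence no larger than the average), so optimality of $\ewpproj{m}$ upgrades the expectation bound to a worst-case one; and (ii) every target $\dbo\mvdistretsym_k$ encountered along $\mvdistretsym_{k+1}=\ewpproj{m}\dbo\mvdistretsym_k$ must retain support in the fixed box, so that the single diameter-based $f(d,m)$ applies at all $k$. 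For (ii) I would argue that the optimal EWP atoms can be taken inside the convex hull of the target's support — projecting atoms onto the box cannot increase the energy-distance MMD to an in-box target — which keeps supports bounded under iteration. The remaining constant-tracking is the routine bookkeeping already packaged in Proposition \ref{prop:dpewp}.
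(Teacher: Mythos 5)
Your skeleton is the same as the paper's: reduce to Proposition~\ref{prop:dpewp}, and obtain a per-step error bound for $\ewpproj{m}$ by the probabilistic method --- the argmin over $\statelessewpcls$ is dominated by any realization of the empirical measure $\frac{1}{m}\sum_i\delta_{Y_i}$ with $Y_i\sim(\dbo\mvdistretsym_k)(x)$ i.i.d., which is exactly the sample produced by $\ewprandbellman{m}$. Where you diverge is in how that empirical measure's accuracy is controlled. The paper simply reuses the high-probability concentration bound from the proof of Theorem~\ref{thm:ewpmmddpconvergence} with $\delta'$ fixed so the event has nonzero probability (e.g.\ $\delta'=e^{-1}$), which directly yields $f(d,m)=6d^{\alpha/2}R_{\max}^\alpha(1-\gamma)^{-\alpha}m^{-1/2}$. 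You instead compute $\mathbb{E}\bigl[\mmd^2\bigr]=\tfrac{1}{2m}\mathbb{E}\|Y-Y'\|_2^\alpha$ exactly and apply Jensen; that computation is correct and more elementary, but it gives $f(d,m)\le d^{\alpha/4}R_{\max}^{\alpha/2}(1-\gamma)^{-\alpha/2}(2m)^{-1/2}$, whose exponents on $d$, $R_{\max}$, and $(1-\gamma)^{-1}$ are half those in the stated radius. This mismatch cannot be ``collected into the constant $6$'': your bound implies the stated one only when the effective diameter $\sqrt{d}\,R_{\max}/(1-\gamma)$ is bounded below (roughly by $(6\sqrt{2})^{-2/\alpha}$), and is otherwise weaker, so the final sentence of your step four is false as written. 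The fix is trivial --- either state the corollary with your (generally tighter) radius, or substitute the same concentration inequality the theorem invokes --- but as proposed you do not land on the advertised bound.

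Your point (ii), that every target $\dbo\mvdistretsym_k$ along the argmin iteration must remain supported in a fixed bounded set for a single $f(d,m)$ to apply uniformly in $k$, identifies a genuine subtlety that the paper's own one-line proof silently assumes: the randomized iterates of Theorem~\ref{thm:ewpmmddpconvergence} stay in the box by construction, but the argmin iterates require a separate argument, since the bound at step $k$ presupposes boundedness of $\mvdistretsym_k$, which was itself produced by $\ewpproj{m}$. However, your proposed resolution --- that clipping atoms to the box cannot increase the energy-distance MMD to an in-box target --- is not obviously correct: moving an atom toward the box decreases the cross term $\mathbb{E}\rho(Y,Z)$, but it can also decrease the inter-atom term $\mathbb{E}\rho(Y,Y')$, which enters $\mmd^2$ with a negative sign, so the net effect on the MMD needs an actual proof. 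Flagging the issue is to your credit; the resolution as stated is incomplete.
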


\section[Categorical Multivariate Distributional Dynamic Programming]{Categorical Multivariate Distributional Dynamic\\ Programming}\label{s:dp:proj}
Our next contribution is the introduction of a convergent multivariate distributional dynamic programming algorithm based on a \emph{categorical} representation of return distribution functions, generalizing the algorithms and analysis of \cite{rowlandAnalysisCategoricalDistributional2018} to the multivariate setting.

\textbf{Categorical representations.} As outlined above, to model the multi-return distribution function in
practice, it is necessary to restrict each multi-return distribution
to a finitely-parameterized class. In this work, we take inspiration
from successful distributional RL algorithms
\cite{bellemareDistributionalPerspectiveReinforcement2017,
  rowlandAnalysisCategoricalDistributional2018} and employ a
\emph{categorical representation}. The work of \cite{wuDistributionalOfflinePolicy2023} proposed a categorical representation for multivariate DRL, but their categorical projection was not justified theoretically, and it required a particular choice of fixed support. We propose a novel categorical representation with a finite
(possibly state-dependent) support $\catsup(x) =
\indexedint{i}{N(x)}{\catsupi(x)}\subset\R^d$, that models the
multi-return distribution function $\mvdistretsym$ such that $\mvdistretsym(x)\in\simplex{\catsup(x)}$ for each $x\in\statespace$. The notation $\catsupi(x)_i$ simply refers to the $i$th support point at state $x$ specified by $\catsup$, and $\simplex{A}$ denotes the probability simplex on the finite set $A$.
We refer to the
mapping $\catsup$ as the \emph{support map}\footnote{In many
  applications, the most natural support map is constant across the
  state space.} and we denote the class of multi-return distribution
functions under the corresponding categorical representation as
$\catcls \triangleq \prod_{x\in\statespace}\simplex{\catsup(x)}$.

\textbf{Categorical projection.} Once again, the distributional Bellman operator is
not generally closed over $\catcls$, that is, $\dbo\catcls\not\subset\catcls$. As such, we cannot actually employ the procedure described in Theorem
\ref{thm:dp:mmd} -- rather, we need to project applications of $\dbo$
back onto $\catcls$. Roughly, we need an operator
$\Pi:\mvdistretspace\to\catcls$ for which $\Pi\rvert_{\catcls} = \identity$.
Given such an operator, as in the literature on categorical
distributional RL
\cite{bellemareDistributionalPerspectiveReinforcement2017,
  rowlandAnalysisCategoricalDistributional2018}, we will study the
convergence of iterates
$\mvdistretsym_{k+1}=\Pi\dbo\mvdistretsym_k$.

Projection
operators used in the scalar categorical distributional RL literature
are specific to distributions over $\R$, so we must introduce a new
projection. We propose a projection similar to \eqref{eq:ewpproj},
\begin{equation}
  \label{eq:catproj}
  (\catprojsup\mvdistretsym)(x)
  = \arginf_{p\in\simplex{\catsup(x)}}\mmd(p, \mvdistretsym(x)).
\end{equation}
We will now verify that  $\catprojsup$ is well-defined, and that it
satisfies the aforementioned properties.
\begin{restatable}{lemma}{catprojwelldefined}\label{lem:catproj:well-defined}
Let $\kappa$ be a kernel induced by a semimetric $\rho$ on $\returnspace$ with strong negative type (cf. Theorem \ref{thm:kernel-semimetric}).
  Then
  $\catprojsup$ is well-defined, $\range(\catprojsup)\subset\catcls$,
  and $\catprojsup\rvert_{\catcls} = \identity$.
\end{restatable}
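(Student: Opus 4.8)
The plan is to establish the three claims of Lemma~\ref{lem:catproj:well-defined} in the order they are stated, treating each state $x$ separately since the projection acts coordinatewise over $\statespace$. The key structural observation is that, via the mean-embedding perspective recalled in the Background, the map $p\mapsto\mmd(p,\mvdistretsym(x))=\|\mu_p-\mu_{\mvdistretsym(x)}\|_{\mathcal{H}}$ is the composition of the affine map $p\mapsto\mu_p$ with the Hilbert-space norm, and is therefore convex in $p$. Crucially, an element of $\simplex{\catsup(x)}$ is parametrized by a weight vector $w\in\R^{N(x)}$ on the fixed finite support $\catsup(x)=\{\catsupi_i(x)\}_{i=1}^{N(x)}$, so $\mu_p=\sum_i w_i\kappa(\cdot,\catsupi_i(x))$ depends linearly on $w$, and the objective $w\mapsto\|\sum_i w_i\kappa(\cdot,\catsupi_i(x))-\mu_{\mvdistretsym(x)}\|_{\mathcal{H}}^2$ is a convex quadratic in $w$ over the compact simplex.

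For well-definedness (existence of the $\arginf$), I would argue that $w\mapsto\mmd(p_w,\mvdistretsym(x))$ is continuous on the probability simplex $\Delta^{N(x)-1}$, which is compact, so the infimum is attained by the extreme value theorem. This gives $\range(\catprojsup)\subset\catcls$ immediately, since the minimizer is by construction an element of $\simplex{\catsup(x)}$ at each $x$, and the product structure of $\catcls$ means a coordinatewise minimizer lies in $\catcls$. The only subtlety in ``well-defined'' is whether we need \emph{uniqueness}; the statement only asserts range and idempotence, so I would read ``well-defined'' as ``the $\arginf$ is nonempty,'' but I would note that strict convexity (hence uniqueness) follows when the embeddings $\{\kappa(\cdot,\catsupi_i(x))\}_i$ are linearly independent in $\mathcal{H}$, which the characteristic property from Theorem~\ref{thm:kernel-semimetric} helps guarantee for distinct support points.

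For the idempotence claim $\catprojsup\rvert_{\catcls}=\identity$, the argument is that if $\mvdistretsym(x)\in\simplex{\catsup(x)}$ already, then $\mvdistretsym(x)$ is itself a feasible point of the minimization~\eqref{eq:catproj} achieving objective value $\mmd(\mvdistretsym(x),\mvdistretsym(x))=0$. Since $\kappa$ is characteristic (Theorem~\ref{thm:kernel-semimetric}), $\mmd$ is a genuine metric on $\probset{\returnspace}$, so the objective is strictly positive for any $p\neq\mvdistretsym(x)$ and zero only at $p=\mvdistretsym(x)$; hence the minimizer is unique and equals $\mvdistretsym(x)$. This is exactly where the strong-negative-type hypothesis (through Theorem~\ref{thm:kernel-semimetric}) is load-bearing: without the characteristic property, $\mmd$ could vanish off the diagonal and the restriction need not be the identity.

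The main obstacle I anticipate is handling uniqueness of the $\arginf$ cleanly enough to justify the phrase ``well-defined'' on the \emph{interior} of the minimization (as opposed to the restricted identity, where characteristicness already forces uniqueness). The objective is convex but only strictly convex modulo the affine-independence of the support embeddings, and the support map $\catsup$ is allowed to be arbitrary, so in degenerate configurations (e.g.\ repeated support points) the minimizing weight vector $w$ may be non-unique even though the minimizing \emph{measure} $p$ is unique. I would resolve this by arguing at the level of mean embeddings: the set $\{\mu_p:p\in\simplex{\catsup(x)}\}$ is a convex compact subset of the Hilbert space $\mathcal{H}$, and projection onto a closed convex set in Hilbert space yields a unique point $\mu_{p^\star}$; injectivity of $p\mapsto\mu_p$ (characteristicness) then lifts this to a unique minimizing measure $p^\star$, sidestepping any non-uniqueness in the weight parametrization.
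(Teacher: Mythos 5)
Your proposal is correct and follows essentially the same route as the paper's proof: existence via compactness of $\simplex{\catsup(x)}$ and continuity of $\mmd$, uniqueness via the convex quadratic structure of $p \mapsto \mmd^2(p,\mvdistretsym(x))$ in the weights (equivalently, Hilbert projection onto a closed convex set of mean embeddings), and idempotence from $\mmd$ separating points. If anything you are slightly more careful than the paper, which asserts the Gram matrix $K$ is positive definite directly from $\kappa$ being characteristic; your fallback to uniqueness of the projected \emph{embedding} plus injectivity of $p\mapsto\mu_p$ correctly covers the degenerate case of coincident support points.
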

It is worth noting that beyond simply ensuring the well-posedness of the projection $\catprojsup$,
Lemma \ref{lem:catproj:well-defined} also certifies an efficient algorithm for computing the projection --- namely, by solving the appropriate quadratic program (QP), as observed by \cite{song2008tailoring}. We demonstrate pseudocode for computing the projected Bellman operator $\catprojsup\dbo$ with a QP solver $\mathsf{QPSolve}$ in Algorithm \ref{alg:projected-cat}.

\begin{algorithm}[h]
\caption{Projected Categorical Dynamic Programming}\label{alg:projected-cat}
\begin{algorithmic}
\Require Support map $\catsup$, kernel $\kappa$, transition kernel $P^\pi$, reward function $r$, discount $\gamma$
\Require Return distribution function $\mvdistretsym\in\catcls$
\For{$x\in\statespace$}
    \State $(\dbo\mvdistretsym)_x \gets \sum_{x'\in\statespace}\sum_{\xi\in\catsup(x')}P^\pi(x'\mid x)\mvdistretsym_{x'}(\xi)\delta_{r(x) + \gamma \xi}$
    \State $K_{i,j}^x\gets \kappa(\xi_i, \xi_j)$ for each $(\xi_i, \xi_j)\in\catsup(x)^2$
    \State $q^x_j\gets \sum_{\xi\in\support{(\dbo\mvdistretsym)_x}}(\dbo\mvdistretsym)_x(\xi)\kappa(\xi_j, \xi)$ for each $\xi_j\in\catsup(x)$
    \State $p\gets\mathsf{QPSolve}\left(\min_{p\in\R^{|\catsup(x)|}}\left[p^\top K^xp - 2p^\top q\right]\ \text{subject to}\ p\succeq 0,\ \sum_ip_i=1\right)$
    \State $(\catprojsup\dbo\mvdistretsym)_x\gets \sum_{\xi_i\in\catsup(x)}p_i\delta_{\xi_i}$
\EndFor
\State\Return $\catprojsup\dbo\mvdistretsym$
\end{algorithmic}
\end{algorithm}

\begin{restatable}{lemma}{projorthogonal}\label{lem:proj:orthogonal}
  Under the conditions of Lemma ~\ref{lem:catproj:well-defined},
  $\catprojsup$ is a nonexpansion in $\supremal\mmd$. That is, for any
  $\mvdistretsym_1,\mvdistretsym_2\in\probset{\returnspace}^\statespace$, we have
  $
    \supremal\mmd(\catprojsup\mvdistretsym_1,
    \catprojsup\mvdistretsym_2)\leq\supremal\mmd(\mvdistretsym_1, \mvdistretsym_2).
  $
\end{restatable}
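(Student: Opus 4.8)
The plan is to reduce the supremal inequality to a pointwise (per-state) claim and then recognise the per-state projection as a metric projection onto a closed convex set in the RKHS $\mathcal{H}$, for which nonexpansiveness is a classical Hilbert-space fact. First I would fix a state $x\in\statespace$ and recall from the background that $\mmd(p,q)=\|\mu_p-\mu_q\|_{\mathcal{H}}$, where $\mu_p=\int_{\returnspace}\kappa(\cdot,y)\,p(\mathrm{d}y)$ is the mean embedding. Under this isometric identification the feasible set $\simplex{\catsup(x)}$ maps to
\[
C_x := \{\mu_p : p\in\simplex{\catsup(x)}\} = \mathrm{conv}\{\kappa(\cdot,\xi_i) : \xi_i\in\catsup(x)\}\subset\mathcal{H},
\]
since $p\mapsto\mu_p$ is affine. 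Being the convex hull of finitely many points of a Hilbert space, $C_x$ is convex, compact, and in particular closed.

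Next I would observe that minimising $\mmd(p,\mvdistretsym(x))$ over $p\in\simplex{\catsup(x)}$ is exactly minimising $\|\nu-\mu_{\mvdistretsym(x)}\|_{\mathcal{H}}$ over $\nu\in C_x$; that is, $\mu_{(\catprojsup\mvdistretsym)(x)}$ is the metric projection $P_{C_x}(\mu_{\mvdistretsym(x)})$ of the embedding of $\mvdistretsym(x)$ onto $C_x$. By Lemma~\ref{lem:catproj:well-defined} this nearest point exists and is unique (equivalently, strict convexity of $\|\cdot\|_{\mathcal{H}}$ forces a unique minimiser in $C_x$). I would then invoke the classical nonexpansion property of metric projections onto convex sets, which follows from the obtuse-angle variational inequalities $\langle \mu_{\mvdistretsym_j(x)}-P_{C_x}\mu_{\mvdistretsym_j(x)},\, \nu-P_{C_x}\mu_{\mvdistretsym_j(x)}\rangle\leq 0$ holding for every $\nu\in C_x$ and $j\in\{1,2\}$: taking $\nu=P_{C_x}\mu_{\mvdistretsym_{3-j}(x)}$ in each, adding the two inequalities, and applying Cauchy--Schwarz yields
\[
\bigl\|P_{C_x}(\mu_{\mvdistretsym_1(x)})-P_{C_x}(\mu_{\mvdistretsym_2(x)})\bigr\|_{\mathcal{H}}\leq \bigl\|\mu_{\mvdistretsym_1(x)}-\mu_{\mvdistretsym_2(x)}\bigr\|_{\mathcal{H}}.
\]

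Translating back through the mean-embedding isometry, the left side is $\mmd((\catprojsup\mvdistretsym_1)(x),(\catprojsup\mvdistretsym_2)(x))$ and the right side is $\mmd(\mvdistretsym_1(x),\mvdistretsym_2(x))$; taking the supremum over $x\in\statespace$ on both sides gives $\supremal\mmd(\catprojsup\mvdistretsym_1,\catprojsup\mvdistretsym_2)\leq\supremal\mmd(\mvdistretsym_1,\mvdistretsym_2)$, as claimed. The only point needing genuine care is the second step — confirming that the $\arginf$ over simplex weights $p$ coincides with the Hilbert projection onto $C_x$, and that it is the projected \emph{embedding} (rather than merely the weight vector, which may be non-unique when the $\kappa(\cdot,\xi_i)$ are dependent) that is well-defined. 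This is secured by strict convexity of the norm together with Lemma~\ref{lem:catproj:well-defined}; everything else is the standard Hilbert-space projection lemma, so I expect no substantive obstacle.
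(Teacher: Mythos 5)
Your proposal is correct and follows essentially the same route as the paper's proof: a per-state reduction, identification of $\catprojsup$ with the Hilbert metric projection onto the closed convex set of mean embeddings of $\simplex{\catsup(x)}$, the obtuse-angle variational inequalities combined via Cauchy--Schwarz, and a final supremum over states. The one minor (and beneficial) difference is that you derive closedness of the feasible set directly from compactness of the convex hull of finitely many points of $\mathcal{H}$, which is cleaner than the paper's own closedness argument.
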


\textbf{Categorical multivariate distributional dynamic programming.} As an immediate consequence of Lemma \ref{lem:proj:orthogonal}, it
follows that projected dynamic programming under the projection
$\catprojsup$ is convergent; this is because $\dbo$ is a contraction in $\supremal{\mmd}$ and $\catprojsup$ is a nonexpansion in $\supremal{\mmd}$, so the projected operator $\catprojsup\dbo$ is a contraction in $\supremal{\mmd}$; a standard invokation of the Banach fixed point theorem appealing to the completenes of $\supremal{\mmd}$ certifies that repeated application of $\catprojsup\dbo$ will result in convergence to a unique fixed point.

\begin{restatable}{corollary}{dpprojfixedpoint}\label{cor:dp:proj:fixed-point}
  Let $\kappa$ be a kernel satisfying
  the conditions of Theorem \ref{thm:dp:mmd}. Then for any
  $\mvdistretsym_0\in\catcls$, the iterates
  $\sequence{k}{\mvdistretsym}$ given by $\mvdistretsym_{k+1} =
  \catprojsup\dbo\mvdistretsym_k$ converge geometrically to a unique fixed point.
\end{restatable}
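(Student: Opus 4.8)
The plan is to verify the two hypotheses of Banach's fixed point theorem for the composite operator $\catprojsup\dbo$ acting on $(\catcls, \supremal\mmd)$: that it is a contraction, and that the ambient metric space is complete. Since the substantive work has already been done in Theorem \ref{thm:dp:mmd} and Lemma \ref{lem:proj:orthogonal}, the argument is largely a matter of assembly.

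First I would establish the contraction property by chaining the two results already proven. By Theorem \ref{thm:dp:mmd}, the kernel $\kappa$ (satisfying shift-invariance and homogeneity with constant $c$) makes $\dbo$ a $\gamma^{c/2}$-contraction in $\supremal\mmd$, so $\supremal\mmd(\dbo\mvdistretsym_1, \dbo\mvdistretsym_2)\leq\gamma^{c/2}\supremal\mmd(\mvdistretsym_1, \mvdistretsym_2)$ for all $\mvdistretsym_1,\mvdistretsym_2$. By Lemma \ref{lem:proj:orthogonal}, $\catprojsup$ is a nonexpansion in $\supremal\mmd$. Composing these bounds gives $\supremal\mmd(\catprojsup\dbo\mvdistretsym_1, \catprojsup\dbo\mvdistretsym_2)\leq\supremal\mmd(\dbo\mvdistretsym_1, \dbo\mvdistretsym_2)\leq\gamma^{c/2}\supremal\mmd(\mvdistretsym_1, \mvdistretsym_2)$, so $\catprojsup\dbo$ is a $\gamma^{c/2}$-contraction. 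Moreover $\range(\catprojsup)\subset\catcls$ by Lemma \ref{lem:catproj:well-defined}, so $\catprojsup\dbo$ maps $\catcls$ into itself and the iterates remain in $\catcls$.

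Next I would confirm completeness of $(\catcls, \supremal\mmd)$. Since $\kappa$ is characteristic (Theorem \ref{thm:kernel-semimetric}), $\mmd$ is a genuine metric and $\supremal\mmd$ inherits the metric axioms. For each fixed support $\catsup(x)$, the set $\simplex{\catsup(x)}$ is the image of the finite-dimensional probability simplex under the injective weight-to-mean-embedding map, hence compact and in particular closed under $\mmd$-limits; the product $\catcls$ is therefore closed in $(\mvdistretspace, \supremal\mmd)$. As a closed subset of the complete space $(\mvdistretspace, \supremal\mmd)$, it is itself complete. I expect this to be the only step requiring genuine care: verifying completeness of the supremal metric over a possibly infinite product reduces to checking that a $\supremal\mmd$-Cauchy sequence admits a limit whose per-state marginal at each $x$ still lies in $\simplex{\catsup(x)}$, which holds precisely because those per-state simplices are closed under $\mmd$-convergence.

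Finally, with a $\gamma^{c/2}$-contraction on a complete metric space in hand, Banach's fixed point theorem yields a unique fixed point $\mvdistretsym^\star\in\catcls$ together with the geometric estimate $\supremal\mmd(\mvdistretsym_k, \mvdistretsym^\star)\leq\gamma^{kc/2}\supremal\mmd(\mvdistretsym_0, \mvdistretsym^\star)$, which is finite since $\supremal\mmd(\mvdistretsym_0,\mvdistret^\pi)=D<\infty$ and the fixed point lies within a bounded $\supremal\mmd$-ball of $\mvdistret^\pi$ by Proposition \ref{prop:dpewp}. This establishes geometric convergence to a unique fixed point. The contraction estimate requires no further computation; the completeness verification is the only point needing attention.
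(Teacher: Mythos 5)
Your proposal is correct and follows essentially the same route as the paper: compose the $\gamma^{c/2}$-contraction of $\dbo$ (Theorem \ref{thm:dp:mmd}) with the nonexpansion of $\catprojsup$ (Lemma \ref{lem:proj:orthogonal}), verify completeness of $(\catcls,\supremal\mmd)$ via the mean-embedding into the RKHS $\mathcal{H}$, and invoke Banach. The only cosmetic differences are that the paper gets completeness directly from $\catcls$ being isometric to a product of closed convex subsets of the complete space $\mathcal{H}$ (so your detour through completeness of the ambient $\mvdistretspace$ and your appeal to Proposition \ref{prop:dpewp} for finiteness are unnecessary --- compactness of each embedded simplex already suffices, and boundedness of $\supremal\mmd$ on $\catcls$ follows from the kernel being continuous on a compact return space).
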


Beyond the result of Theorem \ref{thm:dp:mmd}, Corollary \ref{cor:dp:proj:fixed-point} illustrates an algorithm for estimating $\mvdistretsym^\pi$ in $\supremal\mmd$ provided knowledge of the transition kernel and the reward function, which is \emph{computationally tractable} in tabular MDPs. Indeed, the iterates $(\mvdistretsym_k)_{k\geq 0}$ all lie in $\catcls$, having finitely-many degrees of freedom. Algorithm \ref{alg:projected-cat} outlines a computationally tractable procedure for learning $\mvdistretcat^\pi$ in this setting.

We note that the work of \cite{wuDistributionalOfflinePolicy2023} provided an alternative multivariate categorical algorithm, which was not analyzed theoretically. Moreover, our method provides the additional ability to support state-dependent arbitrary support maps, while theirs requires support maps to be uniform grids.

\textbf{Accurate approximations.}
We now provide bounds showing that the fixed point $\mvdistretcat^\pi$ from Corollary \ref{cor:dp:proj:fixed-point} can be made arbitrarily accurate by increasing the number of atoms.

To derive a bound on the quality of the fixed point, we provide a reduction via partitioning the space of returns to the covering number of this space. \label{def:partcls}Proceeding, we define a class of partitions $\partcls[\catsup(x)]$, where each $P\in\partcls[\catsup(x)]$ satisfies

\begin{enumerate}
\item $|P| = N(x)$;
\item For any $\partsym_1, \partsym_2\in P$, either $\partsym_1\cap
  \partsym_2=\emptyset$ or $\partsym_1 = \partsym_2$;
\item $\cup_{\partsym\in P}\partsym = \probset{\returnspace}$;
\item Each element $\partsym_i\in P$ contains exactly one element $z_i\in\catsup(x)$.
\end{enumerate}

For any partition $P$, we define a notion of \emph{mesh size}
relative to
a kernel $\kappa$ induced by a semimetric $\rho$,
\begin{equation}
  \label{eq:mesh}
  \mesh(P; \kappa) = \max_{\partsym\in P}\sup_{y_1,
    y_2\in\partsym}\rho(y_1, y_2).
\end{equation}
The following result confirms that
$\mvdistretcat^\pi$ recovers $\mvdistretsym^\pi$
as the mesh decreases.

\begin{restatable}{theorem}{dpconsistent}\label{thm:dp:consistent}
  Let $\kappa$ be a kernel induced by a $c$-homogeneous and shift-invariant semimetric $\rho$ conforming
  to the conditions of Theorem \ref{thm:dp:mmd}.
  Then the fixed point
  $\mvdistretcat^\pi$ of
  $\catprojsup\dbo$ satisfies
  \begin{equation}
    \label{eq:dp:proj:approx}
    \supremal\mmd(\mvdistretcat^\pi, \mvdistret^\pi) \leq \frac{1}{1 -
      \gamma^{c/2}}\sup_{x\in\statespace}\inf_{P\in\partcls[\catsup(x)]}\sqrt{\mesh(P; \kappa)}.
  \end{equation}

\end{restatable}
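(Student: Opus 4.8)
The plan is to split the total error into a contraction term and a one-step projection error, and then to control the projection error at each state by transporting mass onto the categorical support according to a partition. For the first step, I would exploit that $\mvdistretcat^\pi$ is the fixed point of $\catprojsup\dbo$ while $\mvdistret^\pi$ is the fixed point of $\dbo$, so that $\catprojsup\mvdistret^\pi = \catprojsup\dbo\mvdistret^\pi$. A triangle inequality then gives
\[
\supremal\mmd(\mvdistretcat^\pi, \mvdistret^\pi) \leq \supremal\mmd(\catprojsup\dbo\mvdistretcat^\pi, \catprojsup\dbo\mvdistret^\pi) + \supremal\mmd(\catprojsup\mvdistret^\pi, \mvdistret^\pi),
\]
and invoking the $\gamma^{c/2}$-contractivity of $\catprojsup\dbo$ (Theorem \ref{thm:dp:mmd} together with the nonexpansiveness of $\catprojsup$ from Lemma \ref{lem:proj:orthogonal}) on the first summand and rearranging yields $\supremal\mmd(\mvdistretcat^\pi, \mvdistret^\pi) \leq (1-\gamma^{c/2})^{-1}\supremal\mmd(\catprojsup\mvdistret^\pi, \mvdistret^\pi)$. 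This reduces the theorem to bounding the single-step projection error $\supremal\mmd(\catprojsup\mvdistret^\pi, \mvdistret^\pi)$ by $\sup_x \inf_P \sqrt{\mesh(P;\kappa)}$.

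For the projection error, fix a state $x$, write $\nu = \mvdistret^\pi(x)$, and fix any admissible partition $P \in \partcls[\catsup(x)]$ with representative support points $z_i \in \partsym_i \cap \catsup(x)$. Because $\catprojsup$ minimizes MMD over $\simplex{\catsup(x)}$, it suffices to exhibit a single competitor; I would take $q = \sum_i \nu(\partsym_i)\delta_{z_i}$, i.e.\ the pushforward of $\nu$ under the map $T$ sending each cell $\partsym_i$ to its representative $z_i$. Then $\mu_q - \mu_\nu = \int (\kappa(\cdot, T(y)) - \kappa(\cdot, y))\,\mathrm{d}\nu(y)$, and the convexity of the Hilbert norm (Jensen) gives $\mmd(q, \nu) \leq \int \|\kappa(\cdot, T(y)) - \kappa(\cdot, y)\|_{\mathcal{H}}\,\mathrm{d}\nu(y)$. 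The crucial computation is that, for a kernel induced by a semimetric, $\|\kappa(\cdot, y_1) - \kappa(\cdot, y_2)\|_{\mathcal{H}}^2 = \rho(y_1, y_2)$: expanding the norm into $\kappa(y_1,y_1) - 2\kappa(y_1,y_2) + \kappa(y_2,y_2)$ and substituting the semimetric form of $\kappa$ from Theorem \ref{thm:kernel-semimetric}, the $\rho(\cdot, y_0)$ anchor terms cancel and leave exactly $\rho(y_1, y_2)$. Since $y$ and $T(y)$ lie in a common cell, $\rho(T(y), y) \leq \mesh(P;\kappa)$ pointwise, so $\mmd(q, \nu) \leq \sqrt{\mesh(P;\kappa)}$; taking the infimum over $P$ and the supremum over $x$ closes the argument.

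The step I expect to require the most care is the mean-embedding bound $\mmd(q, \nu) \leq \int \|\kappa(\cdot, T(y)) - \kappa(\cdot, y)\|_{\mathcal{H}}\,\mathrm{d}\nu(y)$ together with the identity $\|\kappa(\cdot, y_1) - \kappa(\cdot, y_2)\|_{\mathcal{H}}^2 = \rho(y_1,y_2)$. The former is a Bochner/Jensen inequality that must be justified in the RKHS (using that $\nu$ is supported on a compact set and $\kappa$ is continuous, so the embedding integral is well-defined), and it is precisely this pulling of the norm inside the integral that produces the square root of the mesh rather than the mesh itself; the latter identity is where the semimetric structure, rather than a generic characteristic kernel, is essential. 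Everything else is bookkeeping: the partition axioms guarantee $q \in \simplex{\catsup(x)}$ and that $T$ is well-defined, and the boundedness of the rewards ensures that all quantities involved are finite.
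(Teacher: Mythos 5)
Your proposal is correct and follows essentially the same route as the paper: the same fixed-point triangle inequality combined with the nonexpansiveness of $\catprojsup$ and the contractivity of $\dbo$ to reduce to the one-step projection error, and the same competitor $q=\sum_i\nu(\partsym_i)\delta_{z_i}$ bounded via the RKHS identity $\|\kappa(\cdot,y_1)-\kappa(\cdot,y_2)\|_{\mathcal{H}}^2=\rho(y_1,y_2)$ (which the paper cites as the isometry of \cite{sejdinovicEquivalenceDistancebasedRKHSbased2013} and you derive directly), exactly as in the paper's Lemma~\ref{lem:mmdprojapprox}. The only cosmetic difference is that you apply Jensen to the Bochner integral over all of $\returnspace$ at once rather than cell-by-cell.
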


Thus, for any sequence of supports $\{\catsup(x)_m\}_{m\geq 1}$ for 
which the maximal space (in $\rho$) between any two points in $\catsup(x)_m$ tends to $0$
as $m\to\infty$, the fixed point $\mvdistretcat^\pi$ approximates $\mvdistretsym^\pi$ to arbitrary precision for large enough $m$. The next corollary illustrates this in a familiar setting.

\begin{restatable}{corollary}{meshcovering}\label{cor:mesh:covering}
Let $\catsup(x)=U_m$, where $U_m$ is a set of $m$ uniformly-spaced support points on $[0,(1-\gamma)^{-1}R_{\max}]$.
For $\kappa$ induced by the semimetric $\rho(x, y)=\|x-y\|_2^\alpha$ for $\alpha\in(0,2)$, %
\begin{align*}
    \supremal\mmd(\mvdistretcat^\pi, \mvdistret^\pi)
    \leq
    \frac{1}{(1-\gamma^{\alpha/2})(1-\gamma)^{\alpha/2}}\frac{d^{\alpha/4}R^{\alpha/2}_{\max}}{(m^{1/d} - 2)^{\alpha/2}}.
\end{align*}
\end{restatable}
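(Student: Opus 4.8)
The plan is to specialise Theorem~\ref{thm:dp:consistent} and reduce the corollary to an explicit estimate of the mesh of a single, well-chosen partition of the grid $U_m$. First I would fix the homogeneity constant: for the energy semimetric $\rho(y_1,y_2)=\|y_1-y_2\|_2^\alpha$ we have $\rho(\gamma y_1,\gamma y_2)=\gamma^\alpha\rho(y_1,y_2)$, so $c=\alpha$ and the prefactor in~\eqref{eq:dp:proj:approx} becomes $\tfrac{1}{1-\gamma^{\alpha/2}}$. Since the support $U_m$ is identical at every state, the $\sup_{x\in\statespace}$ in Theorem~\ref{thm:dp:consistent} is vacuous, and it remains to bound $\inf_{P\in\partcls[U_m]}\sqrt{\mesh(P;\kappa)}$ from above by exhibiting one admissible partition.

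Next I would record that, because rewards lie in $[0,R_{\max}]^d$, every return lies in the cube $\returnspace=[0,(1-\gamma)^{-1}R_{\max}]^d$, and realise $U_m$ as an axis-aligned grid with $n:=\lfloor m^{1/d}\rfloor$ points along each coordinate, using $n^d\leq m$ atoms in total (thereby respecting the budget of $m$ support points) and per-axis spacing $\Delta=(1-\gamma)^{-1}R_{\max}/(n-1)$. The key construction is an explicit partition $P^\star\in\partcls[U_m]$: the axis-aligned Voronoi cells of this grid, which tile $\returnspace$ by $n^d$ boxes, one per grid point, with ties on shared faces broken arbitrarily so that the cells stay disjoint. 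I would then verify directly that $P^\star$ meets the four defining conditions of $\partcls[U_m]$ — namely $|P^\star|=n^d=N(x)$, pairwise disjointness, covering of $\returnspace$, and exactly one support point per cell.

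With this partition the mesh is controlled by the largest cell, which is an interior box of side $\Delta$ in each of the $d$ directions (the boundary cells are strictly smaller). Such a box has Euclidean diameter $\sqrt d\,\Delta$, so its $\rho$-diameter is $(\sqrt d\,\Delta)^\alpha=d^{\alpha/2}\Delta^\alpha$, giving $\mesh(P^\star;\kappa)\leq d^{\alpha/2}\Delta^\alpha$ and hence $\sqrt{\mesh(P^\star;\kappa)}\leq d^{\alpha/4}\Delta^{\alpha/2}$. Substituting $\Delta=(1-\gamma)^{-1}R_{\max}/(n-1)$ and using $n-1=\lfloor m^{1/d}\rfloor-1\geq m^{1/d}-2$ yields $\Delta^{\alpha/2}\leq (1-\gamma)^{-\alpha/2}R_{\max}^{\alpha/2}(m^{1/d}-2)^{-\alpha/2}$. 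Multiplying by the prefactor $\tfrac{1}{1-\gamma^{\alpha/2}}$ recovers exactly the claimed bound.

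The arithmetic steps — the diameter of a box and plugging into~\eqref{eq:dp:proj:approx} — are mechanical; the one place deserving care is the construction of $P^\star$ and the verification that it is admissible. The floor $n=\lfloor m^{1/d}\rfloor$, rather than $m^{1/d}$ itself, is precisely what converts the naive spacing $(1-\gamma)^{-1}R_{\max}/(m^{1/d}-1)$ into the stated $(1-\gamma)^{-1}R_{\max}/(m^{1/d}-2)$, and I must confirm both that the Voronoi tiling assigns exactly one grid point to each cell (so condition~4 holds) and that it still covers all of $\returnspace$ while matching the cell count $N(x)=n^d$. I expect this bookkeeping around the boundary cells and the integer per-axis count to be the only genuine subtlety; everything else follows immediately from Theorem~\ref{thm:dp:consistent}.
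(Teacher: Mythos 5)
Your proposal is correct and follows essentially the same route as the paper: specialise Theorem~\ref{thm:dp:consistent} with $c=\alpha$, exhibit an explicit axis-aligned partition of the grid into boxes of side $\Delta=(1-\gamma)^{-1}R_{\max}/(n-1)$ (the paper uses hypercubes directly where you use Voronoi cells, a cosmetic difference), bound the $\rho$-diameter of a cell by $d^{\alpha/2}\Delta^{\alpha}$, and convert $n=\lfloor m^{1/d}\rfloor$ into the stated $(m^{1/d}-2)^{-\alpha/2}$ denominator. Your explicit verification of the four partition-class conditions and the boundary-cell bookkeeping is slightly more careful than the paper's, but the argument is the same.
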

With $\alpha=1$ and $d=1$, the MMD in Corollary \ref{cor:mesh:covering} is equivalent to the Cram\'er metric \cite{szekelyEnergyStatisticsClass2013}, the setting in which categorical (scalar) distributional dynamic programming is well understood. Our rate matches the known $\Theta(m^{-1/2})$ rate shown by \cite{rowlandAnalysisCategoricalDistributional2018} in this setting.
Thus, our results offer a new perspective on categorical DRL, and naturally generalizes the theory to the multivariate setting.

Theorem \ref{thm:dp:consistent} relies on the following lemma about
the approximation quality of the categorical MMD projection, which may
be of independent interest.

\begin{restatable}{lemma}{mmdprojapprox}\label{lem:mmdprojapprox}
  Let $\kappa$ be kernel satisfying the conditions of Lemma
  \ref{lem:catproj:well-defined}, and for any finite
  $\catsup\subset\R^d$, define
  $\Pi:\probset{\R^d}\to\simplex{\catsup}$ via
  $\Pi p = \arginf_{q\in\simplex{\catsup}}\mmd(p, q)$.
  Then $\mmd^2(\Pi p, p) \leq \inf_{P\in\partcls[\catsup]}\mesh(P; \kappa)$.
\end{restatable}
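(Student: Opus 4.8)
The plan is to exploit the optimality defining $\Pi$: since $\Pi p$ minimises $\mmd(\cdot, p)$ over $\simplex{\catsup}$, we have $\mmd^2(\Pi p, p)\le \mmd^2(q, p)$ for \emph{any} $q\in\simplex{\catsup}$. It therefore suffices to exhibit, for each partition $P\in\partcls[\catsup]$, a single candidate $q_P\in\simplex{\catsup}$ satisfying $\mmd^2(q_P, p)\le\mesh(P;\kappa)$; the claimed bound on $\mmd^2(\Pi p, p)$ then follows by taking the infimum over $P$. Given $P=\{\partsym_i\}_{i=1}^{N}$ with its distinguished support points $z_i\in\partsym_i\cap\catsup$, the natural candidate is the \emph{hard-assignment} measure $q_P=\sum_i w_i\delta_{z_i}$ with $w_i=p(\partsym_i)$, i.e.\ the pushforward of $p$ under the map sending each return to the support point of its cell. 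Since the cells are disjoint and cover $\returnspace$, the weights sum to $1$ and $q_P\in\simplex{\catsup}$.

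The key step is a convexity argument in the RKHS. Writing $p_i$ for the conditional law $p(\cdot\mid\partsym_i)$, we have $p=\sum_i w_i p_i$ and $q_P=\sum_i w_i\delta_{z_i}$, so by linearity of the mean embedding $\mu_p-\mu_{q_P}=\sum_i w_i(\mu_{p_i}-\mu_{\delta_{z_i}})$. Because $\|\cdot\|_{\mathcal H}^2$ is convex and $(w_i)$ is a probability vector, Jensen's inequality yields
\[
\mmd^2(q_P, p)=\Bigl\|\sum_i w_i(\mu_{p_i}-\mu_{\delta_{z_i}})\Bigr\|_{\mathcal H}^2\le\sum_i w_i\,\mmd^2(p_i,\delta_{z_i}),
\]
which reduces the global discrepancy to a weighted average of per-cell discrepancies.

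Finally I would bound each per-cell term using the semimetric representation of MMD for kernels induced by $\rho$: substituting the defining formula for $\kappa$ from Theorem~\ref{thm:kernel-semimetric} and cancelling the anchor terms $\rho(\cdot,y_0)$ (which vanish against unit-mass measures) gives $\mmd^2(\mu,\nu)=\mathbb{E}_{Y\sim\mu,Z\sim\nu}\rho(Y,Z)-\frac{1}{2}\mathbb{E}_{Y,Y'\sim\mu}\rho(Y,Y')-\frac{1}{2}\mathbb{E}_{Z,Z'\sim\nu}\rho(Z,Z')$. Taking $\mu=p_i$ and $\nu=\delta_{z_i}$ kills the last term ($\rho(z_i,z_i)=0$) and, discarding the nonnegative middle term, leaves $\mmd^2(p_i,\delta_{z_i})\le\mathbb{E}_{Y\sim p_i}\rho(Y,z_i)$. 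Since $p_i$ is supported on $\partsym_i$ and $z_i\in\partsym_i$, each $\rho(Y,z_i)\le\sup_{y_1,y_2\in\partsym_i}\rho(y_1,y_2)\le\mesh(P;\kappa)$, so $\mmd^2(p_i,\delta_{z_i})\le\mesh(P;\kappa)$ for every $i$, whence $\mmd^2(q_P,p)\le\sum_i w_i\mesh(P;\kappa)=\mesh(P;\kappa)$.

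I expect the convexity step to be the main obstacle conceptually: it is what converts a single global projection problem into a cell-by-cell estimate controlled purely by the mesh, and its validity rests on $\mmd^2$ being a genuine squared Hilbert-space norm, which is guaranteed by $\rho$ having strong negative type (Theorem~\ref{thm:kernel-semimetric}). The remaining work is bookkeeping—verifying $q_P\in\simplex{\catsup}$, checking that the $\rho$-representation of $\mmd^2$ holds for the anchored kernel $\kappa$, and confirming that conditioning on cells is well-posed for the cells of positive $p$-mass (cells with $w_i=0$ contribute nothing).
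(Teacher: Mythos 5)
Your proposal is correct and follows essentially the same route as the paper's proof: both exploit the optimality of $\Pi p$ against the same hard-assignment candidate $q$ with weights $p(\partsym_i)$, and both reduce the bound to a per-cell estimate controlled by $\sup_{y_1,y_2\in\partsym}\rho(y_1,y_2)$. The only cosmetic difference is that you pass from the global embedding error to the cells via Jensen's inequality and the expected-$\rho$ expansion of $\mmd^2$, whereas the paper uses the triangle inequality in $\mathcal{H}$ together with the isometry $\|\kappa(\cdot,y_1)-\kappa(\cdot,y_2)\|_{\mathcal{H}}^2=\rho(y_1,y_2)$; these two devices are equivalent here.
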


At this stage, we have shown definitively that categorical dynamic programming converges in the multivariate case. In the sequel, we build on these results to provide a convergent multivariate categorical TD-learning algorithm.

\subsection{Simulation: The Distributional Successor Measure}\label{sec:simulation}
As a preliminary example, we consider 3-state MDPs with the cumulants $r(i) = (1-\gamma)e_i, i\in[3]$ for $e_i$ the $i$th basis vector. In this setting, $\mvdistret^\pi$ encodes the distribution over trajectory-wise discounted state occupancies, which was discussed in the recent work of \cite{wiltzer2024distributional} and called the \emph{distributional successor measure} (DSM). Particularly, \cite{wiltzer2024distributional} showed that $x\mapsto\law{G_x^\top\tilde{r}}$ for $G_x\sim\mvdistret^\pi(x)$ is the return distribution function for any scalar reward function $\tilde{r}$. Figure \ref{fig:dsr} shows that the projected categorical dynamic programming algorithm accurately approximates the distribution over discounted state occupancies as well as distributions over returns on held-out reward functions.

\begin{figure}
    \centering
    \begin{minipage}{0.49\textwidth}
        \begin{minipage}{0.565\textwidth}
            \includegraphics[width=\textwidth]{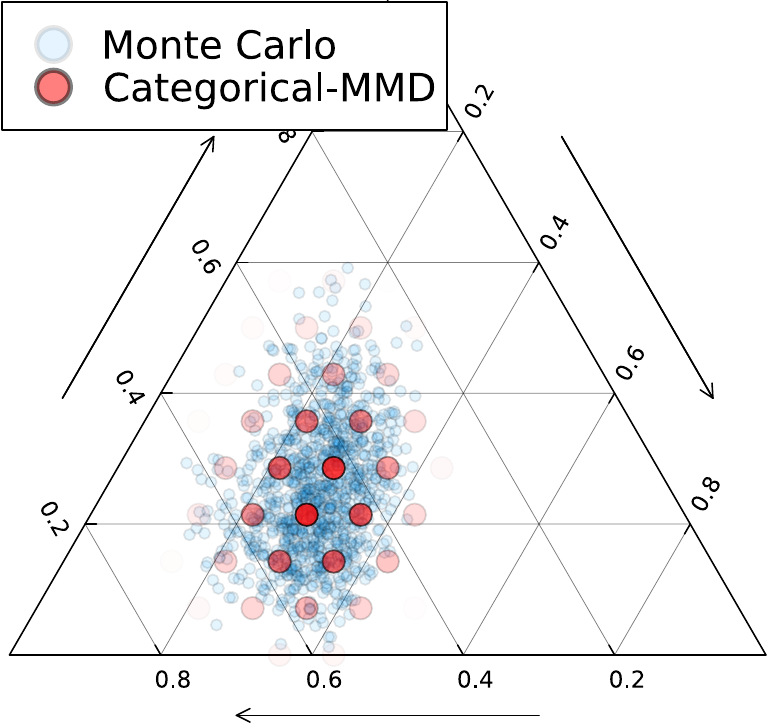}
        \end{minipage}
        \begin{minipage}{0.415\textwidth}
            \includegraphics[width=\textwidth]{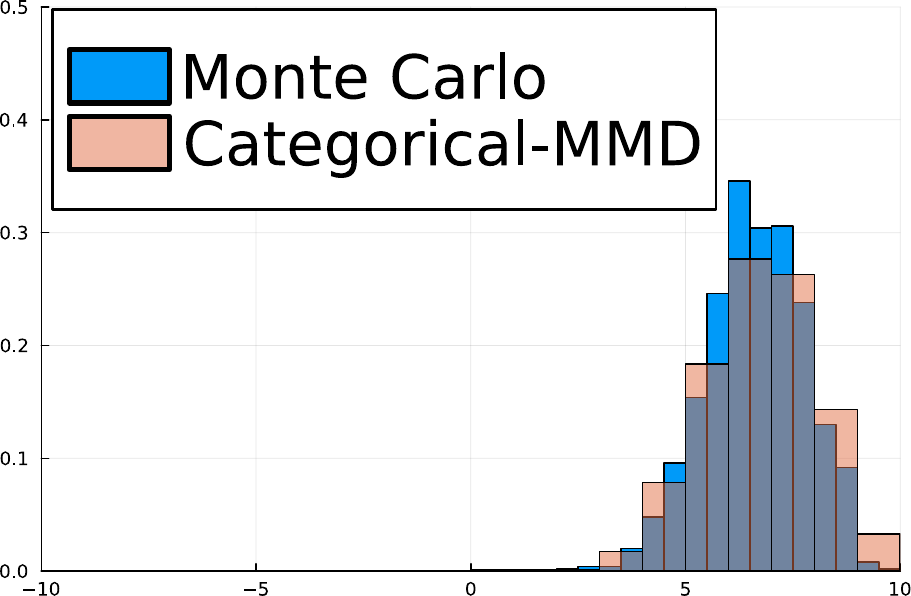}
            \includegraphics[width=\textwidth]{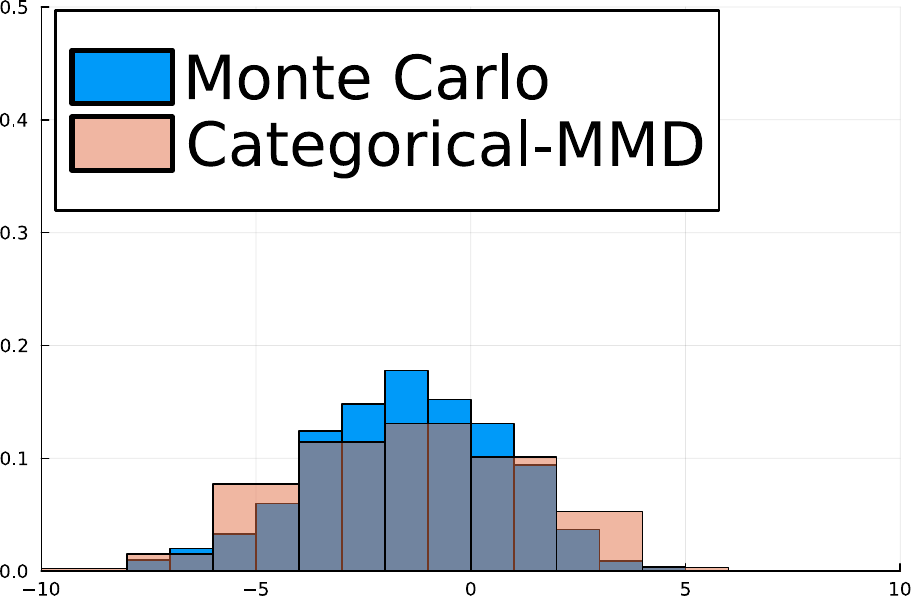}
        \end{minipage}
    \end{minipage}
    \hfill\vline\hfill
    \begin{minipage}{0.49\textwidth}
        \begin{minipage}{0.565\textwidth}
            \includegraphics[width=\textwidth]{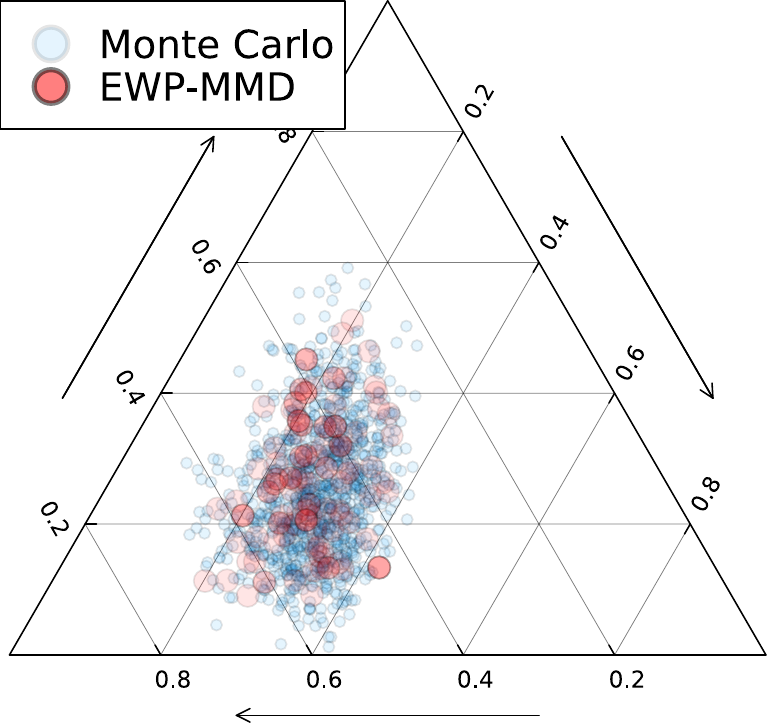}
        \end{minipage}
        \begin{minipage}{0.415\textwidth}
            \includegraphics[width=\textwidth]{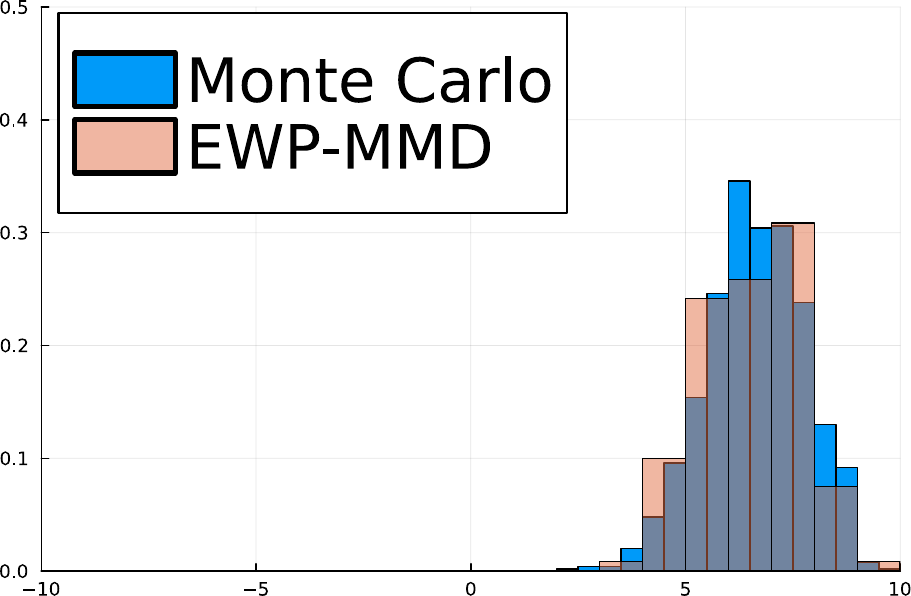}
            \includegraphics[width=\textwidth]{figures/dp/ewp-mmd-return-distribution-1-2.pdf}
        \end{minipage}
    \end{minipage}
    \caption{Distributional SMs and associated predicted return distributions with the categorical (left) and EWP (right) representations. Simplex plots denote the distributional SM. Histograms denote the associated return distributions, predicted from a pair of held-out reward functions.}
    \label{fig:dsr}
\end{figure}

\section{Multivariate Distributional TD-Learning}\label{s:td}
Next, we devise an algorithm for approximating the multi-return distribution function when the transition kernel and reward function are not known, and are observed only through samples.
Indeed, this is a strong motivation for TD-learning algorithms \cite{sutton1988learning}, wherein state transition data alone is used to solve the Bellman equation. In this section, we devise a TD-learning algorithm for multivariate DRL, leveraging our results on categorical dynamic programming in $\supremal\mmd$.

\textbf{Relaxation to signed measures.}
In the $d=1$ setting, the categorical projection presented above is known to be affine \cite{rowlandAnalysisCategoricalDistributional2018}, making scalar categorical TD-learning amenable to common techniques in stochastic approximation theory. However, the projection is \emph{not} affine when $d\geq 2$; we give an explicit example in Appendix \ref{app:no-affine}. Thus, we relax the categorical representation to include \emph{signed} measures, which will provide us with an affine projection \cite{bellemareDistributionalReinforcementLearning2019}---this is crucial for proving our main result, Theorem \ref{thm:cattdconvergence}.
We denote by $\smset{\mathcal{Y}}$ the set of all signed measures $\mu$ over $\mathcal{Y}$ with $\mu(\mathcal{Y})=1$. We begin by noting that the MMD endows $\smset{\mathcal{Y}}$ with a metric structure.

\begin{restatable}{lemma}{mmdsigned}\label{lem:mmd:signed}
Let $\kappa:\mathcal{Y}\times\mathcal{Y}\to\R$ be a characteristic kernel over some space $\mathcal{Y}$. Then $\mmd:\smset{\mathcal{Y}}\times\smset{\mathcal{Y}}\to\R_+$ given by $(p, q)\mapsto \|\mu_p - \mu_q\|_{\mathcal{H}}$ defines a metric on $\smset{\mathcal{Y}}$, where $\mu_p$ denotes the usual mean embedding of $p$ and $\mathcal{H}$ is the RKHS with kernel $\kappa$.
\end{restatable}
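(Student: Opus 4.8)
The plan is to exploit the fact that the mean embedding $p \mapsto \mu_p$ into the RKHS $\mathcal{H}$ is \emph{linear}, so that $\mmd(p,q) = \|\mu_p - \mu_q\|_{\mathcal{H}}$ is nothing more than the pullback of the Hilbert-space norm through this linear map. First I would record that $\mu_p = \int \kappa(\cdot, y)\, p(\mathrm{d}y)$ is well-defined as a Bochner integral in $\mathcal{H}$ for every $p \in \smset{\mathcal{Y}}$: since $\mathcal{Y}$ is compact and $\kappa$ continuous, $\|\kappa(\cdot, y)\|_{\mathcal{H}} = \sqrt{\kappa(y,y)}$ is bounded, and $p$ is a finite signed measure, so the integral converges and in particular $\mmd$ is finite on $\smset{\mathcal{Y}}$; the identity $\mu_{ap + bq} = a\mu_p + b\mu_q$ is immediate. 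Three of the four metric axioms then follow for free, as they hold for the pullback of any norm through any map: non-negativity and symmetry are clear, and the triangle inequality reads $\mmd(p,q) = \|\mu_p - \mu_q\|_{\mathcal{H}} \le \|\mu_p - \mu_r\|_{\mathcal{H}} + \|\mu_r - \mu_q\|_{\mathcal{H}} = \mmd(p,r) + \mmd(r,q)$.

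The only axiom requiring genuine work is the identity of indiscernibles, namely that $\mmd(p,q) = 0$ forces $p = q$ on $\smset{\mathcal{Y}}$. The subtlety---and the main obstacle---is that the hypothesis furnishes injectivity of $p \mapsto \mu_p$ only on the probability measures $\probset{\mathcal{Y}}$, whereas we must establish it on the strictly larger class of mass-$1$ signed measures. The key observation is that $\nu := p - q$ is a finite signed measure of total mass $\nu(\mathcal{Y}) = 1 - 1 = 0$, and that $\mmd(p,q) = 0$ is equivalent to $\mu_\nu = 0$. I would then pass through the Jordan decomposition $\nu = \nu_+ - \nu_-$, where $\nu_\pm$ are mutually singular finite nonnegative measures; the mass-$0$ condition forces $\nu_+(\mathcal{Y}) = \nu_-(\mathcal{Y}) =: c \ge 0$.

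To conclude, I would split on $c$. If $c = 0$ then $\nu_+ = \nu_- = 0$ and $\nu = 0$ trivially. If $c > 0$, normalizing yields probability measures $\bar p = \nu_+ / c$ and $\bar q = \nu_- / c$ in $\probset{\mathcal{Y}}$, and by linearity of the embedding $\mu_{\bar p} - \mu_{\bar q} = c^{-1}\mu_\nu = 0$, so $\mu_{\bar p} = \mu_{\bar q}$. Invoking that $\kappa$ is characteristic---injectivity on $\probset{\mathcal{Y}}$---gives $\bar p = \bar q$, hence $\nu_+ = \nu_-$ and $\nu = 0$, i.e.\ $p = q$. The Jordan-decomposition step is what resolves the obstacle: it realizes every mass-$0$ signed measure as a scalar multiple of a difference of two probability measures, thereby reducing separation on $\smset{\mathcal{Y}}$ back to the characteristic hypothesis supplied for $\probset{\mathcal{Y}}$.
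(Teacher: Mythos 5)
Your proof is correct and follows essentially the same route as the paper's: both reduce the separation axiom on mass-$1$ signed measures to the characteristic property on probability measures by decomposing signed measures into differences of nonnegative measures, normalizing, and invoking linearity of the mean embedding. The only cosmetic difference is that you apply the Jordan decomposition directly to $\nu = p - q$ (whose total mass is $0$), whereas the paper decomposes $p$ and $q$ separately and recombines the four pieces into two convex combinations of probability measures; your version is slightly more streamlined.
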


We define the relaxed projection $\smcatprojsup : \smset{\returnspace}^\statespace\to\prod_{x\in\statespace}\smset{\catsup(x)} =: \smcatcls$,
\begin{equation}\label{eq:catproj:sm}
\left(\smcatprojsup\mvdistretsym\right)(x) \in \arginf_{p\in\smset{\catsup(x)}}\mmd(p, \mvdistretsym(x)).
\end{equation}
Note that $\eqref{eq:catproj:sm}$ is very similar to the definition of the categorical MMD projection in \eqref{eq:catproj}---the only difference is that the optimization occurs over the larger class of signed mass-1 measures. It is also worth noting that the distributional Bellman operator can be applied directly to signed measures, which yields the following convenient result.

\begin{restatable}{lemma}{smprojlikecatproj}\label{lem:smproj-like-catproj}
Under the conditions of Corollary \ref{cor:dp:proj:fixed-point}, the projected operator $\smcatprojsup\dbo:\smcatcls\to\smcatcls$ is affine, is contractive with contraction factor $\gamma^{c/2}$, and has a unique fixed point $\mvdistretcatsm^\pi$.
\end{restatable}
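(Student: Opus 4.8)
The plan is to push everything into the RKHS $\mathcal{H}$ and reduce the three claims to linear-algebraic facts about mean embeddings, the key point being that relaxing to signed mass-$1$ measures turns the feasible set of the projection into an \emph{affine} subspace. Fix a state $x$ and write $\catsup(x)=\{\xi_1,\dots,\xi_{N(x)}\}$. First I would show that $\smcatprojsup$ is exactly orthogonal projection onto a closed affine subspace, and is therefore affine and non-expansive. The mean embedding $p\mapsto\mu_p=\sum_i p_i\kappa(\xi_i,\cdot)$ maps $\smset{\catsup(x)}$ bijectively onto $V_x=\{\sum_i p_i\kappa(\xi_i,\cdot):\sum_i p_i=1\}\subset\mathcal{H}$; injectivity holds because the Gram matrix of a characteristic kernel at distinct points is strictly positive definite, so the embeddings $\kappa(\xi_i,\cdot)$ are linearly independent, exactly as used in Lemma~\ref{lem:catproj:well-defined}. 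Since $\sum_i p_i=1$ is a single affine constraint, $V_x$ is a finite-dimensional, hence closed, affine subspace. Minimizing $\mmd(p,\mvdistretsym(x))=\|\mu_p-\mu_{\mvdistretsym(x)}\|_{\mathcal{H}}$ over $\smset{\catsup(x)}$ is then precisely the metric projection of $\mu_{\mvdistretsym(x)}$ onto $V_x$, which exists, is unique, is affine, and is a non-expansion; pulling this back through the affine injective embedding shows $\smcatprojsup$ is well-defined, affine, and a non-expansion in $\supremal\mmd$.

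Next I would verify that $\dbo$ restricted to mass-$1$ signed measures is affine and a $\gamma^{c/2}$-contraction. Pushforward and the expectation over $X'$ are linear in the measure and both preserve total mass, so $\dbo$ maps $\smset{\returnspace}^\statespace$ into itself and is affine on this affine subspace. For the contraction factor, let $\nu=p-q$ with $p,q\in\smset{\returnspace}$, so $\nu$ has total mass $0$; substituting the semimetric formula for $\kappa$ into $\mmd^2(p,q)=\int\!\!\int\kappa\,\mathrm{d}\nu\,\mathrm{d}\nu$ annihilates the $\rho(\cdot,y_0)$ terms and yields the energy-distance identity $\mmd^2(p,q)=-\tfrac12\int\!\!\int\rho\,\mathrm{d}\nu\,\mathrm{d}\nu$. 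Writing $b=\bootfn{r(x)}$ and changing variables, shift-invariance and $c$-homogeneity of $\rho$ give $\mmd(\pushforward{b}{p},\pushforward{b}{q})=\gamma^{c/2}\mmd(p,q)$ exactly. Combining this with linearity of the mean embedding and the triangle inequality under $\mathbb{E}_{X'}$ produces $\supremal\mmd(\dbo\mvdistretsym_1,\dbo\mvdistretsym_2)\le\gamma^{c/2}\supremal\mmd(\mvdistretsym_1,\mvdistretsym_2)$, mirroring the proof of Theorem~\ref{thm:dp:mmd} but now over signed measures.

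Composing the two pieces finishes the argument: $\smcatprojsup\dbo$ sends $\smcatcls$ into $\smcatcls$ (since $\dbo$ lands in $\smset{\returnspace}^\statespace$ and $\smcatprojsup$ has range $\smcatcls$), is affine as a composition of affine maps, and is a $\gamma^{c/2}$-contraction as a contraction followed by a non-expansion. Since each factor $\smset{\catsup(x)}$ is a finite-dimensional affine space on which $\mmd$ is equivalent to a Euclidean norm, the product space $(\smcatcls,\supremal\mmd)$ is complete, and Banach's fixed point theorem supplies a unique fixed point $\mvdistretcatsm^\pi$.

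The hard part will be Step~1: correctly identifying the MMD projection onto signed mass-$1$ measures with orthogonal projection onto an affine subspace of $\mathcal{H}$, which is exactly what isolates affineness as a consequence of the signed relaxation. This is genuinely where signed measures are needed, since for probability measures the constraint $p_i\ge 0$ makes the feasible set a simplex rather than an affine space and affineness fails (cf.\ Appendix~\ref{app:no-affine}). The remaining steps are either the energy-distance bookkeeping of Step~2 or a routine completeness-plus-Banach argument.
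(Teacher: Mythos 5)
Your proposal is correct and follows essentially the same route as the paper: identify $\smcatprojsup$ as the Hilbert-space projection onto a closed affine subspace of mean embeddings (hence affine and non-expansive), observe that $\dbo$ extends to mass-$1$ signed measures with the same $\gamma^{c/2}$ contraction factor, and conclude via Banach. You are in fact slightly more careful than the paper in two spots---re-deriving the contraction for signed measures via the mass-zero energy-distance identity rather than asserting that $\dbo$ ``acts identically'' on $\smset{\R^d}$, and explicitly invoking completeness of $(\smcatcls,\supremal\mmd)$ before applying the fixed-point theorem.
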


While we have ``relaxed'' the projection, the fixed point $\mvdistretcatsm^\pi$ is a good approximation of $\mvdistretsym^\pi$.

\begin{restatable}{theorem}{smqualityfixedpoint}\label{thm:sm-quality-fixedpoint}
Under the conditions of Lemma \ref{lem:smproj-like-catproj}, we have that
\begin{enumerate}
    \item $\supremal\mmd(\mvdistretcatsm^\pi, \mvdistretsym^\pi)\leq \frac{1}{1-\gamma^{c/2}}\sup_{x\in\statespace}\inf_{P\in\partcls[\catsup(x)]}\sqrt{\mesh(P; \kappa)}$; and
    \item $\supremal\mmd(\catprojsup\mvdistretcatsm^\pi, \mvdistretsym^\pi)\leq (1 + \frac{1}{1-\gamma^{c/2}})\sup_{x\in\statespace}\inf_{P\in\partcls[\catsup(x)]}\sqrt{\mesh(P; \kappa)}$.
\end{enumerate}
\end{restatable}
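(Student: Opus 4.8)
The plan is to treat both inequalities as approximate-fixed-point estimates that reduce, exactly as in Theorem~\ref{thm:dp:consistent}, to the one-step projection error of the relevant projection evaluated at the true return-distribution function $\mvdistretsym^\pi$. Throughout I write $\varepsilon := \sup_{x\in\statespace}\inf_{P\in\partcls[\catsup(x)]}\sqrt{\mesh(P;\kappa)}$ and recall from Lemma~\ref{lem:mmdprojapprox} that $\mmd((\catprojsup\mvdistretsym^\pi)(x),\mvdistretsym^\pi(x))\leq\varepsilon$ for each $x$. For the first claim I would apply the standard contraction estimate to $T := \smcatprojsup\dbo$, which by Lemma~\ref{lem:smproj-like-catproj} is a $\gamma^{c/2}$-contraction with fixed point $\mvdistretcatsm^\pi$; this contraction extends from $\smcatcls$ to all mass-$1$ signed measures since $\dbo$ contracts such measures in $\supremal\mmd$ and $\smcatprojsup$ is the nonexpansive Hilbert-space metric projection associated with the metric of Lemma~\ref{lem:mmd:signed}. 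Using $\dbo\mvdistretsym^\pi=\mvdistretsym^\pi$, so that $T\mvdistretsym^\pi=\smcatprojsup\mvdistretsym^\pi$, the triangle inequality gives
\begin{align*}
\supremal\mmd(\mvdistretcatsm^\pi,\mvdistretsym^\pi)
&\leq \supremal\mmd(T\mvdistretcatsm^\pi, T\mvdistretsym^\pi) + \supremal\mmd(\smcatprojsup\mvdistretsym^\pi,\mvdistretsym^\pi)\\
&\leq \gamma^{c/2}\,\supremal\mmd(\mvdistretcatsm^\pi,\mvdistretsym^\pi) + \supremal\mmd(\smcatprojsup\mvdistretsym^\pi,\mvdistretsym^\pi),
\end{align*}
and rearranging yields $\supremal\mmd(\mvdistretcatsm^\pi,\mvdistretsym^\pi)\leq(1-\gamma^{c/2})^{-1}\supremal\mmd(\smcatprojsup\mvdistretsym^\pi,\mvdistretsym^\pi)$. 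Since $\simplex{\catsup(x)}\subset\smset{\catsup(x)}$, the infimum defining the signed projection is over a superset of that defining $\catprojsup$, hence $\mmd((\smcatprojsup\mvdistretsym^\pi)(x),\mvdistretsym^\pi(x))\leq\mmd((\catprojsup\mvdistretsym^\pi)(x),\mvdistretsym^\pi(x))\leq\varepsilon$; taking the supremum over $x$ establishes the first claim.

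For the second claim I would round $\mvdistretcatsm^\pi$ back into $\catcls$ using $\catprojsup$ and split the error through $\catprojsup\mvdistretsym^\pi$,
\begin{align*}
\supremal\mmd(\catprojsup\mvdistretcatsm^\pi,\mvdistretsym^\pi)
\leq \supremal\mmd(\catprojsup\mvdistretcatsm^\pi,\catprojsup\mvdistretsym^\pi) + \supremal\mmd(\catprojsup\mvdistretsym^\pi,\mvdistretsym^\pi).
\end{align*}
The second term is at most $\varepsilon$ by Lemma~\ref{lem:mmdprojapprox}. For the first term, invoking the nonexpansion of $\catprojsup$ reduces it to $\supremal\mmd(\mvdistretcatsm^\pi,\mvdistretsym^\pi)$, which the first claim bounds by $(1-\gamma^{c/2})^{-1}\varepsilon$; summing the two contributions produces the advertised factor $1+(1-\gamma^{c/2})^{-1}$. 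Note that routing through $\catprojsup\mvdistretsym^\pi$ (rather than through $\mvdistretcatsm^\pi$) is what keeps the coefficient sharp, since it lets the nonexpansion absorb the projection step instead of adding a second full copy of the fixed-point error.

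The hard part will be a domain mismatch: Lemma~\ref{lem:proj:orthogonal} asserts the nonexpansion of $\catprojsup$ only for probability-measure inputs, whereas in the second claim it is applied to the signed measure $\mvdistretcatsm^\pi$. The resolution is to observe that, in the mean-embedding picture, $\catprojsup$ is precisely the metric projection onto the closed convex set $\{\mu_p : p\in\simplex{\catsup(x)}\}\subset\mathcal{H}$, and that metric projection onto a closed convex subset of a Hilbert space is firmly nonexpansive at \emph{every} point of $\mathcal{H}$---in particular at the mean embeddings of signed measures, which are well defined by Lemma~\ref{lem:mmd:signed}. This upgrades the nonexpansion of $\catprojsup$ to all of $\smset{\returnspace}^\statespace$ and closes the gap; the same Hilbert-space viewpoint simultaneously justifies extending the $\dbo$-contraction and the $\smcatprojsup$-nonexpansion used in the first claim beyond $\smcatcls$ so that the approximate-fixed-point estimate may legitimately be evaluated at $\mvdistretsym^\pi$. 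The remaining manipulations are routine applications of the triangle inequality and the lemmas already in hand.
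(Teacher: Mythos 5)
Your proof is correct and follows essentially the same route as the paper: the contraction/approximate-fixed-point estimate for $\smcatprojsup\dbo$ combined with the inclusion $\simplex{\catsup(x)}\subset\smset{\catsup(x)}$ and Lemma~\ref{lem:mmdprojapprox} for the first claim, and the triangle inequality through $\catprojsup\mvdistretsym^\pi$ together with the nonexpansion of $\catprojsup$ for the second. Your observation that Lemma~\ref{lem:proj:orthogonal} is stated only for probability-measure inputs while the second claim applies $\catprojsup$ to the signed measure $\mvdistretcatsm^\pi$ is a genuine subtlety that the paper's proof passes over silently, and your resolution --- that the metric projection onto the closed convex set of embeddings $\{\mu_p : p\in\simplex{\catsup(x)}\}$ is nonexpansive on all of $\mathcal{H}$, hence in particular at embeddings of mass-$1$ signed measures --- is the right fix.
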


Notably, the second statement of Theorem \ref{thm:sm-quality-fixedpoint} states that projecting $\mvdistretcatsm^\pi$ back onto the space of multi-return distribution functions yields approximately the same error as $\mvdistretcat^\pi$ when $\gamma$ is near $1$.

In the remainder of the section, we assume access to a stream of MDP transitions
$\sequence{t}{T}\subset\statespace\times\actionspace\times\R^d\times\statespace$
consisting of elements $T_t = (X_t, A_t, R_t, X'_t)$ with the following structure,
\begin{equation}\label{eq:generative-model}
X_t\sim\bbfont{P}(\cdot\mid\mathcal{F}_{t-1})
\qquad A_t\sim\pi(\cdot\mid X_t)
\qquad R_t = r(X_t)
\qquad X'_t\sim P(\cdot\mid X_t, A_t)
\end{equation}
where $\bbfont{P}$ is some probability measure and
$\sequence{t}{\mathcal{F}}$ is the canonical filtration $\mathcal{F}_t
= \sigma(\cup_{t=1}^{t}T_t)$. Based on these transitions, we can
define stochastic distributional Bellman backups by
\begin{equation}
  \label{eq:dbo:stochastic}
  \dbostoch_t\mvdistretsym(x) =
  \begin{cases}
    \pushforward{(\bootfn{R_t})}{\mvdistretsym(X_t')} & x =
                                                              X_t\\
    \mvdistretsym(x) & \text{otherwise}
  \end{cases},
\end{equation}
which notably can be computed exactly without knowledge of $P, r$.
Due to the stronger convergence guarantees shown for projected multivariate distributional dynamic programming, we introduce an asynchronous incremental algorithm leveraging the categorical representation, which produces iterates
$\sequence{t}{\widehat\mvdistretsym}$ according to
\begin{equation}
  \label{eq:dp:stochastic:iterates}
  \widehat\mvdistretsym_{t+1} = (1-\alpha_t)\widehat\mvdistretsym_t +
  \alpha_t\smcatprojsup\dbostoch_t\widehat\mvdistretsym_t 
\end{equation}
for $\widehat\mvdistretsym_0\in\catcls$, where $\sequence{t}{\alpha}$ is
any sequence of (possibly) random step sizes
adapted to the filtration $\sequence{t}{\mathcal{F}}$. The iterates of
\eqref{eq:dp:stochastic:iterates} closely resemble those of classic
stochastic approximation algorithms
\cite{robbinsStochasticApproximationMethod1951} and particularly asynchronous TD
learning algorithms \cite{jaakkolaConvergenceStochasticIterative1993,tsitsiklis1994asynchronous,bertsekas1996neuro},
but with iterates taking values in the space of state-indexed
signed measures. Indeed, our next result draws on the techniques from these works to establish convergence of TD-learning on $\smcatcls$ representations.

\begin{restatable}{theorem}{cattdconvergence}\label{thm:cattdconvergence}
  For a kernel $\kappa$ induced by a semimetric $\rho$ of strong negative type, the sequence
  $\sequence{t}{\widehat\mvdistretsym}$ given by
  \eqref{eq:generative-model}-\eqref{eq:dp:stochastic:iterates}
  converges to $\mvdistretcatsm^\pi$ with probability 1.
\end{restatable}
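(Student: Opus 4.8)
The plan is to recognize the iteration \eqref{eq:dp:stochastic:iterates} as an instance of asynchronous stochastic approximation for a contractive affine operator, and to invoke the classical convergence theory of \cite{jaakkolaConvergenceStochasticIterative1993,tsitsiklis1994asynchronous,bertsekas1996neuro}. The essential enabler is Lemma \ref{lem:smproj-like-catproj}: because we relaxed to signed measures, $\smcatprojsup\dbo$ is \emph{affine} and a $\gamma^{c/2}$-contraction in $\supremal\mmd$ with unique fixed point $\mvdistretcatsm^\pi$. First I would linearize the state space of the iteration. For each $x$, the set $\smset{\catsup(x)}$ of mass-$1$ signed measures on the finite support $\catsup(x)$ embeds, via the mean embedding $p\mapsto\mu_p$, into a finite-dimensional subspace of the RKHS $\mathcal{H}$, and by Lemma \ref{lem:mmd:signed} the metric $\mmd$ coincides with the induced Hilbert norm on this image. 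Thus $\smcatcls$ is affinely isomorphic to a finite-dimensional normed space, $\supremal\mmd$ is the max-over-states norm of the per-state embeddings, and the iterates $\widehat\mvdistretsym_t$ evolve in this fixed finite-dimensional space.

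Next I would put \eqref{eq:dp:stochastic:iterates} in the standard asynchronous form. Since $\dbostoch_t$ updates only the visited state $x=X_t$ and, conditioned on $\mathcal{F}_{t-1}$ and $X_t=x$, is an unbiased estimate of $\dbo$ at $x$ --- because $\mathbb{E}_{X'\sim P^\pi(\cdot\mid x)}[\pushforward{(\bootfn{r(x)})}{\mvdistretsym(X')}]=(\dbo\mvdistretsym)(x)$ --- the affineness of $\smcatprojsup$ lets the conditional expectation pass through the projection, giving $\mathbb{E}[\smcatprojsup\dbostoch_t\widehat\mvdistretsym_t(x)\mid\mathcal{F}_{t-1}]=(\smcatprojsup\dbo\widehat\mvdistretsym_t)(x)$. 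Writing the iteration per state as
\[
\widehat\mvdistretsym_{t+1}(x)=(1-\alpha_t(x))\widehat\mvdistretsym_t(x)+\alpha_t(x)\bigl((\smcatprojsup\dbo\widehat\mvdistretsym_t)(x)+w_t(x)\bigr),
\]
with $\alpha_t(x)=\alpha_t\mathbf{1}\{X_t=x\}$ and $w_t(x)=\smcatprojsup\dbostoch_t\widehat\mvdistretsym_t(x)-(\smcatprojsup\dbo\widehat\mvdistretsym_t)(x)$, identifies $w_t$ as a martingale difference sequence adapted to $\sequence{t}{\mathcal{F}}$, with the mean field governed by the contraction $\smcatprojsup\dbo$.

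I would then verify the hypotheses of the asynchronous contraction theorem. Contraction: Lemma \ref{lem:smproj-like-catproj} supplies the required $\gamma^{c/2}$-contraction in the max-over-states norm $\supremal\mmd$, which is precisely the block/weighted-max-norm structure the theorem demands, each state forming one (vector-valued) coordinate block. Step sizes and visitation: I assume the standard Robbins--Monro conditions $\sum_t\alpha_t(x)=\infty$ and $\sum_t\alpha_t(x)^2<\infty$ almost surely for every $x$ (so in particular each state is updated infinitely often, ensured by the generative model \eqref{eq:generative-model}). Bounded noise: since rewards lie in $[0,R_{\max}]^d$ and $\catsup(x)$ is fixed and finite, the pushforward $\pushforward{(\bootfn{R_t})}{\widehat\mvdistretsym_t(X'_t)}$ has per-state $\mmd$-norm controlled by that of $\widehat\mvdistretsym_t$, and the affine (hence Lipschitz in finite dimension) projection preserves this, yielding a conditional second-moment bound $\mathbb{E}[\|w_t\|^2\mid\mathcal{F}_{t-1}]\le A+B\sup_x\mmd(\widehat\mvdistretsym_t(x),\mvdistretcatsm^\pi(x))^2$ for constants $A,B$. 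With all hypotheses in place, the theorem yields $\widehat\mvdistretsym_t\to\mvdistretcatsm^\pi$ almost surely.

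The hard part will be the faithful translation of the classical asynchronous stochastic approximation theorems --- stated for real-valued coordinates under a weighted max-norm --- to the present setting, where each coordinate block is a signed measure valued in a finite-dimensional Hilbert space and the relevant norm is $\supremal\mmd$. This requires verifying that the Hilbert-valued noise is a genuine conditionally mean-zero martingale difference with the affine projection commuting with expectation (which hinges critically on the signed-measure relaxation, since on probability measures alone the projection is not affine for $d\ge 2$), and confirming the linear-in-iterate variance bound so that boundedness of the iterates need not be assumed a priori.
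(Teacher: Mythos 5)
Your proposal is correct and follows essentially the same route as the paper: linearize via the mean embedding (the paper makes this an explicit affine isometry $\mathcal{I}$ onto $\prod_{x}\R^{N(x)}_1$, but it is constructed from exactly the kernel/RKHS norm you describe), use the affineness of $\smcatprojsup$ from Lemma \ref{lem:smproj-like-catproj} to pass conditional expectations through the projection and obtain an unbiased estimator of the contractive mean-field operator, verify the linear-in-iterate second-moment bound on the martingale noise, and invoke a classical asynchronous stochastic approximation theorem for max-norm contractions. All the ingredients you flag as the ``hard part'' (Hilbert-valued coordinate blocks, commuting the affine projection with expectation, the variance bound) are precisely what the paper's Lemmas \ref{lem:catisometry}, \ref{lem:dpisometry}, and \ref{lem:cat-td:bounded-noise} handle.
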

\subsection{Simulations: Distributional Successor Features}
\begin{wrapfigure}{l}{0.35\textwidth}
\vspace{-0.5cm}
\centering
\includegraphics[width=\linewidth]{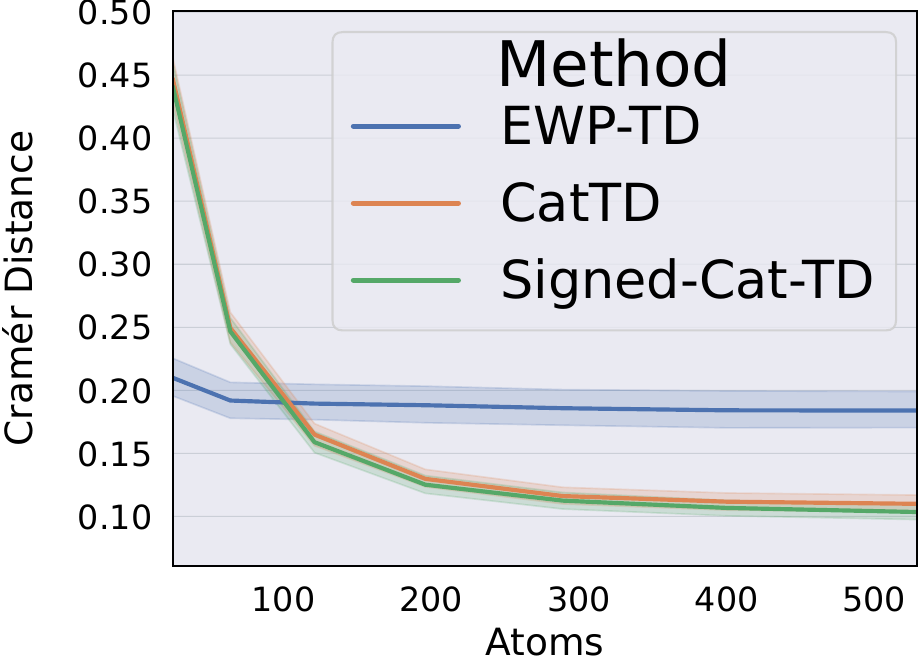}
\vspace{-0.75cm}
\caption{Accuracy of zero-shot return distributions over random MDPs, $d=2$, 95\% CI.}
\label{fig:cat-td:numatoms}
\end{wrapfigure}
To illustrate the behavior of our categorical TD algorithm, we employ it to learn the multi-return distributions for several tabular MDPs with random cumulants. We focus on the case of $2$- and $3$-dimensional cumulants, which is the setting studied in recent works regarding multivariate distributional RL \cite{zhangDistributionalReinforcementLearning2021,wuDistributionalOfflinePolicy2023}.
Interpreting the multi-return distributions as joint distributions over successor features \cite[SFs]{barretoSuccessorFeaturesTransfer2018}, we additionally evaluate the return distributions for random reward functions in the span of the cumulants. We compare our categorical TD approach with a tabular implementation of the EWP TD algorithm of \cite{zhangDistributionalReinforcementLearning2021}, for which no convergence bounds are known.

Figure \ref{fig:cat-td:numatoms} compares TD learning approaches based on their ability to accurately infer (scalar) return distributions on held out reward functions, averaged over 100 random MDPs, with transitions drawn from Dirichlet priors and $2$-dimensional cumulants drawn from uniform priors. The performance of the categorical algorithms sharply increases as the number of atoms increases. On the other hand, the EWP TD algorithm performs well with few atoms, but does not improve very much with higher-resolution representations. We posit this is due to the algorithm getting stuck in local minima, given the non-convexity of the EWP MMD objective. This hypothesis is supported as well by Figure \ref{fig:cat-td:2d:dsf}, which depicts the learned distributional SFs and return distribution predictions.

\begin{wrapfigure}{l}{0.35\textwidth}
\vspace{-0.40cm}
\centering
\includegraphics[width=\linewidth]{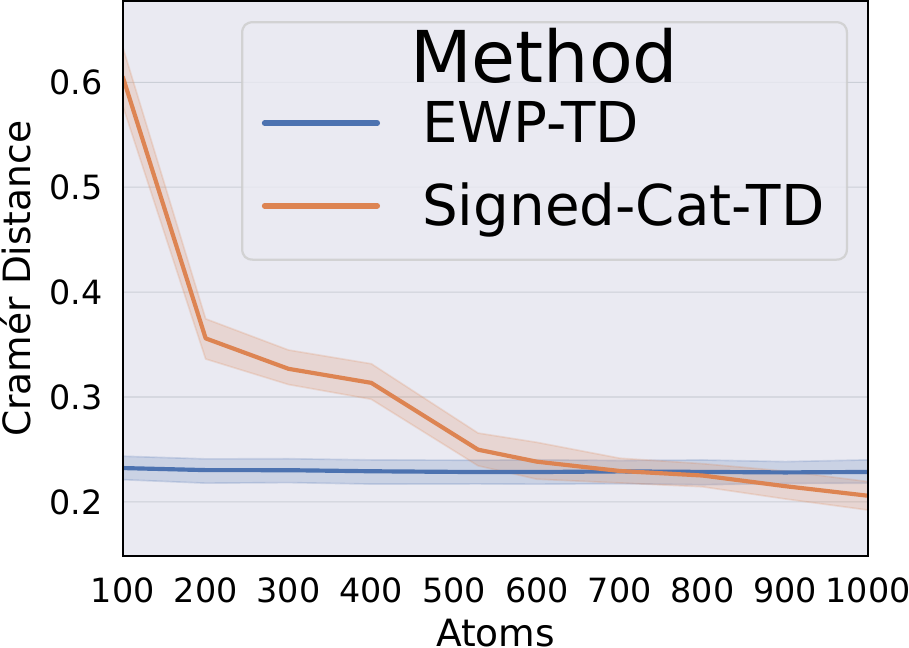}
\vspace{-0.75cm}
\caption{Accuracy of zero-shot return distributions over random MDPs, $d=3$, 95\% CI.}
\label{fig:cat-td:numatoms:3d}
\vspace{-0.6cm}
\end{wrapfigure}

Particularly, we observe that the learned particle locations in the EWP TD approach tend to cluster in some areas or get stuck in low-density regimes, which suggests the presence of a local optimum. On the other hand, our provably-convergent categorical TD method learns a high fidelity quantization of the true multi-return distributions.

Naturally, however, the benefits of the $\mathsf{poly}(d)$ bounds for EWP suggested by Theorem \ref{thm:ewpmmddpconvergence} become more present as we increase the cumulant dimension. Figure \ref{fig:cat-td:numatoms:3d} repeats the experiment of Figure \ref{fig:cat-td:numatoms} with $d=3$, using randomized support points for the categorical algorithm to avoid a cubic growth in the cardinality of the supports. Notably, our method is the first capable of supporting such unstructured supports. While the categorical TD approach can still outperform EWP, a much larger number of atoms is required.
This is unsurprising in light of our theoretical results.

\begin{figure}
    \centering
    \begin{minipage}{0.49\textwidth}
        \centering
        \begin{minipage}{0.59\textwidth}
            \includegraphics[width=\textwidth]{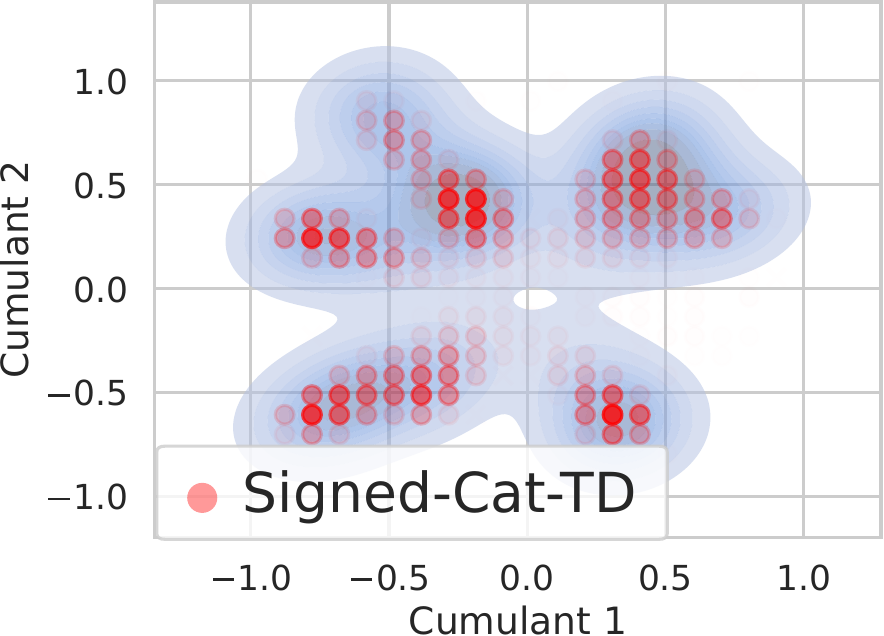}
        \end{minipage}
        \begin{minipage}{0.39\textwidth}
            \includegraphics[width=\textwidth]{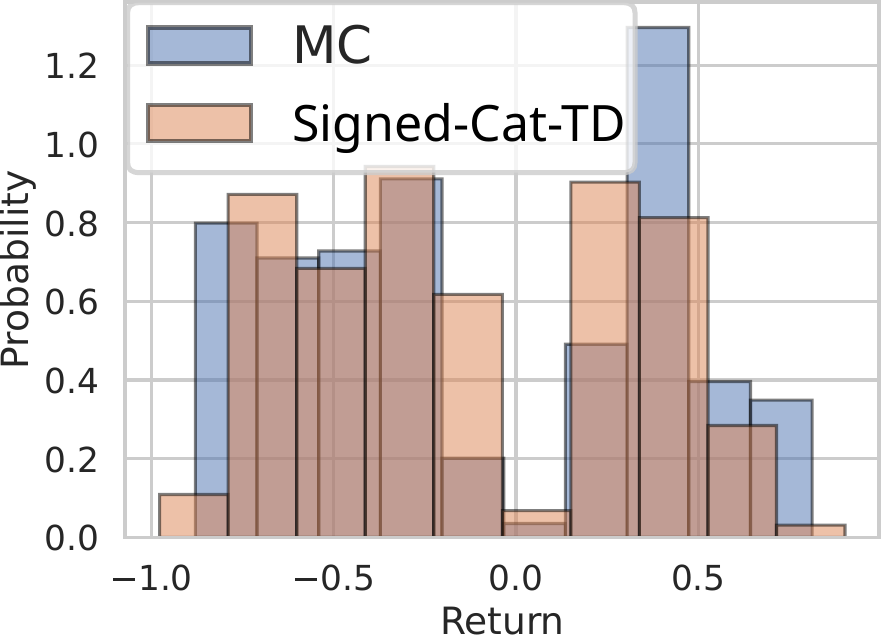}\\
            \includegraphics[width=\textwidth]{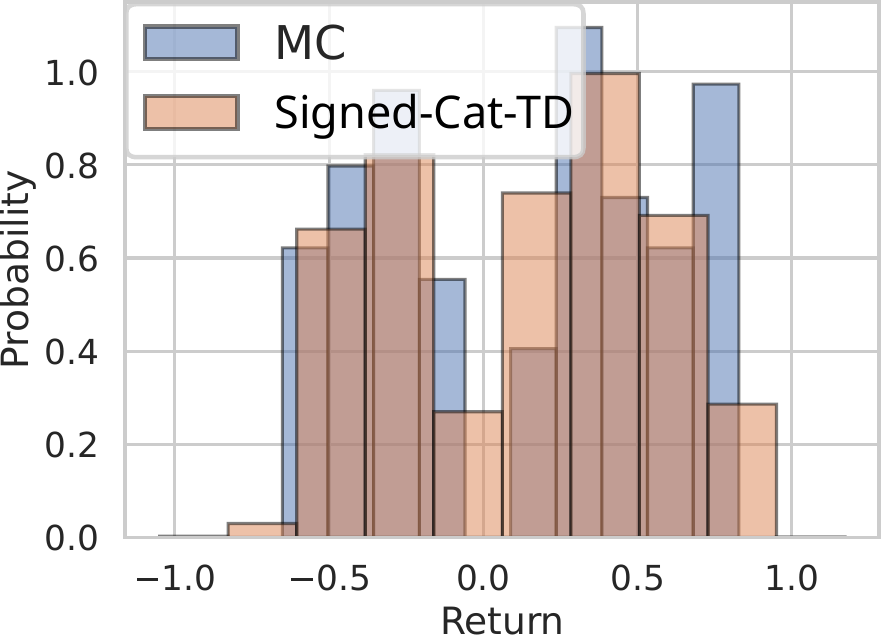}
        \end{minipage}
    \end{minipage}
    \begin{minipage}{0.49\textwidth}
        \centering
        \begin{minipage}{0.59\textwidth}
            \includegraphics[width=\textwidth]{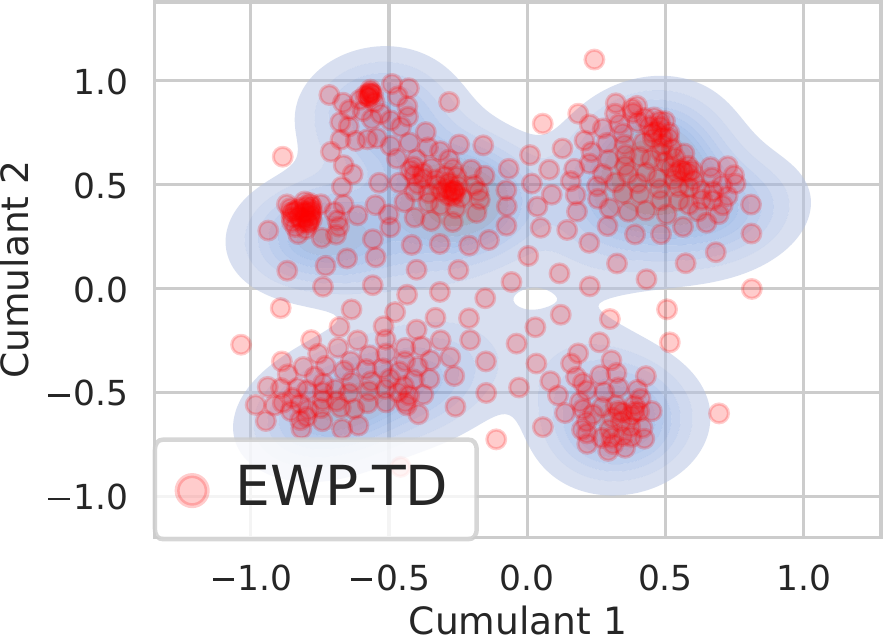}
        \end{minipage}
        \begin{minipage}{0.39\textwidth}
            \includegraphics[width=\textwidth]{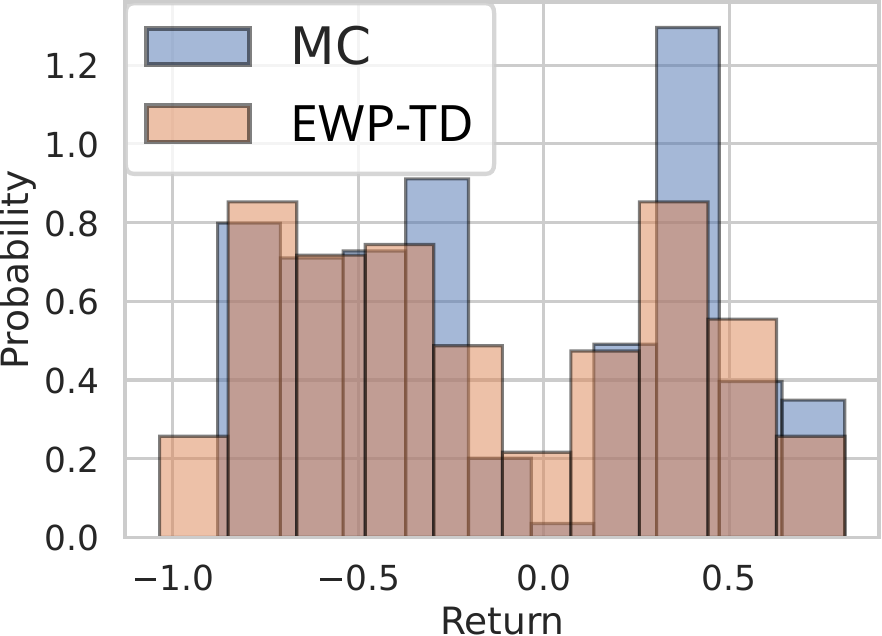}\\
            \includegraphics[width=\textwidth]{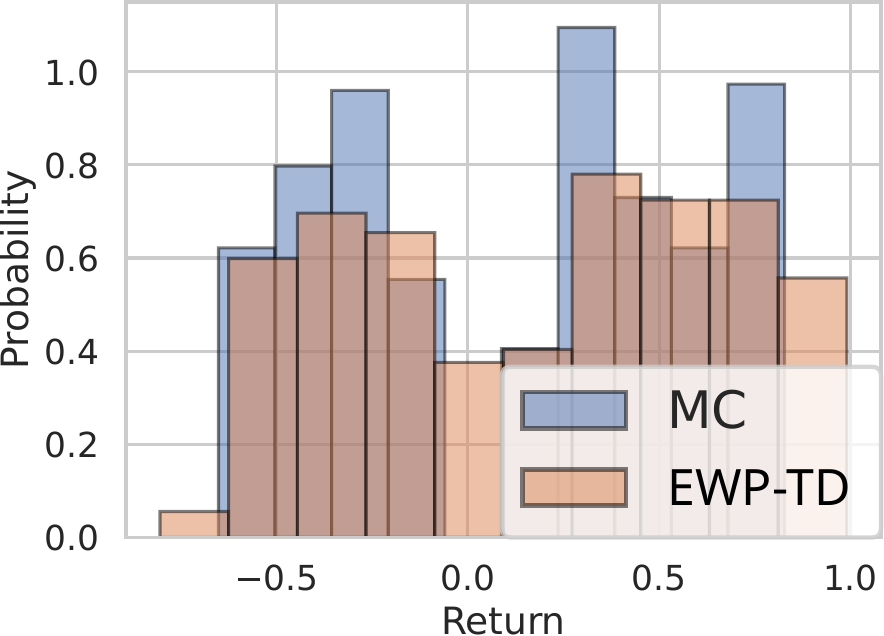}
        \end{minipage}
    \end{minipage}
    \caption{Distributional SFs and predicted return distributions with $m=400$ atoms, in a random MDP with known rectangular bound on cumulants. \textbf{Left}: Categorical TD. \textbf{Right}: EWP TD.}
    \label{fig:cat-td:2d:dsf}
\end{figure}

\section{Conclusion}
We have provided the first provably convergent and computationally tractable algorithms for learning multivariate return distributions in tabular MDPs.
Our theoretical results include convergence guarantees that indicate how accuracy scales with the number of particles $m$ used in distribution representations, and interestingly motivate the use of signed measures to develop provably convergent TD algorithms.

While it is difficult to scale categorical representations to high-dimensional cumulants, our algorithm is highly performant in the low $d$ setting, which has been the focus of recent work in multivariate distributional RL. Notably, even the $d=2$ setting has important applications---indeed, efforts in safe RL depend on distinguishing a cost signal from a reward signal (see, e.g., \cite{yang2023safety}), which can be modeled by bivariate distributional RL. In this setting, our method can easily be scaled to large state spaces by approximating the categorical signed measures with neural networks; an illustrated example is given in Appendix \ref{app:parking-experiment}.

On the other hand, the prospect of learning multi-return distributions for high-dimensional cumulants also has many important applications, such as modeling close approximations to distributional successor measures \cite{wiltzer2024distributional} for zero-shot risk-sensitive policy evaluation. In this setting, we believe EWP-based multivariate DRL will be highly impactful. Our results concerning EWP dynamic programming provide promising evidence that the accuracy of EWP representations scales gracefully with $d$ for a fixed number of atoms. Thus, we believe that understanding convergence of EWP TD-learning algorithms is a very interesting and important open problem for future investigation.

\section*{Acknowledgements}
The authors would like to thank Yunhao Tang, Tyler Kastner, Arnav Jain, Yash Jhaveri, Will Dabney, David Meger, and Marc Bellemare for their helpful feedback, as well as insightful suggestions from anonymous reviewers. This work was supported by the Fonds de Recherche du Qu\'ebec, the National Sciences and Engineering Research Council of Canada, and the compute resources provided by Mila (\texttt{mila.quebec}).

\bibliography{zotero-sources.bib}
\bibliographystyle{alpha}

\clearpage
\begin{appendices}
\startcontents[sections]
\printcontents[sections]{l}{1}{\setcounter{tocdepth}{3}}

\section{In-Depth Summary of Related Work}\label{app:related}
In Sections \ref{s:intro} and \ref{s:background}, we provided a high-level synopsis of the state of existing work in multivariate distributional RL. In this section, we elaborate further.

\textbf{Analysis techniques.} 
Our results in this paper mostly drawn on the analysis of one-dimensional distributional RL algorithms such as categorical and quantile dynamic programming, as well as their temporal-difference learning counterparts \cite{rowlandAnalysisCategoricalDistributional2018,dabneyDistributionalReinforcementLearning2017,rowlandAnalysisQuantileTemporalDifference2023,bellemareDistributionalReinforcementLearning2023}. The proof techniques in these works themselves are related to contraction-based arguments for reinforcement learning with function approximation 
\cite{tsitsiklis1994asynchronous,bertsekas1996neuro,580874}.

\textbf{Multivariate distributional RL algorithms.}
Several prior works have contributed algorithms for multivariate distributional reinforcement learning, along with empirical demonstrations and theoretical analysis, though as we note in the main paper, the approaches proposed in this paper are the first algorithms with strong theoretical guarantees and efficient tabular implementations. \cite{freirichDistributionalMultivariatePolicy2019} propose a deep-learning-based approach in which generative adversarial networks are used to model multivariate return distributions, and use these predictions to inform exploration strategies. \cite{zhangDistributionalReinforcementLearning2021} propose the TD algorithm combing equally-weighted particle representations and an MMD loss that we recall in Equation~\eqref{eq:ewp-zhang}. They demonstrate the effectiveness of this algorithm in combination with deep learning function approximators, though do not analyze it. \cite{wuDistributionalOfflinePolicy2023} propose a family of algorithms for multivariate distributional RL termed fitted likelihood evaluation. These methods mirror LSTD algorithms \cite{bradtke1996linear}, iteratively minimising a batch objective function (in this case, a negative log-likelihood, NLL) over a growing dataset. \cite{wuDistributionalOfflinePolicy2023} demonstrate that these algorithms are performant in low-dimensional settings empirically, and provide theoretical analysis for FLE algorithms, assuming an oracle which can approximately optimise the NLL objective at each algorithm step. \cite{shimizu2024neural} also propose a TD learning algorithm for one-dimensional distributional RL using categorical support and an MMD-based loss. They demonstrate strong performance of this algorithm in classic RL domains such as CartPole and Mountain Car, but do not analyze the algorithm. Our analysis in this paper suggests our novel relaxation to mass-1 signed measures may be crucial to obtaining a straightforwardly analyzable TD algorithm.

Finally, the concurrent work of \cite{lee2024off} studied distributional Bellman operators for Banach-space-valued reward functions. Their work focuses on analyzing how well the fixed point of a distributional finite-dimensional multivariate Bellman equation can approximate the fixed point of a distributional Banach-space-valued Bellman equation. In contrast, our work only studies finite-dimensional reward functions, but provides explicit convergence rates and approximation bounds when the \emph{distribution representations} are finite dimensional, unlike \cite{lee2024off}. Moreover, \cite{lee2024off} considers a similar algorithm to that discussed in Theorem \ref{thm:ewpmmddpconvergence} but for categorical representations, though its convergence is not proved. Furthermore, \cite{lee2024off} did not prove convergence of any TD-learning algorithms, although they did propose some TD-learning algorithms which achieved interesting results in simulation.

\section{Proofs}\label{app:proofs}
\subsection{Multivariate Distributional Dynamic Programming: Section \ref{s:mvddp}}
In this section, we will state some lemmas building up to the proof
of Theorem \ref{thm:dp:mmd}. These lemmas generalize corresponding results of \cite{nguyenDistributionalReinforcementLearning2020} that were specific to the scalar reward setting.
We begin with a lemma that demonstrates a notion of convexity for
the MMD induced by a conditional positive definite kernel. 

\begin{lemma}\label{lem:mmd-contraction:convexity}
Let $(p_a)_{a\in\mathcal{I}}\subset\probset{\mathcal{Y}}$ and
$(q_a)_{a\in\mathcal{I}}\subset\probset{\mathcal{Y}}$ be
collections of probability measures indexed by an index set
$\mathcal{I}$. Suppose $T\in\probset{\mathcal{I}}$. Then for any
characteristic kernel $\kappa$, the following
holds,
\begin{align*}
\mmd(\expectation{a\sim T}{p_a}, \expectation{a'\sim T}{q_{a'}}) \leq
\sup_{a\in\mathcal{I}}\mmd(p_a, q_a)
\end{align*}
\end{lemma}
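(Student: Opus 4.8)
The plan is to pass to the reproducing-kernel Hilbert space picture and reduce the claim to the triangle inequality for vector-valued (Bochner) integrals. Recall from the background that for a characteristic kernel $\kappa$ we have $\mmd(p,q)=\|\mu_p-\mu_q\|_{\mathcal{H}}$, where $\mu_p=\int\kappa(\cdot,y)\,p(\mathrm{d}y)$ is the mean embedding of $p$ into $\mathcal{H}$. The crucial structural fact I would establish first is that the mean embedding is \emph{linear} (indeed affine) in its measure argument: for the mixture $\expectation{a\sim T}{p_a}$, Fubini's theorem gives $\mu_{\expectation{a\sim T}{p_a}}=\expectation{a\sim T}{\mu_{p_a}}$, since integrating $\kappa(\cdot,y)$ against the mixture measure is the same as averaging the embeddings $\mu_{p_a}$ over $a\sim T$. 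Over a compact $\mathcal{Y}$ with continuous $\kappa$ the integrand is bounded, so integrability and the interchange are unproblematic.

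Given this identity, the proof is essentially a one-line Hilbert-space estimate. Writing the left-hand side as a single norm and using linearity of the embedding,
\begin{align*}
\mmd\!\left(\expectation{a\sim T}{p_a}, \expectation{a'\sim T}{q_{a'}}\right)
= \left\|\expectation{a\sim T}{\mu_{p_a}-\mu_{q_a}}\right\|_{\mathcal{H}}
\leq \expectation{a\sim T}{\left\|\mu_{p_a}-\mu_{q_a}\right\|_{\mathcal{H}}}
= \expectation{a\sim T}{\mmd(p_a,q_a)},
\end{align*}
where the inequality is Jensen's inequality applied to the convex map $v\mapsto\|v\|_{\mathcal{H}}$ (equivalently, the triangle inequality for the Bochner integral against the probability measure $T$). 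Finally, since $T\in\probset{\mathcal{I}}$ is a probability measure, the average is dominated by the supremum, $\expectation{a\sim T}{\mmd(p_a,q_a)}\leq\sup_{a\in\mathcal{I}}\mmd(p_a,q_a)$, which is exactly the claimed bound.

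The only genuinely technical points --- and hence where I would spend care --- are the measurability of $a\mapsto\mu_{p_a}$ as an $\mathcal{H}$-valued map and the validity of the Fubini/Bochner interchange, together with the Jensen step for a Hilbert-norm-valued integral. All of these are standard under the paper's running assumption that $\mathcal{Y}$ is compact and $\kappa$ is continuous (hence bounded), which guarantees that $\|\mu_{p_a}\|_{\mathcal{H}}$ is uniformly bounded and the relevant integrals are finite; so I do not expect any real obstacle, only bookkeeping. I would emphasize that this convexity estimate is precisely the lever that later converts the per-state $\gamma$-contractivity of the pushforward $\bootfn{r(x)}$ into $\supremal{\mmd}$-contractivity of $\dbo$, by taking $T=P^\pi(\cdot\mid x)$ and letting $p_a,q_a$ be the corresponding pushed-forward return distributions.
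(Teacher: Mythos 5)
Your proposal is correct and follows essentially the same route as the paper's proof: pass to mean embeddings in $\mathcal{H}$, use Fubini to show the embedding is linear in the measure (so the mixture embeds to the $T$-average of the $\mu_{p_a}$), then apply Jensen's inequality for the Hilbert norm and bound the resulting average by the supremum. The only cosmetic difference is that the paper verifies the interchange by testing $\langle f,\mu_{Tp}\rangle$ against arbitrary $f\in\mathcal{H}$ via the reproducing property, whereas you invoke the Bochner-integral formulation directly; these are the same argument.
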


\begin{proof}
It is known from \cite{scholkopfKernelTrickDistances2000} that
characteristic kernels generate RKHSs
$\mathcal{H}$ into which probability measures can be embedded. As
such, it holds that
\begin{align*}
\mmd(p, q) = \|\mu_p - \mu_q\|
\end{align*}
where $\|\cdot\|$ is the norm in the Hilbert space
$\mathcal{H}$ and $\mu_p$ is the \emph{mean embedding} of $p$ --
that is, the unique element of $\mathcal{H}$ such that
$\expectation{y\sim p}{f(y)} = \langle f,
\mu_p\rangle$ for every $f\in\mathcal{H}$, and where
$\langle\cdot,\cdot\rangle$ is the inner product in
$\mathcal{H}$.

Let $T_p\triangleq\expectation{a\sim T}{p_a}$ and define $T_q$
analogously. We claim that $\mu_{Tp} = \expectation{a\sim
T}{\mu_{p_a}} \triangleq T\mu_p$. To see this, let $f \in \mathcal{H}$, and observe that
\begin{align*}
\Expectation{y\sim Tp}f(y)
&= \int f(y)Tp(\mathrm{d}y)\\
&= \iint f(y)T(\mathrm{d}a)p_a(\mathrm{d}y)\\
&= \iint f(y)p_a(\mathrm{d}y)T(\mathrm{d}a)\\
&= \int \langle f, \mu_{p_a}\rangle T(\mathrm{d}a)\\
&= \left\langle f, \int\mu_{p_a}T(\mathrm{d}a)\right\rangle\\
&= \langle f, T\mu_p\rangle,
\end{align*}
where the third step invokes Fubini's theorem, and the penultimate
steps leverages the linearity of the inner product. Notably, $T$
acts as a linear operator on mean embeddings. As a result,
we see that
\begin{align*}
\mmd(Tp, Tq)
&= \|\mu_{Tp} - \mu_{Tq}\|\\
&= \|T\mu_p - T\mu_q\|\\
&= \left\|\int_{\mathcal{I}}(\mu_{p_a} - \mu_{q_a})T(\mathrm{d}a)\right\|\\
&\leq \int\|\mu_{p_a} - \mu_{q_a}\|T(\mathrm{d}a)\\
&\leq \sup_{a\in\mathcal{I}}\|\mu_{p_a} - \mu_{q_a}\|\\
&= \sup_{a\in\mathcal{I}}\mmd(\mu_{p_a}, \mu_{q_a}).
\end{align*}
where the penultimate inequality is due to Jensen's inequality, and the final inequality holds since $\sup_{a\in\mathcal{I}}\|\mu_{p_a}-\mu_{q_a}\|$ upper bounds the integrand, and the integral is a monotone operator.
\end{proof}

Next, we show how the $\mmd$ under the kernels hypothesized in Theorem \ref{thm:dp:mmd} behave under affine transformations to random variables.

\begin{lemma}\label{lem:mmd-contraction:contraction}
Let $\kappa$ be a kernel induced by a semimetric $\rho$ of strong negative type defined
over a compact subset $\mathcal{Y}\subset\R^d$ that is
both shift invariant and $c$-homogeneous (cf. Theorem
\ref{thm:dp:mmd}). Then for any $a\in\mathcal{Y}$ and $p,q\in\probset{\mathcal{Y}}$,
\begin{align*}
\mmd(\pushforward{(\bootfn{a})}{p}, \pushforward{(\bootfn{a})}{q})
\leq \gamma^{c/2}\mmd(p, q).
\end{align*}
\end{lemma}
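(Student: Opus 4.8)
The plan is to express the squared MMD directly in terms of the semimetric $\rho$ and then to exploit the two structural hypotheses through a single change of variables under the pushforward map. The key enabling fact is the distance-based representation of MMD for kernels induced by a semimetric (cf. Theorem~\ref{thm:kernel-semimetric}): for the kernel $\kappa(y_1, y_2) = \tfrac{1}{2}(\rho(y_1, y_0) + \rho(y_2, y_0) - \rho(y_1, y_2))$ and any probability measures $p, q$, the signed measure $p - q$ has total mass zero. Writing $\mmd^2(p,q) = \iint \kappa \, \mathrm{d}[(p-q)\times(p-q)]$ and substituting the formula for $\kappa$, the two base-point terms $\rho(y_1, y_0)$ and $\rho(y_2, y_0)$ each integrate against $p - q$ in one coordinate, producing a factor $\int \mathrm{d}(p-q) = 0$; hence the choice of $y_0$ is immaterial and
\[
\mmd^2(p,q) = -\frac{1}{2}\iint \rho(y_1, y_2)\, \mathrm{d}[(p-q)\times(p-q)](y_1, y_2),
\]
with the right-hand side nonnegative precisely because $\rho$ has strong negative type.

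Next I would apply this representation to the pushed-forward measures. Since pushforward is linear, $\pushforward{(\bootfn{a})}{p} - \pushforward{(\bootfn{a})}{q} = \pushforward{(\bootfn{a})}{(p-q)}$, and the change-of-variables identity for pushforwards gives
\[
\mmd^2\bigl(\pushforward{(\bootfn{a})}{p}, \pushforward{(\bootfn{a})}{q}\bigr) = -\frac{1}{2}\iint \rho\bigl(\bootfn{a}(y_1), \bootfn{a}(y_2)\bigr)\, \mathrm{d}[(p-q)\times(p-q)](y_1, y_2).
\]
I then apply the two hypotheses pointwise to the integrand: shift-invariance yields $\rho(a + \gamma y_1, a + \gamma y_2) = \rho(\gamma y_1, \gamma y_2)$, and $c$-homogeneity yields $\rho(\gamma y_1, \gamma y_2) = \gamma^c \rho(y_1, y_2)$. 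Pulling the constant $\gamma^c$ out of the double integral gives $\mmd^2(\pushforward{(\bootfn{a})}{p}, \pushforward{(\bootfn{a})}{q}) = \gamma^c\, \mmd^2(p,q)$, and taking square roots establishes the claim (in fact with equality, so the stated inequality is automatic).

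The computation itself is short, so the main obstacle is justifying the distance-based representation rigorously rather than the final substitution. Two points deserve care: first, the cancellation of the base-point terms relies essentially on $p - q$ having zero total mass, which is why the argument is specific to differences of \emph{mass-normalized} measures; second, one must confirm that Fubini applies so that the double integral is well-defined and the base-point terms may be separated, which is guaranteed here by compactness of $\mathcal{Y}$ together with continuity and boundedness of $\rho$. Once the representation is in place, the shift-invariance and homogeneity assumptions enter transparently through the integrand, and no further estimates are needed.
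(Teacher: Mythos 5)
Your proposal is correct and follows essentially the same route as the paper: both derive the energy-distance representation $\mmd^2(p,q) = \mathbb{E}[\rho(Y,Z)] - \tfrac{1}{2}(\mathbb{E}[\rho(Y,Y')] + \mathbb{E}[\rho(Z,Z')])$ (your signed-measure form $-\tfrac{1}{2}\iint\rho\,\mathrm{d}[(p-q)\times(p-q)]$ is the same identity, with the base-point cancellation carried out via the zero total mass of $p-q$ rather than by explicit expansion), and then apply the change of variables under $\bootfn{a}$ together with shift-invariance and $c$-homogeneity to pull out the factor $\gamma^c$. The observation that the bound in fact holds with equality also matches the paper's computation.
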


\begin{proof}
It is known \cite{grettonKernelTwoSampleTest2012} that the MMD can be expressed in terms of expected kernel evaluations, according to
\begin{align*}
    \mmd^2(p, q)
    &= \Expect{\kappa(Y, Y')} + \Expect{\kappa(Z, Z')} - 2\Expect{\kappa(Y, Z)} && (Y, Y', Z, Z')\sim p\otimes p\otimes q\otimes q\\
    &= \Expect{\frac{1}{2}(\rho(Y, y_0) + \rho(Y', y_0) - \rho(Y, Y'))}\\
    &\qquad+ \Expect{\frac{1}{2}(\rho(Z, y_0) + \rho(Z', y_0) - \rho(Z, Z'))}\\
    &\qquad- \Expect{\rho(Y, y_0) + \rho(Z, y_0) - \rho(Y, Z)}\\
    &= \Expect{\rho(Y, Z)} - \frac{1}{2}\left(\Expect{\rho(Y, Y')} + \Expect{\rho(Z, Z')}\right),
\end{align*}
where the last step invokes the definition of a kernel induced by a semimetric, and the linearity of expectation.
Then, defining $\tilde{Y}, \tilde{Y}'$ as independent samples from $\pushforward{(\bootfn{a})}{p}$ and $\tilde{Z}, \tilde{Z}'$ as independent samples from $\pushforward{(\bootfn{a})}{q}$, we have
\begin{align*}
    \mmd^2(\pushforward{(\bootfn{a})}{p}, \pushforward{(\bootfn{a})}{q})
    &= \Expect{\rho(\tilde{Y}, \tilde{Z})} - \frac{1}{2}\left(\Expect{\rho(\tilde{Y}, \tilde{Y}')} + \Expect{\rho(\tilde{Z}, \tilde{Z}')}\right)\\
    &= \Expect{\rho(a + \gamma Y, a + \gamma Z)} - \frac{1}{2}\left(\Expect{\rho(a + \gamma Y, a + \gamma Y')} + \Expect{\rho(a + \gamma Z, a + \gamma Z')}\right)\\
    &= \Expect{\rho(\gamma Y, \gamma Z)} - \frac{1}{2}\left(\Expect{\rho(\gamma Y, \gamma Y')} + \Expect{\rho(\gamma Z, \gamma Z')}\right)\\
    &= \gamma^c\Expect{\rho(Y, Z)} - \frac{\gamma^c}{2}\left(\Expect{\rho(Y, Y')} + \Expect{\rho(Z, Z')}\right)\\
    &= \gamma^c\mmd^2(p, q),
\end{align*}
where the second step is a change of variables, the third step invokes the shift invariance of $\rho$, and the fourth step invokes the homogeneity of $\rho$.

Thus, it follows that $\mmd(\pushforward{(\bootfn{a})}{p}, \pushforward{(\bootfn{a})}{q})\leq\gamma^{c/2}\mmd(p, q)$.
\end{proof}

We are now ready to prove the main result of this section.

\dpmmd*
\begin{proof}
We begin by showing that the distributional Bellman operator
$\dbo$ is contractive in $\supremal\mmd$. We have
\begin{align*}
\supremal\mmd(\dbo\mvdistretsym_1, \dbo\mvdistretsym_2)
&= \sup_{x\in\statespace}\mmd(\dbo\mvdistretsym_1(x), \dbo\mvdistretsym_2(x))\\
&= \sup_{x\in\statespace}\mmd\left(\Expectation{x'\sim
P^\pi(\cdot\mid x)}{\pushforward{(\bootfn{r(x)})}{\mvdistretsym_1(x')}}, \Expectation{x''\sim
P^\pi(\cdot\mid x)}{\pushforward{(\bootfn{r(x)})}{\mvdistretsym_2(x'')}}\right).
\end{align*}
We apply Lemma
\ref{lem:mmd-contraction:convexity} with $\mathcal{I}=\statespace$ and
$T=P^\pi(\cdot\mid x)$, yielding
\begin{align*}
\supremal\mmd(\dbo\mvdistretsym_1, \dbo\mvdistretsym_2)
&\leq
\sup_{x\in\statespace}\sup_{x'\in\statespace}\mmd\left(\pushforward{(\bootfn{r(x)})}{\mvdistretsym_1(x')},
\pushforward{(\bootfn{r(x)})}{\mvdistretsym_2(x')}\right).
\end{align*}
Next, invoking Lemma \ref{lem:mmd-contraction:contraction} with
the shift-invariance and $c$-homogeneity of $\kappa$, we have
\begin{align*}
\supremal\mmd(\dbo\mvdistretsym_1, \dbo\mvdistretsym_2)
&\leq
\gamma^{c/2}\sup_{x\in\statespace}\sup_{x'\in\statespace}\mmd\left(\mvdistretsym_1(x'),
\mvdistretsym_2(x')\right)\\
&=
\gamma^{c/2}\sup_{x\in\statespace}\mmd\left(\mvdistretsym_1(x),
\mvdistretsym_2(x)\right)\\
&= \gamma^{c/2}\supremal\mmd(\mvdistretsym_1, \mvdistretsym_2).
\end{align*}
It follows that $\supremal\mmd(\mvdistretsym_{k+1},\mvdistret^\pi)
\leq \gamma^{c/2}\supremal\mmd(\dbo\mvdistretsym_k,
\dbo\mvdistret^\pi) =
\gamma^{c/2}\supremal\mmd(\dbo\mvdistretsym_k, \mvdistret^\pi)$,
since $\mvdistret^\pi$ is a fixed point of $\dbo$. Continuing, we
see that $\supremal\mmd(\mvdistretsym_k, \mvdistret^\pi) \leq
\gamma^{kc/2}\supremal\mmd(\mvdistretsym_0, \mvdistret^\pi)\leq
\gamma^{kc/2}C\in O(\gamma^{kc/2})\subset o(1)$. Since $\supremal\mmd$ is
a metric on $\probset{\returnspace}^\statespace$ for any characteristic kernel $\kappa$,
it follows that $\mvdistretsym_k$ approaches $\mvdistret^\pi$ at a
geometric rate.
\end{proof}

\subsection{EWP Dynamic Programming: Section \ref{s:dp:ewp}}
In this section, we provide the proof of Theorem~\ref{thm:ewpmmddpconvergence}. To do so, we prove an abstract, general result, regarding any projection mapping that approximates the argmin MMD projection given in Equation~\eqref{eq:ewpproj}.

\dpewp*
\begin{proof}
Let $\Delta_k = \supremal\mmd(\mvdistretsym_k, \mvdistret^\pi)$. Then we have
\begin{align*}
\Delta_k
&= \supremal\mmd(\Pi^{(k)}_{\kappa,m}\dbo\mvdistretsym_{k-1}, \dbo\mvdistret^\pi)\\
&\leq \supremal\mmd(\Pi^{(k)}_{\kappa, m}\dbo\mvdistretsym_{k-1}, \dbo\mvdistretsym_{k-1}) + \supremal\mmd(\dbo\mvdistretsym_{k-1}, \dbo\mvdistret^\pi)\\
&\leq f(d, m) + \gamma^{c/2}\supremal\mmd(\mvdistretsym_{k-1}, \mvdistret^\pi)\\
\therefore \Delta_{k} &\leq f(d, m) + \gamma^{c/2}\Delta_{k-1},
\end{align*}
where the first step invokes the identity that $\mvdistret^\pi$ is the fixed point of
$\dbo$ (which is well-defined by Theorem \ref{thm:dp:mmd}), the second step leverages
the triangle inequality, and the third step follows by the definition of $f(d, m)$ and the
contractivity of $\dbo$ established in Theorem \ref{thm:dp:mmd}. Unrolling the recurrence
above, we have
\begin{align*}
\supremal\mmd(\mvdistretsym_k, \mvdistret^\pi) = \Delta_k
&\leq \gamma^{ck/2}\Delta_0 + f(d, m)\sum_{i=0}^\infty\gamma^{ci/2}\\
&\leq \gamma^{ck/2}D + \frac{f(d, m)}{1 - \gamma^{c/2}}.
\end{align*}
As such, as $k\to\infty$, we have that
\begin{align*}
\lim_{k\to\infty}\supremal\mmd\left(\mvdistretsym_k, \closedball\left(\mvdistret^\pi, \frac{f(d, m)}{1-\gamma^{c/2}}\right)\right) = 0,
\end{align*}
proving our claim.
\end{proof}

Proposition \ref{prop:dpewp}, despite its simplicity, reveals an interesting fact: given a good
enough approximate MMD projection $\Pi_{\kappa,m}$ in the sense that $f(d, m)$ decays quickly with $m$, the dynamic programming iterates $(\mvdistretsym_k)_{k\geq 0}$ will eventually be contained in
a (arbitrarily) small neighborhood of $\mvdistret^\pi$. The next result provides an example consequence of this abstract result, and establishes that $m\in\mathsf{poly}(d)$ is enough for convergence
to an arbitrarily small set with projected distributional dynamic programming under the EWP
representation.

Finally, we can now prove Theorem~\ref{thm:ewpmmddpconvergence}, which we restate below for convenience.
\ewpmmddpconvergence*

\begin{proof}\label{proof:thm:ewpmmdconvergence}
For each $x\in\statespace$ and $k\in[K]$, denote by $\mathcal{E}_{x,k}$ the event given by
\begin{align*}
    \mathcal{E}_{x,k} = \bigg\{\mmd(\ewprandbellman{m}\mvdistretsym_k(x), \dbo\mvdistretsym_k(x))\leq \frac{2d^{\alpha/2}R_{\max}^{\alpha}}{(1-\gamma)^\alpha\sqrt{m}} + \frac{4d^{\alpha/2}R_{\max}^\alpha\log\delta'^{-1}}{(1-\gamma)^\alpha\sqrt{m}} =: f(d, m; \delta')\bigg\},
\end{align*}
for $\delta'>0$ a constant to be chosen shortly. Moreover, with $\mathcal{E}=\cap_{(x,k)\in\statespace\times[K]}\mathcal{E}_{x,k}$, it holds that under $\mathcal{E}$, $\supremal\mmd(\ewprandbellman{m}\mvdistretsym_k, \dbo\mvdistretsym_k)\leq f(d, m; \delta')$ for all $k \in [K]$. Following the proof of Proposition \ref{prop:dpewp}, we have that, conditioned on $\mathcal{E}$,
\begin{align*}
\supremal\mmd(\mvdistretsym_k, \mvdistret^\pi)
&\leq \gamma^{\alpha k/2}D + \frac{f(d, m; \delta')}{1-\gamma^{\alpha/2}}\\
&\leq \frac{2d^{\alpha/2}R_{\max}^\alpha}{(1-\gamma)^\alpha\sqrt{m}} + \frac{f(d, m; \delta')}{1-\gamma^{\alpha/2}}.
\end{align*}
Now, by \cite[Proposition A.1]{tolstikhin2017minimax}, event $\mathcal{E}_{x,k}$ holds with probability at least $1-\delta'$, since each $(\ewprandbellman{m}\mvdistretsym_k)(x)$ is generated independently by sampling $m$ independent draws from the distribution $\dbo\mvdistretsym_k$. Therefore, event $\mathcal{E}$ holds with probability at least $1 - |\statespace|K\delta'$. Choosing $\delta' = \delta/(|\statespace|K)$, we have that, with probability at least $1-\delta$, 
\begin{align*}
    \supremal\mmd(\mvdistretsym_K, \mvdistret^\pi)
    &\leq \frac{2d^{\alpha/2}R_{\max}^\alpha}{(1-\gamma)^\alpha\sqrt{m}} + \frac{1}{1-\gamma^{\alpha/2}}\frac{2d^{\alpha/2}R_{\max}^\alpha}{(1-\gamma)^\alpha\sqrt{m}}\left(1 + 2\log(|\statespace|K\delta^{-1})\right)\\
    &\leq \frac{2d^{\alpha/2}R_{\max}^\alpha}{(1-\gamma)^\alpha\sqrt{m}} + \frac{2d^{\alpha/2}R_{\max}^\alpha}{(1-\gamma^{\alpha/2})(1-\gamma)^{\alpha}\sqrt{m}}\left(1 + 2\log\left(|\statespace|\left(1 + \frac{\log m}{\log\gamma^{-\alpha}}\right)\delta^{-1}\right)\right).
\end{align*}

As such, there exist universal constants $C_0, C_1\in\R_+$ such that
\begin{equation}\label{eq:ewpdpmmdconvergence:bound}
    \begin{aligned}
        \supremal\mmd(\mvdistretsym_K, \mvdistret^\pi)
        &\leq C_1\frac{d^{\alpha/2}R_{\max}^\alpha}{(1-\gamma^{\alpha/2})(1-\gamma)^{\alpha}\sqrt{m}}\left(1 + 2\log\left(|\statespace|\left(1 + \frac{\log m}{\log\gamma^{-\alpha}}\right)\delta^{-1}\right)\right)\\
        &\leq C_0\frac{d^{\alpha/2}R_{\max}^\alpha}{(1-\gamma^{\alpha/2})(1-\gamma)^{\alpha}\sqrt{m}}\left(\log|\mathcal{X}| + \log\frac{\log m}{\log\gamma^{-\alpha}} + \log\delta^{-1}\right)\\
        &= C_0\frac{d^{\alpha/2}R_{\max}^\alpha}{(1-\gamma^{\alpha/2})(1-\gamma)^{\alpha}\sqrt{m}}\left(\log\left(\frac{|\mathcal{X}|\delta^{-1}}{\log\gamma^{-\alpha}}\right) + \log m\right).
    \end{aligned}
\end{equation}
\end{proof}

\dpewporacle*
\begin{proof}
Proposition \ref{prop:dpewp} shows that projected EWP dynamic programming converges to a set with radius controlled by the quantity $f(d, m)$ that upper bounds the distance $f(d, m)$ between $\mvdistretsym_k(x)$ and $\Pi_{\kappa, m}^{(k)}\mvdistretsym_k(x)$ at the worst state $x$. In the proof of Theorem \ref{thm:ewpmmddpconvergence}, we saw that with nonzero probability, the randomized projections satisfy $f(d, m)\leq \frac{6d^{\alpha/2}R^\alpha_{\max}}{(1-\gamma)^\alpha\sqrt{m}}$. Thus, there exists a projection satisfying this bound. Since the projection $\ewpproj{m}$ is, by definition, the projection with the smallest possible $f(d, m)$, the claim follows directly by Proposition \ref{prop:dpewp}.
\end{proof}

\subsection{Categorical Dynamic Programming: Section \ref{s:dp:proj}}
\catprojwelldefined*
\begin{proof}
  Firstly, note that $\simplex{\catsup(x)}$ is a bounded,
  finite-dimensional subspace for each $x\in\statespace$. Thus,
  $\simplex{\catsup(x)}$ is compact, and by the continuity of the MMD,
  the infimum in (\ref{eq:catproj}) is attained.
  
  Following the technique of \cite{song2008tailoring}, we establish that $\catprojsup$ can be computed as the solution to a particular quadratic program with convex constraints.

  Let $K\in\R^{N(x)\times
    N(x)}$ denote a matrix where $K_{i,j} = \kappa(\catsupi(x)_i,
  \catsupi(x)_j)$. Since $\kappa$ is a positive definite
  kernel when $\kappa$ is characteristic \cite{grettonKernelTwoSampleTest2012}, it follows that $K$ is a positive definite matrix.
  Then, for any $\varrho\in\probset{\returnspace}$, we have
  \begin{align*}
   &  \arginf_{p\in\simplex{\catsup(x)}}\mmd(p, \varrho) \\
    &= \arginf_{p\in\simplex{\catsup(x)}}\mmd[\kappa]^2(p, \varrho)\\
    &=
      \arginf_{p\in\simplex{\catsup(x)}}\left\{
      \sum_{i=1}^{N(x)}\sum_{j=1}^{N(x)}p_ip_j\kappa(\catsupi(x)_i,\catsupi(x)_j)
      - 2\sum_{i=1}^{N(x)}p_i\overbrace{\int \kappa(\catsupi(x)_i,
      y)\varrho(\mathrm{d}y)}^{b\in\R^{N(x)}} + M(\kappa, \catsup, \varrho)
      \right\}\\
    &=
      \arginf_{p\in\simplex{\catsup(x)}}\left\{
      p^\top Kp - 2p^\top b
      \right\},
  \end{align*}
  where $M(\kappa,\catsup,\varrho)$ is independent of $p$,
  so it does not influence the minimization. Now, since $K$ is
  positive definite (by virtue of $\kappa$ being characteristic)
  and $\simplex{\catsup(x)}$ is a closed convex
  subset of $\R^{N(x)}$, it is
  well-known that there is unique optimum, and the infimum above is
  attained for some $p^\star\in\simplex{\catsup(x)}$. Therefore,
  $\catprojsup$ is indeed well-defined, and its range is contained in
  $\catprojsup$, confirming the first two claims. Finally, since
  $\catprojsup$ is well-defined and since $\mmd$ is nonnegative and
  separates points, $\catprojsup$ must map elements of
  $\simplex{\catsup(x)}$ to themselves -- this is because $\mmd(p,
  p)=0$ for the kernels we consider.
\end{proof}

\projorthogonal*
\begin{proof}
Fix any $x\in\statespace$ and denote $M(x) = \{\mu_p\in\mathcal{H}:p\in\simplex{\catsup(x)}\}$, where $\mathcal{H}$ is the RKHS induced by the kernel $\kappa$ and $\mu_p$ denotes the mean embedding of $p$ in this RKHS. It is simple to verify that $p\mapsto\mu_p$ is linear: for any $p,q\in\probset{\R^d}$ and $\alpha,\beta\in\R$, for all $f\in\mathcal{H}$ with $\|f\|=1$ we have
\begin{align*}
    \langle f, \mu_{\alpha p + \beta q}\rangle
    = \int f(y)[\alpha p + \beta q](\mathrm{d}y)
    &= \alpha\int f(y)p(\mathrm{d}y) + \beta\int f(y)q(\mathrm{d}y)\\
    &= \langle a, \alpha\mu_p + \beta\mu_q\rangle,
\end{align*}
which implies that $\mu_{\alpha p + \beta q} = \alpha\mu_p + \beta\mu_q$. As a consequence, $M(x)$ inherits convexity from $\simplex{\catsup(x)}$.

We claim that $M(x)$ is closed as a subset of $\mathcal{H}$. Since $p\mapsto\mu_p$ is an injection \cite{grettonKernelTwoSampleTest2012}, by Lemma \ref{lem:catproj:well-defined}, since there is a unique $q\in\simplex{\catsup(x)}$ minimizing $\mmd(p, q)$, there is a unique $\mu_q\in M(x)$ attaining the infimum $\|\mu_p-\mu_q\|$ over $M(x)$.
Let $\mu\in\mathcal{H}\setminus M(x)$. Then there exists $\mu_q\in M(x)$ such that $\|\mu_q-\mu\|=\inf_{\nu\in M(x)}\|\mu-\nu\|$, and since $\mu_q\neq \mu$, it follows that $\inf_{\nu\in M(x)}\|\nu-\mu\|=\epsilon>0$. Since this is true for any $\mu\not\in M(x)$, it follows that $\mathcal{H}\setminus M(x)$ is open, so $M(x)$ is closed.

We will now show that $\mvdistretsym(x)\mapsto\catprojsup\mvdistretsym(x)$ is a nonexpansion in $\mathcal{H}$. Let $\mvdistretsym_1,\mvdistretsym_2\in\catcls$, and denote by $\mu_1(x), \mu_2(x)$ the mean embeddings of $\mvdistretsym_1(x), \mvdistretsym_2(x)$. We slightly abuse notation and write $\catprojsup\mu_i(x)$ to denote the mean embedding of $\catprojsup\mvdistretsym_i(x)$.
Since $M(x)$ is convex, for any $\iota(x)\in M(x)$ and $\lambda\in[0,1]$ we have
\begin{align*}
    \mmd(\mvdistretsym_1(x), \catprojsup\mvdistretsym_1(x))^2
    &= \|\mu_1(x) - \catprojsup\mu_1(x)\|^2\\
    &\leq \|\mu_1(x) - (\lambda\iota(x) + (1-\lambda)\catprojsup\mu_1(x))\|^2\\
    &= \|\mu_1(x) - \catprojsup\mu_1(x) - \lambda(\iota(x) - \catprojsup\mu_1(x))\|^2.
\end{align*}
Now, by expanding the squared norms and taking $\lambda\downarrow 0$, since $M(x)$ is closed we have
\begin{align*}
    \langle \mu_1(x) - \catprojsup\mu_1(x), \iota_1(x) - \catprojsup\mu_1(x)\rangle &\leq 0\qquad\forall\iota_1(x),\iota_2(x)\in M(x)\\
    \therefore \langle \catprojsup\mu_2(x) - \mu_2(x), \catprojsup\mu_2(x) - \iota_2(x)\rangle &\leq 0,
\end{align*}

where the second inequality follows by applying the same logic to $\mu_2(x)$.
Choosing $\iota_1(x) = \catprojsup\mu_2(x), \iota_2(x) = \catprojsup\mu_1(x)\in M(x)$ and adding these inequalities yields
\begin{align*}
    \langle \mu_1(x) - \mu_2(x) + (\catprojsup\mu_2(x) - \catprojsup\mu_1(x)), \catprojsup\mu_2(x) - \catprojsup\mu_1(x)\rangle &\leq 0.
\end{align*}
Expanding, we see that
\begin{align*}
    \mmd(\catprojsup\mvdistretsym_1(x), \catprojsup\mvdistretsym_2(x))^2
    &= \|\catprojsup\mu_2(x) - \catprojsup\mu_1(x)\|^2\\
    &\leq \langle\mu_2(x) - \mu_1(x), \catprojsup\mu_2(x) - \catprojsup\mu_1(x)\rangle\\
    &\leq \|\mu_2(x) - \mu_1(x)\|\|\catprojsup\mu_2(x) - \catprojsup\mu_1(x)\|\\
    &=\mmd(\mvdistretsym_1(x), \mvdistretsym_2(x))\mmd(\catprojsup\mvdistretsym_1(x), \catprojsup\mvdistretsym_2(x))\\
    \therefore
    \mmd(\catprojsup\mvdistretsym_1(x), \catprojsup\mvdistretsym_2(x))
    &\leq \mmd(\mvdistretsym_1(x), \mvdistretsym_2(x)),
\end{align*}

confirming that $\mvdistretsym(x)\mapsto\catprojsup\mvdistretsym(x)$ is a non-expansion. It follows that
\begin{align*}
    \supremal\mmd(\catprojsup\mvdistretsym_1, \catprojsup\mvdistretsym_2)
    &= \sup_{x\in\statespace}\mmd(\mvdistretsym_1(x), \mvdistretsym_2(x))\\
    &\leq\sup_{x\in\statespace}\mmd(\mvdistretsym_1(x), \mvdistretsym_2(x))\\
    &=\supremal\mmd(\mvdistretsym_1, \mvdistretsym_2).
\end{align*}
\end{proof}

\dpprojfixedpoint*
\begin{proof}
  Combining Theorem \ref{thm:dp:mmd} and Lemma
  \ref{lem:proj:orthogonal}, we see that
  \begin{align*}
    \supremal\mmd(\catprojsup\dbo\mvdistretsym_1,
    \catprojsup\dbo\mvdistretsym_2)\leq\supremal\mmd(\dbo\mvdistretsym_1,
    \mvdistretsym_2)\leq \gamma^{c/2}\supremal\mmd(\mvdistretsym_1, \mvdistretsym_2)
  \end{align*}
  for some $c>0$. Thus, $\catprojsup\dbo$ is a contraction on
  $(\catcls, \supremal\mmd)$. If $\mathcal{H}$ is the RKHS induced by
  $\kappa$, we showed in Lemma \ref{lem:proj:orthogonal} that
  $\catcls$ is isometric to a product of closed, convex subsets of
  $\mathcal{H}$. Therefore, by the completeness of $\mathcal{H}$,
  $\catcls$ is isometric to a complete subspace, and consequently
  $\catcls$ is a complete subspace under the metric
  $\supremal\mmd$. Invoking the Banach fixed-point theorem, it follows
  that $\catprojsup\dbo$ has a unique fixed point $\mvdistretcat^\pi$, and
  $\mvdistretsym_k\to\mvdistretcat^\pi$ geometrically.
\end{proof}

\subsubsection{Quality of the Categorical Fixed Point}
As we saw in our analysis of multivariate DRL with EWP representations, the distance between a distribution and its projection (as a function of $m, d$) plays a major role in controlling the approximation error in projected distributional dynamic programming.
Before proving the main results of this section, we begin by analyzing this quantity by reducing it to the largest distance between points among \hyperref[def:partcls]{certain partitions} of the space of returns.

\mmdprojapprox*
\begin{proof}
Our proof proceeds by establishing approximation bounds of Riemann
sums to the Bochner integral $\mu_p$, similar to
\cite{vanneervenApproximatingBochnerIntegrals2002}. Let
$P\in\partcls[\catsup]$. Abusing notation to denote by $\Pi p_i$ the probability of the $i$th atom of the discrete support under $\Pi p$, we have
\begin{align*}
\mmd^2(p, \Pi p)
&= \|\mu_{\Pi p} - \mu_p\|^2\\
&= \left\|\int\kappa(\cdot, y)\Pi p(\mathrm{d}y) -
\int\kappa(\cdot, y) p(\mathrm{d}y)\right\|^2\\
&= \left\|\sum_i\kappa(\cdot, z_i)\Pi p_i -
\int\kappa(\cdot, y) p(\mathrm{d}y)\right\|^2,
\end{align*}
where $\catsup = \{z_i\}_{i=1}^n$ for some $n\in\mathbf{N}$. Since
$\Pi p$ optimizes the MMD over all probability vectors in
$\simplex{\catsup}$, for $q\in\simplex{\catsup}$ with $q_i =
p(\partsym_i)$ for the $i$th element of $P$, we have
\begin{align*}
\mmd^2(p, \Pi p)
&\leq \left\|\sum_i\kappa(\cdot, z_i)p(\theta_i) -
\int\kappa(\cdot, y) p(\mathrm{d}y)\right\|^2\\
&= \left\|\sum_i\int_{\partsym_i}(\kappa(\cdot, z_i) -
\kappa(\cdot, y)) p(\mathrm{d}y)\right\|^2\\
&\leq \left\|\sum_i\sup_{y_1, y_2\in\partsym_i}\|\kappa(\cdot, y_1)
- \kappa(\cdot, y_2)\|p(\partsym_i)\right\|^2\\
&\leq \sup_{\partsym\in P}\sup_{y_1,
y_2\in\theta}\|\kappa(\cdot, y_1) - \kappa(\cdot, y_2)\|^2.
\end{align*}
It was shown by
\cite{sejdinovicEquivalenceDistancebasedRKHSbased2013} that
$z\mapsto\kappa(\cdot, z)$ is an isometry from $(\R^d, \rho^{1/2})$ to
$\mathcal{H}$, where $\mathcal{H}$ is the RKHS induced by
$\kappa$. Thus, we have
\begin{align*}
\mmd^2(p, \Pi p)
&\leq \sup_{\partsym\in P}\sup_{y_1,
y_2\in\partsym}\rho(y_1, y_2) = \mesh(P; \kappa).
\end{align*}

Since this is true for any partition $P\in\partcls[\catsup]$, the
claim follows by taking the infimum over $\partcls[\catsup]$.
\end{proof}

We now move on to the main results.

\dpconsistent*
\begin{proof}
  The proof begins in a similar manner to \cite[Proposition
  3]{rowlandAnalysisCategoricalDistributional2018}. Given that
  $\catprojsup$ is a nonexpansion as shown in Lemma
  \ref{lem:proj:orthogonal}, we have
  \begin{align*}
    \supremal\mmd(\mvdistretcat^\pi, \mvdistret^\pi)
    &= \sup_{x\in\statespace}\mmd(\mvdistretcat^\pi(x), \mvdistret^\pi(x))\\
    &\leq \sup_{x\in\statespace}\left[\mmd(\mvdistretcat^\pi(x),
      \catprojsup\mvdistret^\pi(x)) +
      \mmd(\catprojsup\mvdistret^\pi(x), \mvdistret^\pi(x))\right]\\
    &\leq \supremal\mmd(\mvdistretcat^\pi,
      \catprojsup\mvdistret^\pi) +
      \supremal\mmd(\catprojsup\mvdistret^\pi, \mvdistret^\pi)\\
    &\overset{(a)}{=} \supremal\mmd(\catprojsup\dbo\mvdistretcat^\pi,
      \catprojsup\dbo\mvdistret^\pi) +
      \supremal\mmd(\catprojsup\mvdistret^\pi, \mvdistret^\pi)\\
    &\overset{(b)}{\leq} \supremal\mmd(\dbo\mvdistretcat^\pi,
      \dbo\mvdistret^\pi) +
      \supremal\mmd(\catprojsup\mvdistret^\pi, \mvdistret^\pi)\\
    &\overset{(c)}{\leq} \gamma^{c/2}\supremal\mmd(\mvdistretcat^\pi,
      \mvdistret^\pi) +
      \supremal\mmd(\catprojsup\mvdistret^\pi, \mvdistret^\pi)\\
    \therefore\supremal\mmd(\mvdistretcat^\pi, \mvdistret^\pi)
    &\leq \frac{1}{1 - \gamma^{c/2}}\supremal\mmd(\catprojsup\mvdistret^\pi, \mvdistret^\pi),
  \end{align*}
  where $(a)$ leverages the fact that $\mvdistretcat^\pi$ is the fixed
  point of $\catprojsup\dbo$ and that $\mvdistret^\pi$ is the fixed
  point of $\dbo$, $(b)$ follows since $\catprojsup$ is a nonexpansion
  by Lemma \ref{lem:proj:orthogonal}, and $(c)$ follows by the
  contractivity of $\dbo$ established in Theorem \ref{thm:dp:mmd}.
  Finally, by Lemma \ref{lem:mmdprojapprox}, we have
  \begin{align*}
    \supremal\mmd(\mvdistretcat^\pi, \mvdistret^\pi)\leq
    \frac{1}{1 - \gamma^{c/2}}\sup_{x\in\statespace}\mmd(\catprojsup\mvdistret^\pi(x), \mvdistret^\pi(x))
    \leq \frac{1}{1 - \gamma^{c/2}}\sup_{x\in\statespace}\inf_{P\in\partcls[\catsup(x)]}\sqrt{\mesh(P; \kappa)}.
  \end{align*}
\end{proof}

Finally, we explicitly derive a convergence rate for a particular support map under the energy distance kernels.

\meshcovering*
\begin{proof}
We begin bounding $\mesh(P; \kappa)$. Assume $m=n^d$ for some $n\in\N$. We consider a partition $\overline{P}\subset\partcls[U_m]$ consisting of $d$-dimensional hypercubes with side length $(1-\gamma)^{-1}R_{\max}/(n-1)$. By definition of $U_m$, it is clear that these hypercubes cover $\returnspace$ and each contain exactly one support point. Now, for each $\theta\in\overline{P}$, we have
\begin{align*}
    \sup_{y_1,y_2\in\theta}\rho(y_1, y_2)
    &\leq \|y - (y + (1-\gamma)^{-1}R_{\max}/(n-1)\vec{1}\|_2^\alpha
\end{align*}
where $\vec{1}$ is the vector of all ones, and $y$ is any element in $\theta$. Expanding, we have
\begin{align*}
    \sup_{y_1,y_2\in\theta}\rho(y_1, y_2)
    &\leq (1-\gamma)^{-\alpha}\left(\sum_{i=1}^d\left(\frac{R_{\max}}{n-1}\right)^2\right)^{\alpha/2}\\
    &\leq \frac{d^{\alpha/2}R_{\max}^\alpha}{(1-\gamma)^{\alpha}(n-1)^{\alpha}}.
\end{align*}
Since this bound holds for any $\theta\in \overline{P}$, invoking Theorem \ref{thm:dp:consistent} yields
\begin{align*}
\supremal\mmd(\mvdistretcat^\pi, \mvdistretsym^\pi)
&\leq \frac{1}{1-\gamma^{\alpha/2}}\sup_{x\in\statespace}\inf_{P\in\partcls[U_m]}\sqrt{\mesh(P; \kappa)}\\
&\leq \frac{1}{1-\gamma^{\alpha/2}}\sup_{x\in\statespace}\sqrt{\mesh(\overline{P}; \kappa)}\\
&\leq \frac{1}{1-\gamma^{\alpha/2}}\sup_{x\in\statespace}\sqrt{\frac{d^{\alpha/2}R_{\max}^\alpha}{(1-\gamma)^\alpha(n-2)^{\alpha}}}\\
&= \frac{1}{(1-\gamma^{\alpha/2})(1-\gamma)^{\alpha/2}}\frac{d^{\alpha/4}R_{\max}^{\alpha/2}}{(n-1)^{\alpha/2}}\\
&= \frac{1}{(1-\gamma^{\alpha/2})(1-\gamma)^{\alpha/2}}\frac{d^{\alpha/4}R_{\max}^{\alpha/2}}{(n-1)^{\alpha/2}}\\
&= \frac{1}{(1-\gamma^{\alpha/2})(1-\gamma)^{\alpha/2}}\frac{d^{\alpha/4}R_{\max}^{\alpha/2}}{(m^{1/d} - 1)^{\alpha/2}}.
\end{align*}
If instead $m\in((n-1)^d, n^d)$, we omit all but $(n-1)^d$ of the support points, and achieve
\begin{align*}
\supremal\mmd(\mvdistretcat^\pi, \mvdistretsym^\pi)
&\leq \frac{1}{(1-\gamma^{\alpha/2})(1-\gamma)^{\alpha/2}}\frac{d^{\alpha/4}R_{\max}^{\alpha/2}}{(\lfloor m^{1/d} \rfloor - 1)^{\alpha/2}}.
\end{align*}
Alternatively, we may write
\begin{align*}
\supremal\mmd(\mvdistretcat^\pi, \mvdistretsym^\pi)
&\leq \frac{1}{(1-\gamma^{\alpha/2})(1-\gamma)^{\alpha/2}}\frac{d^{\alpha/4}R_{\max}^{\alpha/2}}{(m^{1/d}- 2)^{\alpha/2}}.
\end{align*}
\end{proof}

\subsection{Categorical TD Learning: Section \ref{s:td}}\label{app:categoricaltd}
In this section, we prove results leading up to and including the convergence of the categorical TD-learning algorithm over mass-1 signed measures.
First, in Section \ref{app:categoricaltd:sm}, we show that $\mmd$ is in fact a metric on the space of mass-1 signed measures, and establish that the multivariate distributional Bellman operator is contractive under these distribution representations. Subsequently, in Section \ref{app:categoricaltd:cattd}, we analyze the temporal difference learning algorithm leveraging the results from Section \ref{app:categoricaltd:sm}.

\subsubsection{The Signed Measure Relaxation}\label{app:categoricaltd:sm}
We begin by establishing that $\mmd$ is a metric on $\smset{\mathcal{Y}}$ for spaces $\mathcal{Y}$ under which $\mmd$ is a metric on $\probset{\mathcal{Y}}$.
\mmdsigned*
\begin{proof}
Naturally, since $\mmd$ is given by a norm, it is non-negative, symmetric, and satisfies the triangle inequality. We must show that $\mmd(p, q) = 0\iff p = q$. Firstly, it is clear that $\mmd(p, p) = 0$ by the positive homogeneity of the norm. It remains to show that $\mmd(p, q) = 0\implies p = q$.

Let $p\neq q\in\smset{\mathcal{Y}}$. For the sake of contradiction, assume that $\mmd(p, q) = 0$. We will show that this implies that $\mmd(P, Q) = 0$ for a pair of distinct probability measures, which is a contradiction since $\mmd$ with characteristic $\kappa$ is known to be a metric on $\probset{\mathcal{Y}}$.

By the Hahn decomposition theorem, we may write $p = \tilde{p}^+ - \tilde{p}^-$ for non-negative measures $\tilde{p}^+, \tilde{p}^-$. Therefore, for some $a\in\R_+$, we may express
\begin{align*}
    p = (a + 1)p^+ - ap^-
\end{align*}
where $p^+, p^-\in\probset{\mathcal{Y}}$. Likewise, there exist $b\in\R_+$ and probability measure $q^+, q^-$ for which $q = (b + 1)q^+ - bq^-$. Since $\mmd(p, q) = 0$ by hypothesis, and by linearity of $p\mapsto\mu_p$, we have
\begin{align*}
    0
    &= \|\mu_p - \mu_q\|_{\mathcal{H}}\\
    &= \|(a+1)\mu_{p^+} - a\mu_{p^-} + b\mu_{q^-} - (b+1)\mu_{q^+}\|_{\mathcal{H}}\\
    &= \|(a+1)\mu_{p^+} + b\mu_{q^-} - (a\mu_{p^-} +  (b+1)\mu_{q^+})\|_{\mathcal{H}}\\
    &= (a + b + 1)\left\|\bigg(\lambda \mu_{p^+} + (1-\lambda)\mu_{q^-}\bigg) - \bigg(\lambda'\mu_{p^-} + (1-\lambda')\mu_{q^+}\bigg)\right\|,\ \lambda = \frac{a + 1}{a + b + 1},\ \lambda' = \frac{a}{a + b + 1}\\
    &:= (a + b + 1)\|\mu_P - \mu_Q\|_{\mathcal{H}},
\end{align*}
where $P = \lambda p^+ + (1-\lambda)q^-$ and $Q = \lambda'p^- + (1-\lambda')q^+$ are convex combinations of probability measures, and are therefore probability measures themselves.
So, we have that
\begin{align*}
    \lambda p^+ - \lambda' p^- &= (1-\lambda')q^+ - (1-\lambda)q^-\\
    (a + 1)\lambda p^+ - a p^- &= (b+1)q^+ - bq^-\\
    \therefore p &= q,
\end{align*}
which contradicts our hypothesis. Therefore, $\mmd(p, q)=0\iff p = q$ for any $p, q\in\smset{\mathcal{Y}}$, and it follows that $\mmd$ is a metric.
\end{proof}

Next, we show that the distributional Bellman operator is contractive on the space of mass-1 signed measure return distribution representations.

\smprojlikecatproj*
\begin{proof}
Indeed, $\smcatprojsup$ is, in a sense, a simpler operator than $\catprojsup$. Since $\smset{\catsup(x)}$ is an affine subspace of $\smset{\R^d}$, it holds that $\smcatprojsup$ is simply a Hilbertian projection, which is known to be affine and a nonexpansion \cite{lax2002functional}. Moreover, since $\dbo$ acts identically on $\smset{\R^d}$ as it does on $\probset{\R^d}$, it immediately follows that $\dbo$ is a $\gamma^{c/2}$-contraction on $\smset{\R^d}$, inheriting the result from Theorem \ref{thm:dp:mmd}. Thus, we have that for any $\mvdistretsym_1, \mvdistretsym_2\in\smcatcls$,
\begin{align*}
    \mmd(\smcatprojsup\dbo\mvdistretsym_1, \smcatprojsup\dbo\mvdistretsym_2)
    &\leq \mmd(\dbo\mvdistretsym_1, \dbo\mvdistretsym_2)\\
    &\leq \gamma^{c/2}\mmd(\mvdistretsym_1, \mvdistretsym_2)
\end{align*}
confirming that the projected operator is a contraction. Since $\mmd$ is a metric on $\smset{\catsup(x)}$ for each $x\in\statespace$, it follows that $\supremal\mmd$ is a metric on $\smcatcls$. The existence and uniqueness of the fixed point $\mvdistretcatsm^\pi$ follows by the Banach fixed point theorem.
\end{proof}

Finally, we show that the fixed point of distributional dynamic programming with signed measure representations is roughly as accurate as $\mvdistretcat^\pi$.

\smqualityfixedpoint*
\begin{proof}
Since $\smcatprojsup$ is a nonexpansion in $\supremal\mmd$ by Lemma \ref{lem:smproj-like-catproj}, following the procedure of Theorem \ref{thm:dp:consistent}, we have
\begin{align*}
    \supremal\mmd(\mvdistretcatsm^\pi, \mvdistretsym^\pi)
    \leq \frac{1}{1-\gamma^{c/2}}\supremal\mmd(\smcatprojsup\mvdistretsym^\pi, \mvdistretsym^\pi).
\end{align*}
Note that $\smcatprojsup\mvdistret^\pi$ identifies the closest point (in $\supremal\mmd$) to $\mvdistret^\pi$ in $\smcatcls := \prod_{x\in\statespace}\smset{\catsup(x)}$ and $\catprojsup\mvdistret^\pi$ identifies the closest point to $\mvdistret^\pi$ in $\catcls := \prod_{x\in\statespace}\simplex{\catsup(x)}$. Since it is clear that $\catcls\subset\smcatcls$, it follows that
\begin{align*}
    \supremal\mmd(\mvdistretcatsm^\pi, \mvdistretsym^\pi)
    \leq \frac{1}{1-\gamma^{c/2}}\supremal\mmd(\catprojsup\mvdistretsym^\pi, \mvdistretsym^\pi).
\end{align*}
The first statement then directly follows since it was shown in Lemma \ref{lem:mmdprojapprox} that $\supremal\mmd(\catprojsup\mvdistretsym^\pi, \mvdistretsym^\pi)\leq\sup_{x\in\statespace}\inf_{P\in\partcls[\catsup(x)]}\sqrt{\mesh(P; \kappa)}$.

To prove the second statement, we apply the triangle inequality to yield
\begin{align*}
\supremal\mmd(\catprojsup\mvdistretcatsm^\pi, \mvdistretsym^\pi)
&\leq\supremal\mmd(\catprojsup\mvdistretcatsm^\pi, \catprojsup\mvdistretsym^\pi) + \supremal\mmd(\catprojsup\mvdistretsym^\pi, \mvdistretsym^\pi)\\
&\leq\supremal\mmd(\mvdistretcatsm^\pi, \mvdistretsym^\pi) + \supremal\mmd(\catprojsup\mvdistretsym^\pi, \mvdistretsym^\pi),
\end{align*}
where the second step leverages the fact that $\catprojsup$ is a nonexpansion in $\supremal\mmd$ by Lemma \ref{lem:proj:orthogonal}. Applying the conclusion of the first statement as well as the bound on $\supremal\mmd(\catprojsup\mvdistretsym^\pi, \mvdistretsym^\pi)$, we have
\begin{align*}
\supremal\mmd(\catprojsup\mvdistretcatsm^\pi, \mvdistretsym^\pi)
&\leq \frac{1}{1-\gamma^{c/2}}\sup_{x\in\statespace}\inf_{P\in\partcls[\catsup(x)]}\sqrt{\mesh(P; \kappa)} + \sup_{x\in\statespace}\inf_{P\in\partcls[\catsup(x)]}\sqrt{\mesh(P; \kappa)}\\
&= \left(1 + \frac{1}{1-\gamma^{c/2}}\right)\sup_{x\in\statespace}\inf_{P\in\partcls[\catsup(x)]}\sqrt{\mesh(P; \kappa)}.
\end{align*}
\end{proof}

\subsubsection{Convergence of Categorical TD Learning}\label{app:categoricaltd:cattd}
Convergence of the proposed categorical TD-learning algorithm will rely on studying the iterates through an isometry to an affine subspace of $\prod_{x\in\statespace}\R^{N(x)}$. This affine subspace is that consisting of the set of state-conditioned ``signed probability vectors''. We define $\R^n_1$ as an affine subspace of $\R^n$ for any $n\in\N$ according to
\begin{equation}\label{eq:rn1}
\R^n_1 = \left\{v\in \R^n: \sum_{i=1}^nv_i = 1\right\}.
\end{equation}
We note that any element $\mvdistretsym$ of $\smcatcls$ can be encoded in $\prod_{x\in\statespace}\R^{N(x)}_1$ by expresing $\mvdistretsym(x)$ as the sequence of signed masses associated to each atom of $\catsup(x)$. In Lemma \ref{lem:catisometry}, we exhibit an isometry $\mathcal{I}$ between $\smcatcls$ and $\prod_{x\in\statespace}\R^{N(x)}_1$.

\begin{restatable}{lemma}{catisometry}\label{lem:catisometry}
  Let $\kappa$ be a characteristic kernel. There
  exists an affine isometric isomorphism
  $\mathcal{I}$ between $\smcatcls$ and an affine subspace $\prod_{x\in\statespace}\R^{N(x)}_1$ (cf. \eqref{eq:rn1}).
\end{restatable}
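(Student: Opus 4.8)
The plan is to build $\mathcal{I}$ as the natural coordinate map recording the signed masses a measure places on the atoms of $\catsup(x)$, and then to show that the quadratic geometry induced by the kernel Gram matrix on these coordinate vectors reproduces $\mmd$ exactly. Writing $\catsup(x) = \{\xi^x_1, \dots, \xi^x_{N(x)}\}$, every $\nu \in \smset{\catsup(x)}$ is uniquely of the form $\nu = \sum_{i=1}^{N(x)} v_i \delta_{\xi^x_i}$ with $\sum_i v_i = 1$, since the atoms are distinct, so $\nu \mapsto (v_1, \dots, v_{N(x)})$ is a bijection onto $\R^{N(x)}_1$. Taking the product over $x \in \statespace$ yields $\mathcal{I} : \smcatcls \to \prod_{x} \R^{N(x)}_1$. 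This is the restriction to mass-$1$ measures of the linear coordinate map on all signed measures supported on $\catsup(x)$; as it carries the affine subspace of mass-$1$ measures onto the affine subspace $\R^{N(x)}_1$, the map $\mathcal{I}$ is affine and bijective, hence an affine isomorphism.

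Next I would equip the target with the right geometry. For $u, v \in \R^{N(x)}_1$ corresponding to $\nu_u, \nu_v \in \smset{\catsup(x)}$, linearity of the mean embedding gives $\mu_{\nu_u} - \mu_{\nu_v} = \sum_i (u_i - v_i)\kappa(\cdot, \xi^x_i)$, and expanding the Hilbert norm yields $\mmd^2(\nu_u, \nu_v) = (u-v)^\top K^x (u-v)$, where $K^x_{ij} = \kappa(\xi^x_i, \xi^x_j)$. I would then define, on the difference space $V^x_0 = \{w \in \R^{N(x)} : \sum_i w_i = 0\}$ (the tangent space of $\R^{N(x)}_1$), the form $\|w\|_{K^x}^2 = w^\top K^x w$, and declare the metric on $\R^{N(x)}_1$ to be $(u,v) \mapsto \|u - v\|_{K^x}$, which by the preceding identity equals $\mmd(\nu_u, \nu_v)$.

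The one nontrivial point, and the step I expect to be the main obstacle, is verifying that $\|\cdot\|_{K^x}$ is a genuine norm on $V^x_0$, i.e. that $K^x$ is positive definite on $V^x_0$ rather than merely positive semidefinite. This is precisely where the characteristic hypothesis enters: by Lemma \ref{lem:mmd:signed}, $\mmd$ is a metric on $\smset{\catsup(x)}$, so $\nu_u \neq \nu_v$ forces $\mmd(\nu_u, \nu_v) > 0$; translating through the identity above, $w^\top K^x w > 0$ for every nonzero $w \in V^x_0$. Homogeneity and the triangle inequality are inherited from the Hilbert norm (indeed $\|\cdot\|_{K^x}$ is the pullback of that norm along $w \mapsto \sum_i w_i \kappa(\cdot, \xi^x_i)$, injective on $V^x_0$ by positive definiteness), so $\R^{N(x)}_1$ with this metric is a bona fide metric affine space.

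Finally I would assemble the product. Endowing $\prod_x \R^{N(x)}_1$ with the supremal metric $d(U,V) = \sup_{x \in \statespace} \|U(x) - V(x)\|_{K^x}$, the per-state isometries combine to give $d(\mathcal{I}\mvdistretsym_1, \mathcal{I}\mvdistretsym_2) = \sup_x \mmd(\mvdistretsym_1(x), \mvdistretsym_2(x)) = \supremal\mmd(\mvdistretsym_1, \mvdistretsym_2)$, so $\mathcal{I}$ is an affine isometric isomorphism onto $\prod_x \R^{N(x)}_1$, as claimed.
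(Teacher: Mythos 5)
Your proposal is correct and follows essentially the same route as the paper: encode each mass-$1$ signed measure by its coordinate vector of atom masses, observe that $\mmd^2(\nu_u,\nu_v)=(u-v)^\top K^x(u-v)$ for the Gram matrix $K^x$, and take the product over states with the supremal metric. The one place you genuinely improve on the paper's argument is the positive-definiteness step: the paper asserts that $K_x$ is positive definite on all of $\R^{N(x)}$ because $\kappa$ is characteristic, but characteristicness only guarantees strict positivity of $w^\top K_x w$ for nonzero $w$ summing to zero (for the energy-distance kernel with reference point $y_0$ in the support, $K_x$ can be singular). Your derivation---invoking Lemma \ref{lem:mmd:signed} to conclude $w^\top K^x w>0$ exactly on the difference space $V^x_0$, which is all that the affine subspace $\R^{N(x)}_1$ requires---is the correct justification and quietly patches this over-claim. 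The only piece you omit is the paper's final composition with a linear isometry from $(\R^{N(x)}_1,\|\cdot\|_{K_x})$ onto $\R^{N(x)}_1$ with the standard Euclidean norm; this is immaterial for the statement as written, but it is what lets the paper later apply the stochastic-approximation result (Lemma \ref{lem:stochastic-approx}), which is phrased for the $\|\cdot\|_{2,\infty}$ norm, so you should add that one line if you intend to use your isometry downstream.
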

\begin{proof}
Since $\kappa$ is characteristic, it is positive definite \cite{grettonKernelTwoSampleTest2012}. Thus, for each $x\in\statespace$,
define $K_x\in\R^{N(x)\times N(x)}$ according to
\begin{align*}
(K_x)_{i,j} = \kappa(z_i, z_j)\qquad \catsup(x) = \indexedint{k}{N(x)}{z}.
\end{align*}
Then each $K_x$ is positive definite since $\kappa$ is
a positive definite kernel. Let $p, q\in\simplex{\catsup(x)}$, and define $P\in\R^{N(x)}$ and $Q\in\R^{N(x)}$ such that $P_i=p(z_i)$ and $Q_i=q(z_i)$. Then, we have
\begin{align*}
\mmd^2(p, q)
&= \|\mu_p - \mu_q\|^2_{\mathcal{H}}\\
&= \left\|\sum_{i=1}^{N(x)}\kappa(\cdot, z_i)p(z_i) -
\sum_{i=1}^{N(x)}\kappa(\cdot,
z_i)q(z_i)\right\|^2_{\mathcal{H}}\\
&= \left\langle\sum_{i=1}^{N(x)}\kappa(\cdot,
z_i)(p(z_i) - q(z_i)),
\sum_{j=1}^{N(x)}\kappa(\cdot, z_j)(p(z_j) -
q(z_j))\right\rangle_{\mathcal{H}}\\
&= \sum_{i=1}^{N(x)}\sum_{j=1}^{N(x)}(p(z_i) - q(z_i))(p(z_j) -
q(z_j))\kappa(z_i, z_j)\\
&= (P - Q)^\top K_x(P - Q)\\
&= \|P - Q\|_{K_x}^2.
\end{align*}
Since $K_x$ is positive definite, $\|\cdot\|_{K_x}$ is a norm on
$\R^{N(x)}$. Therefore, the map
$\mathcal{I}^1_x:(\simplex{\catsup(x)}, \mmd)\to(\R^{N(x)}, \|\cdot\|_{K_x})$ given by $\mathcal{I}^1_x(p)
= \overline{P}$ is a linear isometric
isomorphism onto the affine subspace of $\R^{N(x)}$ with entries summing to $1$, which we denote $\R^{N(x)}_1$. Moreover, since $(\R^{N(x)}_1, \|\cdot\|_{K_x})$ is a
finite dimensional Hilbert space, it is well known that
there exists a linear isometric isomorphism
$\mathcal{I}^2_x:(\mathbf{R}^{N(x)}_1,
\|\cdot\|_{K_x})\to\R^{N(x)}_1$ with the usual $L^2$ norm. Thus, $\mathcal{I}_x =
\mathcal{I}^2_x\mathcal{I}^1_x:(\simplex{\catsup(x)},
\mmd)\to\R^{N(x)}_1$ is a linear isometric
isomorphism. Consequently, it follows that $\mathcal{I}:(\catcls,
\supremal\mmd)\to\prod_{x\in\statespace}\R^{N(x)}_1$ given by
$\mathcal{I} = (\prod_{x\in\statespace}\R^{N(x)}, \|\cdot\|_{2,\infty})$ is a linear
isometric isomorphism, where $\|v\|_{2,\infty} = \sup_{x\in\statespace}\|v(x)\|_2$.
\end{proof}

\begin{restatable}{lemma}{dpisometry}\label{lem:dpisometry}
  Define the operator
  $\dboisom:\prod_{x\in\statespace}\R^{N(x)}_1\to\prod_{x\in\statespace}\R^{N(x)}_1$
  by $\dboisom = \mathcal{I}\smcatprojsup\dbo\mathcal{I}^{-1}$, where
  $\mathcal{I}$ is the isometry of Lemma \ref{lem:catisometry}.
  Let $\sequence{t}{U}$ be given by $U_t =
  \mathcal{I}\mvdistretsym_t$, where $\sequence{t}{\mvdistretsym}$ are
  the dynamic programming iterates $\mvdistretsym_{t+1} =
  \smcatprojsup\dbo\mvdistretsym_t$. Then $U_{t+1}=\dboisom
  U_t$. Moreover, $\dboisom$ is contractive whenever $\smcatprojsup\dbo$
  is, and in this case, $U_t\to U^\star$, where $U^\star$ is the
  unique fixed point of $\dboisom$.
\end{restatable}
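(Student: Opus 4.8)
The plan is to treat $\mathcal{I}$ as a change of coordinates and push every structural property of $\smcatprojsup\dbo$ through it; since $\mathcal{I}$ is an isometric \emph{isomorphism} (Lemma \ref{lem:catisometry}), nothing is lost in the transfer, and each of the three assertions reduces to a one-line manipulation.

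First I would verify the conjugation identity $U_{t+1} = \dboisom U_t$, which is purely formal. Because $\mathcal{I}$ is a bijection we may write $\mvdistretsym_t = \mathcal{I}^{-1}U_t$, and substituting into the definition of the iterates gives $U_{t+1} = \mathcal{I}\mvdistretsym_{t+1} = \mathcal{I}\smcatprojsup\dbo\mvdistretsym_t = \mathcal{I}\smcatprojsup\dbo\mathcal{I}^{-1}U_t = \dboisom U_t$, where the last equality is exactly the definition of $\dboisom$. Note this holds regardless of contractivity.

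Next, for contractivity, I would use that $\mathcal{I}$ preserves distances, i.e. $\|\mathcal{I}\mvdistretsym_1 - \mathcal{I}\mvdistretsym_2\|_{2,\infty} = \supremal\mmd(\mvdistretsym_1,\mvdistretsym_2)$. Given $U_1,U_2 \in \prod_{x\in\statespace}\R^{N(x)}_1$, set $\mvdistretsym_i = \mathcal{I}^{-1}U_i$; then $\|\dboisom U_1 - \dboisom U_2\|_{2,\infty} = \|\mathcal{I}\smcatprojsup\dbo\mvdistretsym_1 - \mathcal{I}\smcatprojsup\dbo\mvdistretsym_2\|_{2,\infty} = \supremal\mmd(\smcatprojsup\dbo\mvdistretsym_1, \smcatprojsup\dbo\mvdistretsym_2)$. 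If $\smcatprojsup\dbo$ contracts with factor $\gamma^{c/2}$ (Lemma \ref{lem:smproj-like-catproj}), this is at most $\gamma^{c/2}\supremal\mmd(\mvdistretsym_1,\mvdistretsym_2) = \gamma^{c/2}\|U_1 - U_2\|_{2,\infty}$, using the isometry once more; hence $\dboisom$ is a $\gamma^{c/2}$-contraction.

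Finally, for convergence I would transfer the fixed point of Lemma \ref{lem:smproj-like-catproj} rather than re-run Banach's theorem on the image space. Conjugation shows $\mvdistretsym$ is a fixed point of $\smcatprojsup\dbo$ iff $\mathcal{I}\mvdistretsym$ is a fixed point of $\dboisom$, and since $\mathcal{I}$ is a bijection this transports both existence and uniqueness; thus $U^\star := \mathcal{I}\mvdistretcatsm^\pi$ is the unique fixed point of $\dboisom$. Convergence then follows from $\|U_t - U^\star\|_{2,\infty} = \supremal\mmd(\mvdistretsym_t, \mvdistretcatsm^\pi) \to 0$, which holds because $\mvdistretsym_t \to \mvdistretcatsm^\pi$ by Lemma \ref{lem:smproj-like-catproj}. (Equivalently, one could apply Banach's theorem directly to $\dboisom$, using that completeness of $(\prod_{x\in\statespace}\R^{N(x)}_1, \|\cdot\|_{2,\infty})$ is inherited from that of $(\smcatcls, \supremal\mmd)$ via the isometric isomorphism.) There is no serious obstacle here; the only point warranting care is completeness of the target space when $\statespace$ is infinite, but the isometric isomorphism $\mathcal{I}$ hands this to us for free, so I would simply invoke the transfer rather than argue uniform convergence of Cauchy sequences by hand.
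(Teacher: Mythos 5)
Your proposal is correct and follows essentially the same route as the paper: verify the conjugation identity formally, push the contraction factor through the isometry, and conclude convergence to the unique fixed point. The only (cosmetic) difference is in the last step, where you transport the fixed point of $\smcatprojsup\dbo$ through $\mathcal{I}$ rather than invoking Banach's theorem directly on the image space as the paper does; both are valid and equivalent given that $\mathcal{I}$ is an isometric isomorphism.
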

\begin{proof}
By definition, we have
\begin{align*}
U_{t+1}
&= \mathcal{I}\mvdistretsym_{t+1}\\
&= \mathcal{I}(\catprojsup\dbo\mvdistretsym_t)\\
&= \mathcal{I}\catprojsup\dbo(\mathcal{I}^{-1}U_t)\\
&= \dboisom U_t,
\end{align*}
which proves the first claim. Moreover, for $U_1 =
\mathcal{I}\mvdistretsym_1$ and $U_2 =
\mathcal{I}\mvdistretsym_2$, we have
\begin{align*}
\left\|\dboisom U_1 - \dboisom U_2\right\|_{2,\infty}
&= \left\|\mathcal{I}\catprojsup\dbo\mvdistretsym_1 -
\mathcal{I}\catprojsup\dbo\mvdistretsym_2\right\|_{2,\infty}\\
&= \supremal\mmd(\catprojsup\dbo\mvdistretsym_1,
\catprojsup\dbo\mvdistretsym_2),
\end{align*}
where the second step transforms the $\|\cdot\|_{2,\infty}$ to
$\supremal\mmd$ since $\mathcal{I}$ is an isometry between those
metric spaces. Therefore, if $\catprojsup\dbo$ is contractive with contraction
factor $\beta\in (0,1)$, we have
\begin{align*}
\left\|\dboisom U_1 - \dboisom U_2\right\|_{2,\infty}
&\leq \beta\supremal\mmd(\mvdistretsym_1, \mvdistretsym_2)\\
&= \beta\left\|\mathcal{I}\mvdistretsym_1 - \mathcal{I}\mvdistretsym_2\right\|_{2,\infty}\\
&= \beta\left\|U_1 - U_2\right\|_{2,\infty},
\end{align*}
so that $\dboisom$ has the same contraction factor as
$\catprojsup\dbo$. Consequently, by the Banach fixed point
theorem, $\dboisom$ has a unique fixed point $U^\star$ whenever
$\catprojsup\dbo$ is contractive, and $U_t\to U^\star$ at the same
rate as $\mvdistretsym_t\to\mvdistret^\pi$.
\end{proof}

The main theorem in this section is that temporal difference learning
on the finite dimensional representations
$\mathcal{I}(\mvdistretsym_t)$ converges.

\begin{restatable}[Convergence of categorical temporal difference
  learning]{proposition}{cattdisomconvergence}\label{prop:cattdisomconvergence}
  Let $\sequence{t}{U}\subset\prod_{x\in\statespace}\R^{N(x)}_1$ be
  given by $U_t = \mathcal{I}\widehat\mvdistretsym_t$, and let $\kappa$ be a
  kernel satisfying the conditions of
  Theorem \ref{thm:dp:mmd}. Suppose that, for each
  $x\in\statespace$, the states
  $\sequence{t}{X}$ and step sizes $\sequence{t}{\alpha}$ satisfy the
  Robbins-Munro conditions
  \begin{align*}
    \sum_{t=0}^\infty\alpha_t\indicator{X_t = x} = \infty\qquad
    \sum_{t=0}^\infty\alpha_t^2\indicator{X_t = x} < \infty.
  \end{align*}
  Then, with probability 1, $U_k\to U^\star$, where $U^\star$ is the
  fixed point of $\dboisom$.
\end{restatable}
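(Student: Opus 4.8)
The plan is to read the stochastic recursion \eqref{eq:dp:stochastic:iterates}, after transport through the isometry $\mathcal{I}$ of Lemma~\ref{lem:catisometry}, as a textbook asynchronous stochastic approximation scheme whose deterministic drift is the contraction $\dboisom$ of Lemma~\ref{lem:dpisometry}, and then to invoke the convergence theory of \cite{tsitsiklis1994asynchronous,jaakkolaConvergenceStochasticIterative1993,bertsekas1996neuro}. First I would observe that the iterates stay in $\smcatcls$ (an affine combination of two mass-$1$ signed-measure families is again mass-$1$), so $\smcatprojsup$ acts as the identity at every unvisited state; since $\dbostoch_t$ only modifies the block at $X_t$, applying the affine map $\mathcal{I}$ collapses \eqref{eq:dp:stochastic:iterates} to the per-state form
\[
U_{t+1}(x) = \big(1-\alpha_t\indicator{X_t=x}\big)U_t(x) + \alpha_t\indicator{X_t=x}\,F_t(x), \qquad F_t(x) := \mathcal{I}_x(\smcatprojsup\dbostoch_t\widehat\mvdistretsym_t)(x),
\]
so that only the block indexed by the visited state is touched at step $t$, with effective step sizes $\alpha_t(x)=\alpha_t\indicator{X_t=x}$ obeying the Robbins--Monro hypotheses by assumption.

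The pivotal computation is that the conditional mean of the random target recovers $\dboisom$. Given $\mathcal{F}_{t-1}$ and $X_t=x$ the reward $R_t=r(x)$ is deterministic while $X_t'\sim P^\pi(\cdot\mid x)$, so using the affinity of $\smcatprojsup$ (Lemma~\ref{lem:smproj-like-catproj}) and the linearity of $\mathcal{I}_x$ to push the expectation through both maps, together with the defining identity $\mathbb{E}_{X'\sim P^\pi(\cdot\mid x)}\big[\pushforward{(\bootfn{r(x)})}{\widehat\mvdistretsym_t(X')}\big]=(\dbo\widehat\mvdistretsym_t)(x)$, I obtain
\[
\mathbb{E}\big[F_t(x)\mid \mathcal{F}_{t-1}, X_t=x\big] = \mathcal{I}_x(\smcatprojsup\dbo\widehat\mvdistretsym_t)(x) = (\dboisom U_t)(x).
\]
Writing $w_t(x) := F_t(x)-(\dboisom U_t)(x)$ exhibits the recursion in the canonical form $U_{t+1}(x)=(1-\alpha_t(x))U_t(x)+\alpha_t(x)\big((\dboisom U_t)(x)+w_t(x)\big)$ with a martingale-difference noise $w_t$ of conditional mean zero (relative to $\mathcal{F}_{t-1}\vee\sigma(X_t)$).

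It then remains to verify the quantitative hypotheses. For the noise: boundedness of cumulants in $[0,R_{\max}]^d$ makes $\returnspace$ compact and the kernel evaluations uniformly bounded; since $\smcatprojsup$ is a nonexpansion, $\mathcal{I}_x$ an isometry, and the mean embedding of $\pushforward{(\bootfn{r(x)})}{\widehat\mvdistretsym_t(X')}$ depends linearly on $\widehat\mvdistretsym_t(X')$, finite-dimensionality (equivalence of norms on each block) gives $\|F_t(x)\|_2\leq a+b\|U_t\|_{2,\infty}$, whence $\mathbb{E}\big[\|w_t(x)\|_2^2\mid\mathcal{F}_{t-1},X_t=x\big]\leq A+B\|U_t\|_{2,\infty}^2$, the standard linear-growth bound. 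For the drift, Lemma~\ref{lem:dpisometry} (via Lemma~\ref{lem:smproj-like-catproj}) already gives that $\dboisom$ is a $\gamma^{c/2}$-contraction in $\|\cdot\|_{2,\infty}$ with unique fixed point $U^\star$. The asynchronous stochastic approximation theorem then delivers $U_t\to U^\star$ with probability $1$.

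The step I expect to be the main obstacle is reconciling the classical theory, stated for contractions in a coordinatewise weighted maximum norm, with the fact that $\|\cdot\|_{2,\infty}$ is a \emph{block} maximum norm --- a supremum over states of Euclidean norms \emph{within} each state. The resolution is that the asynchrony is at the state level: whenever $X_t=x$ the entire block $U_t(x)\in\R^{N(x)}_1$ is refreshed at once, so I would treat each state as a single vector-valued component, with its per-block Euclidean error playing the role of the scalar component error. Because $\statespace$ is finite and each block is a finite-dimensional Hilbert space, the shrinking-bound argument underlying \cite{tsitsiklis1994asynchronous} transfers verbatim: the max-over-blocks contraction couples the blocks exactly as the max-over-coordinates contraction couples coordinates in the scalar case, and the linear-growth noise bound is precisely what is needed to also secure almost-sure boundedness of $(U_t)_{t\geq 0}$. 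I would therefore state and apply a block-vector version of the theorem, checking that nothing beyond these structural facts is used.
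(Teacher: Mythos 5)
Your proposal follows essentially the same route as the paper: establish unbiasedness of the stochastic update via the affinity of $\smcatprojsup$ and the linearity of $\mathcal{I}$, verify a linear-growth bound on the second moment of the martingale-difference noise, and combine this with the contractivity of $\dboisom$ from Lemma~\ref{lem:dpisometry} to invoke an asynchronous stochastic approximation theorem. The only difference is that the paper directly cites a block-vector version of that theorem (Lemma~\ref{lem:stochastic-approx}, stated for contractions in $\|\cdot\|_{2,\infty}$ on $(\R^n)^\statespace$), so the reconciliation with the coordinatewise classical theory that you flag as the main obstacle is already packaged in the cited result rather than needing to be re-derived.
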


The proof of this result as a natural extension of the convergence
analysis of Categorical TD Learning given in
\cite{bellemareDistributionalReinforcementLearning2023} to the
multivariate return setting under the supremal MMD metric. The
analysis hinges on the following general lemma.

\begin{lemma}[{\cite[Theorem 6.9]{bellemareDistributionalReinforcementLearning2023}}]\label{lem:stochastic-approx}
Let $\mathcal{O}:(\R^n)^\statespace\to(\R^n)^\statespace$ be a
contractive operator with respect to $\|\cdot\|_{2,\infty}$ with
fixed point $Z^\star$, and
let $(\Omega, \mathcal{F}, \sequence{k}{F}, \mathbf{P})$ be a
filtered probability space. Define a map
$\widehat{\mathcal{O}}:(\R^n)^\statespace\times\statespace\times\Omega\to(\R^n)^\statespace$
such that

\begin{equation}\label{eq:stoch:operator:unbiased}
\Conditionalexpectation{\mathbf{P}}{\widehat{\mathcal{O}}(Z, X,
\omega)}{X = x} = (\mathcal{O}Z)(x).
\end{equation}
For a stochastic process $\sequence{k}{\xi}$ adapted to
$\sequence{k}{\mathcal{F}}$ with $\xi_k = X_k\oplus\omega_k$,
consider a sequence $\sequence{k}{Z}\subset(\R^n)^{\statespace}$
given by

\begin{equation}
\label{eq:stoch:iterates}
Z_{k+1}(x) = 
\begin{cases}
(1 - \alpha_k)Z_k(x) + \alpha_k\widehat{\mathcal{O}}(Z_k, X_k,
\omega_k)(x) & X_k = x\\
Z_k(x) & \text{otherwise}
\end{cases}
\end{equation}

where $\sequence{k}{\alpha}$ is adapted to
$\sequence{k}{\mathcal{F}}$ and satisfies the Robbins-Munro
conditions for each $x\in\statespace$,
\begin{align*}
\sum_{k=1}^\infty\alpha_k\indicator{X_t=x} = \infty,\qquad
\sum_{k=1}^\infty\alpha_k^2\indicator{X_t=x} < \infty.
\end{align*}
Finally, for the processes $\sequence{k}{w(x)}$ where $w(x)_k =
(\widehat{\mathcal{O}}(Z_k, X_k, \omega_k) -
(\mathcal{O}Z_k)(X_k))\indicator{X_k=x}$, assume the following
moment condition holds,

\begin{equation}
\label{eq:stoch:boundedness}
\Conditionalexpectation{\mathbf{P}}{\|w(x)_k\|^2}{\mathcal{F}_k}\leq
C_1 + C_2 \|Z_k\|_{2,\infty}^2
\end{equation}

for finite universal constants $C_1, C_2$. Then, with probability
1, $Z_k\to Z^\star$.
\end{lemma}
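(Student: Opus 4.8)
The plan is to recognise Lemma~\ref{lem:stochastic-approx} as an instance of the classical theory of asynchronous stochastic approximation with contractive expected updates (in the spirit of Tsitsiklis and of Bertsekas--Tsitsiklis), and to prove it by reducing the iteration to its error process and then running the standard two-stage argument: first establish almost-sure boundedness of the iterates, and then drive the error to zero by an iterated-contraction ``sandwich.'' Writing $\beta\in(0,1)$ for the contraction modulus of $\mathcal{O}$ in $\|\cdot\|_{2,\infty}$ and absorbing the indicator into a per-component step size $\alpha_k(x) = \alpha_k\indicator{X_k=x}$, I would first rewrite \eqref{eq:stoch:iterates} uniformly across states as
\[
Z_{k+1}(x) = (1-\alpha_k(x))Z_k(x) + \alpha_k(x)\big((\mathcal{O}Z_k)(x) + w(x)_k\big),
\]
so that, by the unbiasedness \eqref{eq:stoch:operator:unbiased}, $\Conditionalexpectation{\mathbf{P}}{w(x)_k}{\mathcal{F}_k}=0$, while \eqref{eq:stoch:boundedness} gives $\Conditionalexpectation{\mathbf{P}}{\|w(x)_k\|^2}{\mathcal{F}_k}\le C_1 + C_2\|Z_k\|_{2,\infty}^2$.

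The first ingredient is a noise lemma: for each fixed component $x$, the pure-noise recursion $M_{k+1}(x) = (1-\alpha_k(x))M_k(x) + \alpha_k(x)w(x)_k$ converges to $0$ almost surely whenever the iterates are bounded. This follows from the Robbins--Monro conditions $\sum_k\alpha_k(x)=\infty$ and $\sum_k\alpha_k(x)^2<\infty$ together with the conditional second-moment bound, via a martingale (Robbins--Siegmund) argument: the summable squared step sizes control the accumulated variance, while unbiasedness makes the driving term a martingale difference. The second, and more delicate, ingredient is to establish that $\sup_k\|Z_k\|_{2,\infty}<\infty$ almost surely. The contraction yields $\|\mathcal{O}Z\|_{2,\infty}\le\beta\|Z\|_{2,\infty}+b$ with $b=\|\mathcal{O}0\|_{2,\infty}$, so each coordinate is a convex combination of its current value and a strictly inward-contracted target plus noise; boundedness then follows from the stability argument of Bertsekas--Tsitsiklis, in which one assumes the iterates are unbounded, rescales by the running maximum, and uses the noise lemma to show the rescaled process must eventually shrink by a factor near $\beta<1$, a contradiction. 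The moment bound \eqref{eq:stoch:boundedness}, permitting the noise to grow only linearly in $\|Z_k\|_{2,\infty}^2$, is precisely what keeps the rescaled noise controlled here.

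Given boundedness, I would close the argument with the iterated-contraction bound. Let $D$ be any almost-sure asymptotic bound on $\|Z_k-Z^\star\|_{2,\infty}$. For $\epsilon>0$, eventually $\|Z_k-Z^\star\|_{2,\infty}\le D+\epsilon$, so by the contraction $\|\mathcal{O}Z_k - Z^\star\|_{2,\infty} = \|\mathcal{O}Z_k-\mathcal{O}Z^\star\|_{2,\infty}\le\beta(D+\epsilon)$. Writing the error recursion componentwise and comparing it with an auxiliary recursion whose target is the constant $\beta(D+\epsilon)$ perturbed by the vanishing noise of the previous step, and using that each component is updated infinitely often by $\sum_k\alpha_k(x)=\infty$, one obtains $\limsup_k\|Z_k-Z^\star\|_{2,\infty}\le\beta(D+\epsilon)$, hence $\le\beta D$ as $\epsilon\downarrow 0$. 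Iterating this contraction $n$ times gives $\limsup_k\|Z_k-Z^\star\|_{2,\infty}\le\beta^n D\to 0$, so $Z_k\to Z^\star$ almost surely.

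I expect the main obstacle to be the boundedness (stability) step: handling the asynchrony, in which distinct components are touched at different random times while remaining coupled through $\mathcal{O}$, and verifying that the rescaling argument still closes under the \emph{linear}-in-$\|Z_k\|_{2,\infty}^2$ variance bound rather than a uniform one. The iterated-contraction sandwich is then largely bookkeeping, but it too must respect the asynchrony through the per-component Robbins--Monro condition, so care is needed to ensure the auxiliary comparison recursions are driven down on every coordinate.
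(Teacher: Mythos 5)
The paper does not actually prove this lemma: it is imported wholesale, with citation, as Theorem 6.9 of the distributional RL book of Bellemare, Dabney, and Rowland, whose own proof is the classical asynchronous stochastic approximation argument of Tsitsiklis and Bertsekas--Tsitsiklis. Your sketch reconstructs exactly that argument --- error recursion with martingale-difference noise, almost-sure boundedness via the rescaling/stability argument, then the iterated-contraction sandwich using the per-component Robbins--Monro conditions --- so in substance you are reproducing the proof of the cited source rather than diverging from the paper. Your outline is correct in structure and correctly identifies the boundedness step as the delicate one. The only point I would press you on is one you do not mention: the classical statements are for scalar components under a (weighted) max norm, whereas here each component $Z_k(x)$ lives in $\R^n$ and the norm is $\|\cdot\|_{2,\infty}$. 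The extension is routine --- the noise lemma works in the Euclidean norm on each $\R^n$ block since the second-moment bound \eqref{eq:stoch:boundedness} is already stated in that norm, and the contraction and triangle inequalities are all that the sandwich argument uses --- but a complete write-up should say so explicitly rather than silently invoking the scalar-component theorems. Likewise, your claim that $w(x)_k$ is a martingale difference deserves one line verifying that conditioning on $\mathcal{F}_k$ together with the indicator $\indicator{X_k=x}$ reduces to the stated unbiasedness condition \eqref{eq:stoch:operator:unbiased}. Neither of these is a gap in the idea; they are the bookkeeping a referee would ask you to fill in.
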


The operator $\mathcal{O}$ of Lemma \ref{lem:stochastic-approx} will be substituted with $\dboisom$, governing the dynamics of the encoded iterates of the multi-return distribution. The stochastic operator $\widehat{\mathcal{O}}$ will be substituted with the corresponding stochastic TD operator for $\dboisom$, given by

\begin{equation}\label{eq:dbiosm:stoch}
\widehat{\dboisomsym}^\pi(U, x_1, (R, x_2))(x) =
\begin{cases}
\mathcal{I}\left(\smcatprojsup\pushforward{(\bootfn{R})}{\mathcal{I}^{-1}U(x_2)}\right)(x_1) & x_1 = x\\
U(x) & \text{otherwise}.
\end{cases}
\end{equation}

This corresponds to applying a Bellman backup from a stochastic reward $R$ and next state $x_2$, followed by projecting back onto $\smcatcls$, and applying the isometry back to $\prod_{x\in\statespace}\R^{N(x)}_1$.

\begin{proof}[Proof of Proposition \ref{prop:cattdisomconvergence}]
Let $n=\max_{x\in\statespace}N(x)$. Note that for any $m\leq n$,
$\R^m$ can be isometrically embedded into $\R^n$ by
zero-padding. Thus, $\prod_{x\in\statespace}\R^{N(x)}_1$ can be
isometrically embedded into $(\R^n_1)^\statespace$, so without loss
of generality, we will assume that $N(x)\equiv n$.

Define the map
$\widehat{\dboisomsym}^\pi:(\R^n_1)^\statespace\times\statespace\times(\R^d\times\statespace)\to (\R^n_1)^\statespace$
according to

\begin{equation}
(\widehat{\dboisomsym}^\pi(U, x_1, (R, x_2)))(x) =
\begin{cases}
\mathcal{I}(\smcatprojsup\pushforward{(\bootfn{R})}{\mathcal{I}^{-1}U(x_2)})(x_1)
& x_1 = x\\
U(x) & \text{otherwise}
\end{cases}
\end{equation}

Then, defining $\widehat{\dboisomsym}^\pi_kU =
\widehat{\dboisomsym}^\pi(U, X_k, (R_k, X'_k))$ with $(X_k, A_k, R_k, X'_k) = T_k\sim\mathbf{P}$ as in
\eqref{eq:generative-model}, we have
\begin{align*}
U_{k+1}(x)
&= (\mathcal{I}\dbostoch\widehat\mvdistretsym_{k+1})(x)\\
&= \mathcal{I}\left(\indicator{X_k =
x}\smcatprojsup\pushforward{(\bootfn{R_k})}{\widehat{\mvdistretsym}_k(X'_k)}
+ \indicator{X_k\neq x}\widehat{\mvdistretsym}_k(x)\right)\\
&= \indicator{X_k =
x}\mathcal{I}\smcatprojsup\pushforward{(\bootfn{R_k})}{\widehat{\mvdistretsym}_k(X'_k)}
+ \indicator{X_k\neq x}U_k(x)\\
&= (\widehat{\dboisomsym}^\pi_kU_k)(x).
\end{align*}
Note that, since $\smcatprojsup$ is a Hilbert projection onto an affine subspace, it is affine \cite{lax2002functional}. Consequently,
$\widehat{\dboisomsym}^\pi$ is an unbiased estimator of
the operator $\dboisom$ in the following sense,
\begin{align*}
\Conditionalexpectation{\mathbf{P}}{\widehat{\dboisomsym}^\pi(U,
X_k, (R_k, X'_k))}{X_k = x}
&= \expectation{\mathbf{P}}{\mathcal{I}\smcatprojsup\pushforward{(\bootfn{R_k})}{\mathcal{I}^{-1}U(X'_k)}}\\
&= \mathcal{I}\smcatprojsup\expectation{X'_k\sim P^\pi(\cdot\mid
x)}{\pushforward{(\bootfn{r(x)})}{\mathcal{I}^{-1}U(X'_k)}}\\
&= \mathcal{I}\smcatprojsup\dbo\mathcal{I}^{-1}U(x) = (\dboisom U)(x),
\end{align*}
where the first step invokes the linearity of $\smcatprojsup$, the second step invokes the linearity of the isometry
$\mathcal{I}$ established in Lemma \ref{lem:catisometry} and the
third step is due to the definition of $\dbo$. As a result, we see
that the conditions (\ref{eq:stoch:operator:unbiased}) and
(\ref{eq:stoch:iterates}) of Lemma \ref{lem:stochastic-approx} are
satisfied by $\widehat{\dboisomsym}^\pi$, the iterates
$\sequence{k}{U}$, and the step sizes
$\sequence{k}{\alpha}$. Moreover, for $w_k(x)$ defined by
\begin{align*}
w_k(x) &= \left(\widehat{\dboisomsym}^\pi(U_k, X_k, (R_k, X'_k))
- (\dboisom U_k)(X_k)\right)\indicator{X_k=x}
\end{align*}

we have $\|w_k(x)\|^2 \leq C_1 + C_2\|U(x)\|^2$ for universal constants $C_1, C_2$---this is shown in Lemma \ref{lem:cat-td:bounded-noise}. As such, the condition of \eqref{eq:stoch:boundedness} is satisfied.

Finally, since $\dboisom$ inherits contractivity from
$\smcatprojsup\dbo$ as shown in Lemma \ref{lem:smproj-like-catproj}, we
may invoke Lemma \ref{lem:stochastic-approx}, which ensures that
$U_k\to U$ with probability 1, where $U=\dboisom U$ is a unique
fixed point.
\end{proof}

\cattdconvergence*
\begin{proof}
  By Proposition \ref{prop:cattdisomconvergence}, the sequence
  $\sequence{t}{U}$ with $U_t = \mathcal{I}\mvdistretsym_t$ converges
  to a unique fixed point $U$ with probability 1.
  Note that
  \begin{align*}
    U^\star &= \dboisom U^\star\\
    \mathcal{I}^{-1}U^\star &= \mathcal{I}^{-1}\dboisom U^\star\\
    &= \smcatprojsup\dbo(\mathcal{I}^{-1}U^\star).
  \end{align*}

  Therefore, $\mathcal{I}^{-1}U^\star$ is a fixed point of
  $\smcatprojsup\dbo$. Since it was shown in Lemma
  \ref{lem:smproj-like-catproj} that $\smcatprojsup\dbo$ has a unique
  fixed point, it follows that $\mathcal{I}^{-1}U^\star =
  \mvdistretcatsm^\pi$. Since $\mathcal{I}$ is an isometry,
  $\widehat\mvdistretsym_t =
  \mathcal{I}^{-1}U_t\to\mathcal{I}^{-1}U^\star$ with probability 1,
  so indeed $\widehat\mvdistretsym_t\to\mvdistretcatsm^\pi$ with
  probability 1.
\end{proof}

To conclude, we prove Lemma \ref{lem:cat-td:bounded-noise}, which was invoked in the proof of Proposition \ref{prop:cattdisomconvergence}.

\begin{lemma}\label{lem:cat-td:bounded-noise}
Under the conditions of Proposition \ref{prop:cattdisomconvergence}, it holds that for any $x\in\statespace$ and $U\in\prod_{x\in\statespace}\R^{N(x)-1}$,
\begin{align*}
    \Expectation{X\sim P^\pi(\cdot\mid x)}{\left\|\dboisom U(x) - \widehat{\dboisomsym}^\pi(U, x, (R(x), X))(x)\right\|^2}
    \leq C_1 + C_2\|U(x)\|^2
\end{align*}
for finite constants $C_1, C_2\in\R_+$.
\end{lemma}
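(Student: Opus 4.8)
The plan is to strip away the two affine maps $\mathcal{I}$ and $\smcatprojsup$, reducing the conditional second moment to an $\mmd$-variance of pushforward measures, and then to control that variance by the coordinate norm of $U$ via the $c$-homogeneity of the kernel on mass-zero measures together with the isometry of Lemma~\ref{lem:catisometry}. Throughout I fix the state $x$, abbreviate $\nu_Y = \mathcal{I}^{-1}U(Y)$ for the mass-$1$ signed measure encoded at a state $Y$, and write $\mu_Y = \pushforward{(\bootfn{R(x)})}{\nu_Y}$ together with $\bar\mu = \Expectation{X\sim P^\pi(\cdot\mid x)}{\mu_X}$. By the definitions of $\dboisom$ and $\widehat{\dboisomsym}^\pi$ and the unbiasedness computation already carried out in the proof of Proposition~\ref{prop:cattdisomconvergence}, the vector inside the norm equals $\mathcal{I}\smcatprojsup\mu_X(x) - \mathcal{I}\smcatprojsup\bar\mu(x)$. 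Since $\mathcal{I}$ is a linear isometry (Lemma~\ref{lem:catisometry}) and $\smcatprojsup$ is an $\supremal\mmd$-nonexpansion (Lemma~\ref{lem:smproj-like-catproj}), this is bounded in norm by $\mmd(\mu_X,\bar\mu)$, so the left-hand side is at most $\Expectation{X\sim P^\pi(\cdot\mid x)}{\mmd(\mu_X,\bar\mu)^2}$.

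Next I would convert this variance into a second moment about a fixed reference. Because the mean embedding is linear, the embedding of $\bar\mu$ is exactly the expectation of the embeddings $\mu_{\mu_X}$, so $\Expectation{X}{\mmd(\mu_X,\bar\mu)^2}$ is a Hilbert-space variance; a variance is dominated by the mean squared deviation from \emph{any} fixed point, so for a fixed probability measure $\nu_\star$ on $\returnspace$ and $\mu_\star = \pushforward{(\bootfn{R(x)})}{\nu_\star}$ we obtain $\Expectation{X}{\mmd(\mu_X,\bar\mu)^2}\le \Expectation{X}{\mmd(\mu_X,\mu_\star)^2}$. This lets me work with a single, structured reference rather than the (itself $U$-dependent) mean $\bar\mu$.

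The crucial step is to bound $\mmd(\mu_X,\mu_\star)$ by the coordinate norm of $U$. The difference $\nu_X-\nu_\star$ has total mass zero, so the bilinear computation underlying Lemma~\ref{lem:mmd-contraction:contraction} applies verbatim to signed measures and gives $\mmd(\mu_X,\mu_\star)=\gamma^{c/2}\mmd(\nu_X,\nu_\star)$. Then $\mmd(\nu_X,\nu_\star)\le \|\mu_{\nu_X}\|_{\mathcal{H}}+\|\mu_{\nu_\star}\|_{\mathcal{H}}$, and the isometric identity behind Lemma~\ref{lem:catisometry}---namely $\|\mu_{\nu}\|_{\mathcal{H}}^2 = \bar\nu^\top K_x\bar\nu = \|\mathcal{I}\nu\|_2^2$---identifies $\|\mu_{\nu_X}\|_{\mathcal{H}}$ with the Euclidean norm of the associated coordinate vector of $U$, while $\|\mu_{\nu_\star}\|_{\mathcal{H}}$ is a fixed constant determined by $\nu_\star$ and the compact (hence bounded) effective support on which every $\mu_Y$ is carried. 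Squaring and applying $(a+b)^2\le 2a^2+2b^2$ then yields a bound of the stated form $C_1 + C_2\|U(x)\|^2$, with $C_2$ absorbing $\gamma^{c}$ and $C_1$ absorbing the reference and kernel-boundedness constants.

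The main obstacle is precisely this last control of the embedding norm. Unlike in the probability-measure setting, the signed iterates $\nu_Y$ may carry arbitrarily large total variation, so their embeddings \emph{cannot} be bounded by the diameter of the support alone. The homogeneity identity on the mass-zero part $\nu_X-\nu_\star$ is what tames this: it converts the potentially large magnitude into the factor $\|\mathcal{I}\nu\|_2$, which the isometry turns back into the coordinate norm appearing on the right-hand side. The only remaining routine checks are that $\kappa$ is bounded on the compact set containing the supports of all $\mu_Y$ (guaranteed by bounded rewards and finite supports) and that $\gamma^{c/2}\in(0,1)$, both of which ensure $C_1,C_2<\infty$ and complete the argument.
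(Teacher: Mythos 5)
Your proof is correct in substance but takes a genuinely different route from the paper's own. Both arguments open identically---strip $\mathcal{I}$ by isometry and $\smcatprojsup$ by nonexpansiveness (Lemmas \ref{lem:catisometry} and \ref{lem:smproj-like-catproj})---but then diverge. The paper applies the crude split $\|a-b\|^2\leq 2\|a\|^2+2\|b\|^2$ to separate the deterministic backup $\dbo\mathcal{I}^{-1}U(x)$ from the stochastic one, and bounds each embedding norm by comparing against a fixed point ($\mvdistretsym^\pi$ for the first term; for the second, the fixed point of an auxiliary single-successor operator $\widetilde{\mathcal{T}}_{x'}$), invoking contractivity and boundedness of $\kappa$ on compacts. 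You instead observe that the conditional second moment is a Hilbert-space variance of the embedded pushforwards, dominate the variance by the second moment about an arbitrary fixed reference $\mu_\star=\pushforward{(\bootfn{R(x)})}{\nu_\star}$, and then use the exact scaling $\mmd(\mu_X,\mu_\star)=\gamma^{c/2}\mmd(\nu_X,\nu_\star)$, which holds because the bilinear expansion in Lemma \ref{lem:mmd-contraction:contraction} uses only that $\nu_X-\nu_\star$ has total mass zero and hence extends verbatim to mass-$1$ signed measures---a point you correctly identify as the crux, since signed iterates can carry unbounded total variation and kernel-boundedness alone cannot control their embeddings. Your route is arguably cleaner: it dispenses with the paper's auxiliary operators and its factor-of-two split, and the variance step even sharpens constants. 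Two small repairs are needed. First, the identity $\|\mu_\nu\|^2_{\mathcal{H}}=\|\mathcal{I}\nu\|_2^2$ is not guaranteed, since $\mathcal{I}$ is only an \emph{affine} isometry and so preserves differences, not norms of individual points; you should instead bound $\mmd(\nu_X,\nu_\star)=\|\mathcal{I}\nu_X-\mathcal{I}\nu_\star\|_2\leq\|U(X)\|_2+\|\mathcal{I}\nu_\star\|_2$ directly, which in fact shortens your argument by removing the embedding-norm detour entirely. Second, your bound naturally produces $\|U(X)\|^2$ at the sampled successor state rather than $\|U(x)\|^2$ at the current state; note, however, that the paper's own proof silently makes the same conflation (replacing $U(x')$ with $U(x)$ in its bound on the term $B$), and the discrepancy is harmless because the stochastic-approximation machinery of Lemma \ref{lem:stochastic-approx} only requires control in $\|U\|_{2,\infty}$, which dominates both quantities.
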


\begin{proof}
Since $\mathcal{I}$ is an isometry, we have that
\[
\left\|\dboisom U(x) - \widehat{\dboisomsym}^\pi(U, x, (r, x'))\right\|^2
= \left\|\smcatprojsup\dbo\mathcal{I}^{-1}U(x) - \smcatprojsup\left(\pushforward{(\bootfn{r})}{\mathcal{I}^{-1}(x')}\right)(x)\right\|^2_{\mathcal{H}},
\]

where $\mathcal{H}$ is the RKHS induced by the kernel $\kappa$. Moreover, since $\smcatprojsup$ is a nonexpansion in $\|\cdot\|_{\mathcal{H}}$ as argued in Lemma \ref{lem:smproj-like-catproj}, we have that
\begin{align*}
    &\Expectation{X\sim P^\pi(\cdot\mid x)}{\left\|\dboisom U(x) - \widehat{\dboisomsym}^\pi(U, x, (R(x), X))(x)\right\|^2}\\
    &\leq
    \Expectation{X\sim P^\pi(\cdot\mid x)}{\left\|\dbo\mathcal{I}^{-1}U(x) - \left(\pushforward{(\bootfn{R})}{\mathcal{I}^{-1}U(X)}\right)(x)\right\|^2_{\mathcal{H}}}\\
    &\leq 2\underbrace{\|\dbo\mathcal{I}^{-1}U(x)\|_{\mathcal{H}}^2}_{A} + 2\underbrace{\Expectation{X\sim P^\pi(\cdot\mid x)}{\left\|\left(\pushforward{(\bootfn{R})}{\mathcal{I}^{-1}U(X)}\right)(x)\right\|^2_{\mathcal{H}}}}_{B}.
\end{align*}
Proceeding, we will bound the terms $A, B$. To bound $A$, we simply have
\begin{align*}
    A
    &\leq \|\dbo\mathcal{I}^{-1}U(x) - \mvdistretsym^\pi(x)\|^2_{\mathcal{H}} + \|\mvdistretsym^\pi(x)\|^2_{\mathcal{H}}\\
    &\leq \gamma^{c/2}\|\mathcal{I}^{-1}U(x) - \mvdistretsym^\pi(x)\|^2_{\mathcal{H}} + \|\mvdistretsym^\pi(x)\|^2_{\mathcal{H}},
\end{align*}
where we invoke the contraction of $\dboisom$ in $\mmd$ from Theorem \ref{thm:dp:mmd}. Note that $\mvdistretsym^\pi(x)\in\probset{\returnspace}$, so it follows that $\|\mvdistretsym^\pi\|^2_{\mathcal{H}}\leq D_{1, 1}$ for some constant $D_{1,1}$ since the kernel $\kappa$ is bounded in compact domains.
Expanding the norm of the difference above yields
\[
A \leq (1 + \gamma^{c/2})D_{1,1} + D_{1,2}\|\mathcal{I}^{-1}U(x)\|^2_{\mathcal{H}}
= (1 + \gamma^{c/2})D_{1,1} + D_{1,2}\|U(x)\|^2
\]
for a finite constant $D_{1, 2}$, again invoking the isometry $\mathcal{I}$ in the last step.

Our bound for $B$ is similar. Choose any $x'\in\support{P^\pi(\cdot\mid x)}$. We consider the operator $\widetilde{\mathcal{T}}_{x'}:\probset{\returnspace}\to\probset{\returnspace}$ given by
\[
(\widetilde{\mathcal{T}}_{x'}\mvdistretsym)(x) = \pushforward{(\bootfn{R(x)})}{\mvdistretsym(x')}.
\]
This operator is a contraction in $\mmd$, and correspondingly has a fixed point $\mvdistretsym^\pi_{x'}$. To see this, we note that $\widetilde{T}_{x'}$ is simply a special case of $\dboisom$ for the case $P^\pi(\cdot\mid x) = \delta_{x'}$, so the contractivity and existence of the fixed point are inherited from Theorem \ref{thm:dp:mmd}. Proceeding in a manner similar to the bound on $A$, we have
\begin{align*}
    \left\|\left(\pushforward{(\bootfn{R})}{\mathcal{I}^{-1}U(x')}\right)(x)\right\|^2_{\mathcal{H}}
    &= \left\|\widetilde{\mathcal{T}}_{x'}\mathcal{I}^{-1}U(x)\right\|^2_{\mathcal{H}}\\
    &\leq \left\|\widetilde{\mathcal{T}}_{x'}\mathcal{I}^{-1}U(x) - \mvdistretsym_{x'}\right\|^2_{\mathcal{H}} + \|\mvdistretsym_{x'}(x)\|_{\mathcal{H}}^2\\
    &\leq \gamma^{c/2}\|\mathcal{I}^{-1}U(x) - \mvdistretsym_{x'}\|^2_{\mathcal{H}} + \|\mvdistretsym_{x'}(x)\|_{\mathcal{H}}^2\\
    &\leq (1+\gamma^{c/2})D_{2, 1} + D_{2,2}\|U(x)\|^2
\end{align*}
where the final step mirrors the bound on $A$. Therefore, we have shown that 
\begin{align*}
    &\Expectation{X\sim P^\pi(\cdot\mid x)}{\left\|\dboisom U(x) - \widehat{\dboisomsym}^\pi(U, x, (R(x), X))(x)\right\|^2}\\
    &\leq 2(1+\gamma^{c/2})(D_{1, 1} + D_{2,1}) + (D_{1, 2} + D_{2,2})\|U(x)\|^2,
\end{align*}

completing the proof.
\end{proof}

\section{Memory Efficiency of Randomized EWP Dynamic Programming}\label{app:ewp-blowup}
\newcommand{\mproj}{m_{\mathsf{proj}}}
\newcommand{\munproj}{m_{\mathsf{unproj}}}
In Section~\ref{s:dp:ewp}, we argued for the necessity of considering a projection operator in EWP dynamic programming. While we provided a randomized projection, Theorem \ref{thm:ewpmmddpconvergence} requires that we apply only a finite amount of DP iterations.
Thus, one might ask if, given that we apply only finitely many iterations, the naive unprojected EWP dynamic programming can produce accurate enough approximations of $\mvdistret^\pi$ without costing too much in memory.

In this section, we demonstrate that, in fact, the algorithm described in Theorem \ref{thm:ewpmmddpconvergence} can approximate $\mvdistretsym^\pi$ to any desired accuracy with many fewer particles. Suppose our goal is to derive some $\mvdistretsym$ such that 
\begin{align*}
\supremal\mmd(\mvdistretsym, \mvdistret^\pi)\leq\epsilon
\end{align*}
for some $\epsilon > 0$. We will derive bounds on the number of required particles to attain such an approximation with unprojected EWP dynamic programming (denoting the number of particles $m_{\mathsf{unproj}}$) as well as with our algorithm described in Theorem \ref{thm:ewpmmddpconvergence} (denoting the number of particles $m_{\mathsf{proj}}$.
In both cases, we will compute iterates starting with some $\mvdistretsym_0\in\ewpcls$ with $\supremal\mmd(\mvdistretsym_0, \mvdistret^\pi)\leq D < \infty$.
For simplicity, we will consider the energy distance kernel with $\alpha=1$.

The remainder of this section will show that the dependence of the number of atoms on both $\epsilon$ and $|\mathcal{X}|$ is substantially worse in the unprojected case (that is, $\mproj\ll\munproj$ for large state spaces or low error tolerance). We demonstrate this with concrete lower bounds on $\munproj$ and upper bounds on $\mproj$ below; note that these bounds are not optimized for tightness or generality, and are instead aimed to provide straightforward evidence of our core points above.

We will begin by bounding $m_{\mathsf{unproj}}$. In the best case, $\mvdistretsym_0(x)$ is supported on $1$ particle for each $x$. If any state can be reached from any other state in the MDP with non-zero probability, then applying the distributional Bellman operator to $\mvdistretsym_0$ will result in $\mvdistretsym_1(x)$ having support on $|\mathcal{X}|$ atoms at each state $x$ (due to the mixture over successor states in the Bellman backup). Consequently, the iterate $\mvdistretsym_k(x)$ will be supported on $|\mathcal{X}|^k$ atoms. Since $\supremal\mmd(\mvdistretsym_k, \mvdistret^\pi)\leq\gamma^{1/2}D$ by Theorem \ref{thm:dp:mmd}, we require
\begin{align*}
    K &\geq \frac{2\log(D/\epsilon)}{\log \gamma^{-1}} 
\end{align*}
to ensure that $\supremal\mmd(\mvdistretsym_K, \mvdistret^\pi)\leq\epsilon$. Thus, we have
\begin{align*}
    m_{\mathsf{unproj}} &\geq |\mathcal{X}|^{\frac{2\log(D/\epsilon)}{\log\gamma^{-1}}}.
\end{align*}

On the other hand, the following lemma bounds $m_{\mathsf{proj}}$; we prove the lemma at the end of this section.

\begin{restatable}{lemma}{mprojbound}\label{lem:mprojbound}
Let $\mvdistretsym_{\mproj}$ denote the output of the projected EWP algorithm described by Theorem \ref{thm:ewpmmddpconvergence} with $m=\mproj$ particles. Then under the assumptions of Theorem \ref{thm:ewpmmddpconvergence} and with the energy distance kernel with $\alpha=1$, $\supremal\mmd(\mvdistretsym_{\mproj}, \mvdistret^\pi)\leq\epsilon$ is achievable with
\begin{equation}\label{eq:mprojbound}
    \mproj\in\Theta\left(\epsilon^{-2}\frac{dR^2_{\max}}{(1-\sqrt{\gamma})^2(1-\gamma)^2}\mathsf{polylog}\left(\frac{1}{\epsilon}, \frac{1}{\delta}, |\mathcal{X}|, d, R_{\max}, \frac{1}{1-\sqrt\gamma}\right)\right).
\end{equation}
\end{restatable}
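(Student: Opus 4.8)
The plan is to invert the high-probability convergence rate of Theorem~\ref{thm:ewpmmddpconvergence}, specialised to $\alpha=1$. Recall that the explicit form derived in that proof (the bound \eqref{eq:ewpdpmmdconvergence:bound}) guarantees, with probability at least $1-\delta$,
\[
\supremal\mmd(\mvdistretsym_{\mproj}, \mvdistret^\pi)
\;\leq\; A\,\frac{B + \log \mproj}{\sqrt{\mproj}},
\]
where $A = C_0\sqrt{d}\,R_{\max}/\big((1-\sqrt\gamma)(1-\gamma)\big)$, $B = \log\big(|\statespace|\delta^{-1}/\log\gamma^{-1}\big)$, $C_0$ is the universal constant appearing in that proof, and $K=\lceil \log \mproj / \log\gamma^{-1}\rceil$ iterations are used (this choice of $K$ is precisely what lets the contraction term $\gamma^{K/2}D$ be absorbed into the $1/\sqrt{\mproj}$ term). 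Consequently it suffices to identify the smallest $\mproj$ for which $A(B+\log \mproj)/\sqrt{\mproj}\le\epsilon$, and then read off its order of magnitude.

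The substitution $t=\sqrt{\mproj}$ turns the target inequality into $\epsilon t \ge A B + 2A\log t$, i.e.
\[
t \;\ge\; \frac{AB}{\epsilon} + \frac{2A}{\epsilon}\log t .
\]
First I would record the elementary fact that an inequality of the form $t \ge c_1 + c_2\log t$ (with $c_1,c_2>0$) holds for all $t$ above a threshold $t_\star = \Theta\big(c_1 + c_2\log(c_1+c_2)\big)$; this follows either from the Lambert-$W$ solution of the equation $t = c_1 + c_2\log t$, or from a one-step bootstrap in which one substitutes the guess $t_\star = 2c_1 + 2c_2\log\big(2(c_1+c_2)\big)$ into the right-hand side and verifies that the logarithmic feedback term is dominated. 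Taking $c_1 = AB/\epsilon$ and $c_2 = 2A/\epsilon$ then yields $t_\star = \Theta\!\big(\frac{A}{\epsilon}(B + \log\frac{A}{\epsilon})\big)$, and hence $\mproj = t_\star^2 = \Theta\!\big(\frac{A^2}{\epsilon^2}(B + \log\frac{A}{\epsilon})^2\big)$.

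Finally I would substitute the definitions of $A$ and $B$ back in. The prefactor gives $A^2/\epsilon^2 = \Theta\big(\epsilon^{-2}\, d R_{\max}^2 / ((1-\sqrt\gamma)^2(1-\gamma)^2)\big)$, which is exactly the leading term of \eqref{eq:mprojbound}. The squared bracket $(B + \log(A/\epsilon))^2$ expands into a sum of squared logarithms of $|\statespace|$, $\delta^{-1}$, $1/\epsilon$, $d$, $R_{\max}$, and $1/(1-\sqrt\gamma)$, where I would use $\log\gamma^{-1}\asymp 1-\gamma$ together with $1-\gamma = (1-\sqrt\gamma)(1+\sqrt\gamma)$ to fold the remaining $\gamma$-dependent logarithms into $\log\frac{1}{1-\sqrt\gamma}$; this bracket is therefore absorbed into the $\mathsf{polylog}(\cdot)$ factor of the claim. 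I expect the only genuinely delicate step to be the transcendental inversion $t\ge c_1+c_2\log t$: the subtlety is that $\mproj$ appears both as $\sqrt{\mproj}$ and inside $\log\mproj$ in the rate, so the bound is self-referential, and one must confirm that the logarithmic feedback does not inflate the order of the solution beyond the stated polylog factors. Everything downstream of that inversion is bookkeeping of constants and logarithms.
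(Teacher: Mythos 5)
Your proposal is correct and follows essentially the same route as the paper's proof: both start from the explicit bound \eqref{eq:ewpdpmmdconvergence:bound}, substitute $u=\sqrt{\mproj}$, and invert the resulting self-referential inequality $\epsilon u = c_3 + 2c_4\log u$ — the paper does this explicitly via the two real branches of the Lambert $W$-function and the asymptotic $W_{-1}^2(-\bar z)\in\mathsf{polylog}(1/\bar z)$, which is one of the two options you name. The remaining steps (identifying the prefactor $A^2/\epsilon^2$ and absorbing the squared logarithmic bracket into the $\mathsf{polylog}$ factor) match the paper's bookkeeping.
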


For any fixed MDP with $|\mathcal{X}|\geq 4$ and $\gamma \geq 1/2$, we have that
\begin{align*}
m_{\mathsf{unproj}} &\geq \exp\left(2\log|\mathcal{X}|\frac{\log\epsilon^{-1}}{\log\gamma^{-1}}\right)\exp\left(2\log|\mathcal{X}|\frac{\log D}{\log\gamma^{-1}}\right)\\
&= \exp\left(2\log|\mathcal{X}|\frac{\log D}{\log\gamma^{-1}}\right)\epsilon^{-2\frac{\log|\mathcal{X}|}{\log\gamma^{-1}}}\\
&\in\Omega(\epsilon^{-4})
\end{align*}
since $D>0$ and does not depend on $\epsilon$.
Meanwhile, we have $m_{\mathsf{proj}}\in\Theta(\epsilon^{-2}\mathsf{polylog}(\epsilon^{-1}))$ by Lemma \ref{lem:mprojbound}, indicating a much more graceful dependence on $\epsilon$ relative to the unprojected algorithm.

On the other hand, for any fixed tolerance $\epsilon\leq\gamma D$, we immediately have
\begin{align*}
    m_{\mathsf{unproj}} &\in\Omega(|\mathcal{X}|^2)\\
    m_{\mathsf{proj}} &\in\Theta(d\cdot\mathsf{polylog}(d, |\mathcal{X}|)).
\end{align*}
In the worst case, we may have $d \in \Theta(|\mathcal{X}|)$ (any larger $d$ will induce linearly dependent cumulants). Thus, we have
\begin{align*}
    \frac{m_{\mathsf{proj}}}{m_{\mathsf{unproj}}} &\in
    \begin{cases}
    \widetilde{O}(|\mathcal{X}|^{-1}) & d\in\omega(1)\\
    \widetilde{O}(|\mathcal{X}|^{-2}) & d\in\Theta(1),
    \end{cases}
\end{align*}
so the projected algorithm scales much more gracefully with $|\mathcal{X}|$ as well.

\subsubsection*{Proof of Lemma \ref{lem:mprojbound}}
Finally, we prove Lemma \ref{lem:mprojbound}, which determines the number of atoms required to achieve an $\epsilon$-accurate return distribution estimate with the algorithm of Theorem \ref{thm:ewpmmddpconvergence}.

\mprojbound*
\begin{proof}
Note that, by Theorem \ref{thm:ewpmmddpconvergence}, increasing $\mproj$ can only decrease the error $\epsilon$ as long as $\mproj\geq 1$. Therefore, as shown in \eqref{eq:ewpdpmmdconvergence:bound} in the \hyperref[proof:thm:ewpmmdconvergence]{proof} of Theorem \ref{thm:ewpmmddpconvergence}, there exists a universal constant $C_0 > 0$ such that
\begin{equation}\label{eq:mproj:epsproxy}
\epsilon :=
C_0\frac{1}{\sqrt{\mproj}}\underbrace{\frac{d^{\alpha/2}R_{\max}^\alpha}{(1-\gamma^{\alpha/2})(1-\gamma)^{\alpha}}}_{c_1}\left(\underbrace{\log\left(\frac{|\mathcal{X}|\delta^{-1}}{\log\gamma^{-\alpha}}\right)}_{c_2} + \log \mproj\right).
\end{equation}
Now, we write $c_3 = C_0c_1c_2, c_4 = C_0c_1$, and $u := \sqrt{\mproj}$, yielding
\begin{align*}
\epsilon &= \frac{c_3}{u} + c_4\frac{\log u^2}{u}\\
&= \frac{c_3}{u} + 2c_4\frac{\log u}{u}.
\end{align*}
Then, after isolating the logarithmic term and exponentiating, we see that
\begin{align*}
u &= \exp\left(\frac{u\epsilon - c_3}{2c_4}\right).
\end{align*}
We now rearrange this expression and invoke the identity $W(z)e^{W(z)} = z$ where $W$ is a Lambert $W$-function \cite{corless1996lambert}:
\begin{align*}
    ue^{c_3/2c_4}\exp\left(-\frac{u\epsilon}{2c_4}\right) &= 1\\
    -\frac{u\epsilon}{2c_4}\exp\left(-\frac{u\epsilon}{2c_4}\right) &= -\frac{e^{-c_3/2c_4}\epsilon}{2c_4} = -\frac{e^{-c_2/2}\epsilon}{2c_4}\\
    \therefore z e^{z} &= -\frac{e^{-c_2/2}\epsilon}{2c_4} && z := -\frac{u\epsilon}{2c_4}.
\end{align*}

There are two branches of the Lambert $W$-function on the reals, namely $W_0$ and $W_{-1}$. These two branches satisfy $W_0(ze^z) = z$ when $z\geq -1$ and $W_{-1}(ze^z) = z$ when $z\leq -1$. In our case, we know that $z$ is negative, and it is known \cite{corless1996lambert} that $|W_0(z)|\leq 1$ when $z\in[-1, 0]$. Consequently, when $z\geq -1$, we have $|\frac{u\epsilon}{2c_4}|\leq 1$, and substituting $\mproj = u^2$, we have
\begin{equation}\label{eq:mprojbound:zupper}
    \mproj \leq \frac{4c_4^2}{\epsilon^2}\ \text{when}\ z\geq -1.
\end{equation}

On the other hand, when $z\leq -1$, we have
\begin{align*}
    z &= W_{-1}\left(-\frac{e^{-c_2/2}\epsilon}{2c_4}\right)\\
    \therefore -\frac{u\epsilon}{2c_4} &= W_{-1}\left(-\frac{e^{-c_2/2}\epsilon}{2c_4}\right)\\
    \therefore\mproj &= \frac{4c_4^2}{\epsilon^2}W^2_{-1}\left(-\frac{e^{-c_2/2}\epsilon}{2c_4}\right),\quad z\leq -1.
\end{align*}

Since it is known \cite[Equations 4.19, 4.20]{corless1996lambert} that $W_{-1}^2(-\overline{z})\in\mathsf{polylog}(1/\overline{z})$, incorporating \eqref{eq:mprojbound:zupper}, we have that
\begin{align*}
    \mproj
    &\leq\begin{cases}
    \frac{4c^2_4}{\epsilon^2} & z\geq -1\\
    \frac{4c^2_4}{\epsilon^2}W^2_{-1}\left(-\frac{e^{c_2/2}\epsilon}{2c_4}\right) & z \leq -1\\
    \end{cases}\\
    &\leq
    \frac{4c_4^2}{\epsilon^2}\max\left(1, \mathsf{polylog}(c_4e^{c_2/2}\epsilon^{-1})\right)\\
    &\leq \frac{4C_0^2dR^2_{\max}}{(1-\sqrt{\gamma})^2(1-\gamma)^2\epsilon^2}\mathsf{polylog}\left(\frac{1}{\epsilon}, \frac{1}{\delta}, |\statespace|, d, R_{\max}, \frac{1}{1-\sqrt{\gamma}}\right).
\end{align*}

The upper bound given above will generally not be an integer. Howevever, increasing $\mproj$ can only improve the approximation error, as shown in Theorem \ref{thm:ewpmmddpconvergence} since $\log m/\sqrt{m}$ decreases monotonically when $m>7$. So, we can round $\mproj$ up to the nearest integer (or round it down when $m\leq 7$) incurring a penalty of at most one atom. It follows that the randomized EWP dynamic programming algorithm of Theorem \ref{thm:ewpmmddpconvergence} run with $\mproj$ given by \eqref{eq:mprojbound} produces a return distribution function $\mvdistretsym_{\mproj}$ for which $\supremal\mmd(\mvdistretsym_{\mproj}, \mvdistret^\pi)\leq\epsilon$.
\end{proof}

\section{Nonlinearity of the Categorical MMD Projection}\label{app:no-affine}
In Section~\ref{s:td}, we noted that the categorical projection $\catprojsup$ is non-affine. Here, we provide an explicit example certifying this phenomenon.

We consider a single-state MDP, since the nonlinearity issue is independent of the cardinality of the state space (the projection is applied to each state-conditioned distribution independently). We write $\catsup = \{0,\dots, 3\}^2$, and consider the kernel $\kappa$ induced by $\rho(x, y) = \|x - y\|_2$---this resulting MMD is known as the energy distance, which is what we used in our experiments. We consider two distributions, $p_1 = \delta_{[1.5, 1.5]}$ and $p_2 = \delta_{[2.5, 0]}$.

\begin{table}
\centering
\caption{Certificate that $\catprojsup$ is not affine}\label{f:no-affinity}
\begin{tabular}{c|c|c}
Support point $\xi\in\catsup$ & $q_1(\xi)$ & $q_2(\xi)$\\\hline
$(0, 0)$ & $0$ & $0$\\
$(0, 1)$ & $0$ & $0$\\
$(0, 2)$ & $0$ & $0$\\
$(0, 3)$ & $0$ & $0$\\
$(1, 0)$ & $0$ & $0$\\
$(1, 1)$ & $0.1999$ & $0.2057$\\
$(1, 2)$ & $0.1999$ & $0.1959$\\
$(1, 3)$ & $0$ & $0$\\
$\mathbf{(2, 0)}$ & $\mathbf{0.0937}$ & $\mathbf{0.07957}$\\
$\mathbf{(2, 1)}$ & $\mathbf{0.2062}$ & $\mathbf{0.2413}$\\
$(2, 2)$ & $0.1999$ & $0.2026$\\
$(2, 3)$ & $0$ & $0$\\
$\mathbf{(3, 0)}$ & $\mathbf{0.0937}$ & $\mathbf{0.0787}$\\
$(3, 1)$ & $0.0063$ & $0$\\
$(3, 2)$ & $0$ & $0$\\
$(3, 3)$ & $0$ & $0$
\end{tabular}
\end{table}

We consider $\lambda=0.8$ and compare $q_1 = \catprojsup (\lambda p_1 + (1-\lambda)p_2)$ with $q_2=\lambda\catprojsup p_1 + (1-\lambda)\catprojsup p_2$, and we note that $q_1\neq q_2$; confirming that $\catprojsup$ is not an affine map. The results are tabulated in Table \ref{f:no-affinity}, with bolded entries depicting the atoms with non-negligible differences in probability under $q_1, q_2$.

\section{Experiment Details}
TD-learning experiments were conducted on a NVidia A100 80G GPU to parallelize experiments. Methods were implemented in Jax \cite{jax2018github}, particularly with the help of JaxOpt \cite{jaxopt_implicit_diff} for vectorizing QP solutions --- this was helpful for computing the categorical projections discussed in this work.
SGD was used for optimization, using an annealed learning rate schedule $(\lambda_k)_{k\geq 0}$ with $\lambda_k = k^{-3/5}$, satisfying the conditions of Lemma \ref{lem:stochastic-approx}. Experiments with constant learning rates yielded similar results, but were less stable---this validates that the choice of learning rate schedule did not impede learning.

The dynamic programming experiments were implemented in the Julia programming language \cite{bezanson2017julia}.

In all experiments, we used the kernel induced by $\rho(x, y) = \|x - y\|_2$ with reference point $0$ for MMD optimization---this corresponds to the energy distance, and satisfies the requisite assumptions for convergent multivariate distributional dynamic programming outlined in Theorem \ref{thm:dp:mmd}.

\section{Neural Multivariate Distributional TD-Learning}\label{app:parking-experiment}
\begin{wrapfigure}{r}{0.2\textwidth}
\centering
\vspace{-2em}
\includegraphics[scale=0.2]{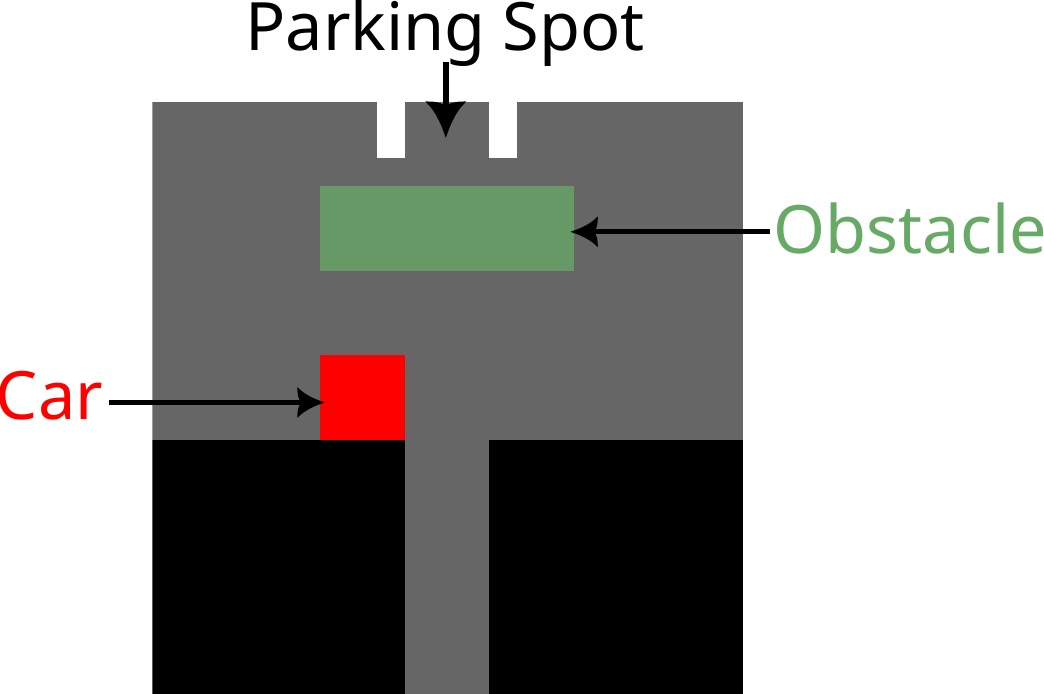}
\vspace{-2em}
\caption{Example state in the parking environment.}\label{fig:parking:demo}
\vspace{-1em}
\end{wrapfigure}
For the sake of illustration, in this section, we demonstrate that the signed categorical TD learning algorithm presented in Section \ref{s:td} can be scaled to continuous state spaces with neural networks.
We will consider an environment with visual (pixel) observations of a car in a parking lot, an example observation is shown in Figure \ref{fig:parking:demo}.

Here, we consider 2-dimensional cumulants, where the first dimension tracks the $x$ coordinate of the car, and the second dimension is an indicator that is $1$ if and only if the car is parked in the parking spot.
We learn a multivariate return distribution function with transitions sampled from trajectories that navigate around the obstacle to the parking spot. Notably, the successor features (expectation of multivariate return distribution) will be zero in the first dimension, since the set of trajectories is horizontally symmetric. Thus, from the successor features alone, one cannot distinguish the observed policy from one that traverses straight through the obstacle!

Fortunately, when modeling a distribution over multivariate returns, we should see that the support of the multivariate return distribution does not include points with vanishing first dimension.

\begin{figure}[h]
\centering
\includegraphics[width=\linewidth]{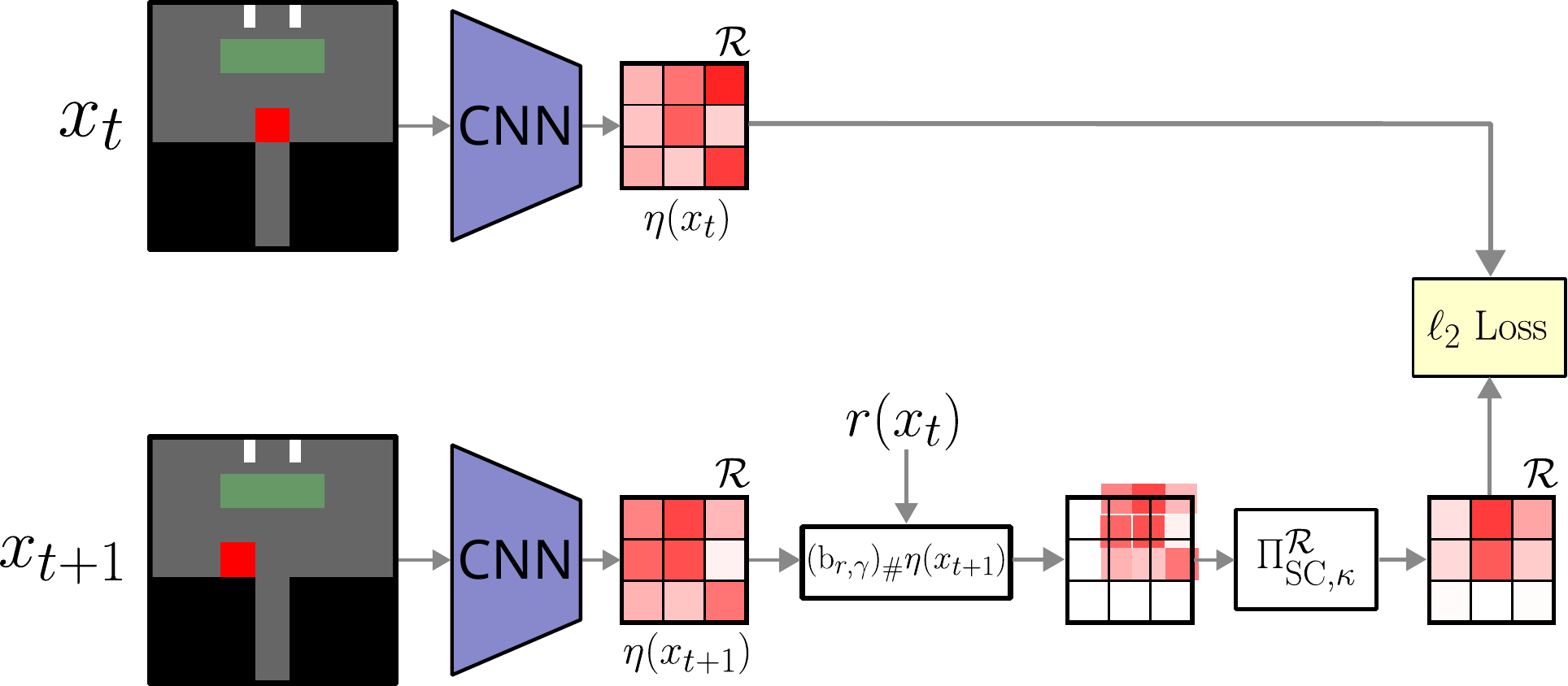}
\caption{Neural architecture for modeling multi-return distributions from images.}\label{fig:parking:nn}
\end{figure}

To learn the multivariate return distribution function from images, we use a convolutional neural architecture as shown in Figure \ref{fig:parking:nn}.

Notably, we simply use convolutional networks to model the signed masses for the fixed atoms of the categorical representation. The projection $\Pi^{\catsup}_{\mathrm{SC},\kappa}$ is computed by a QP solver as discussed in Section \ref{s:dp:proj}, and is applied only to the target distributions (thus we do not backpropagate through it).

We compared the multi-return distributions learned by our signed categorical TD method with that of \cite{zhangDistributionalReinforcementLearning2021}. Our results are shown in Figure \ref{fig:parking:results}.
We see that both TD-learning methods accurately estimate the distribution over multivariate returns, indicating that no multivariate return will have a vanishing lateral component. Quantitatively, we see that the EWP algorithm appears to be stuck in a local optimum, with some particles lying in regions of low probability mass.

Moreover, on the right side of Figure \ref{fig:parking:results}, we show predicted return distributions for two randomly sampled reward vectors, and quantitatively evaluate the two methods. The leftmost reward vector incentivizes the agent to take paths conservatively avoiding the obstacle on the left. The rightmost reward vector incentivizes the agent to get to the parking spot as quickly as possible. We see that the EWP TD learning algorithm of \cite{zhangDistributionalReinforcementLearning2021} more accurately estimates the return distribution function corresponding to the latter reward vector, while our signed categorical TD algorithm more accurately estimates the return distribution function corresponding to the former reward vector. In both cases, both methods produce accurate estimations.

\begin{figure}
    \centering
    \includegraphics[width=0.22\linewidth]{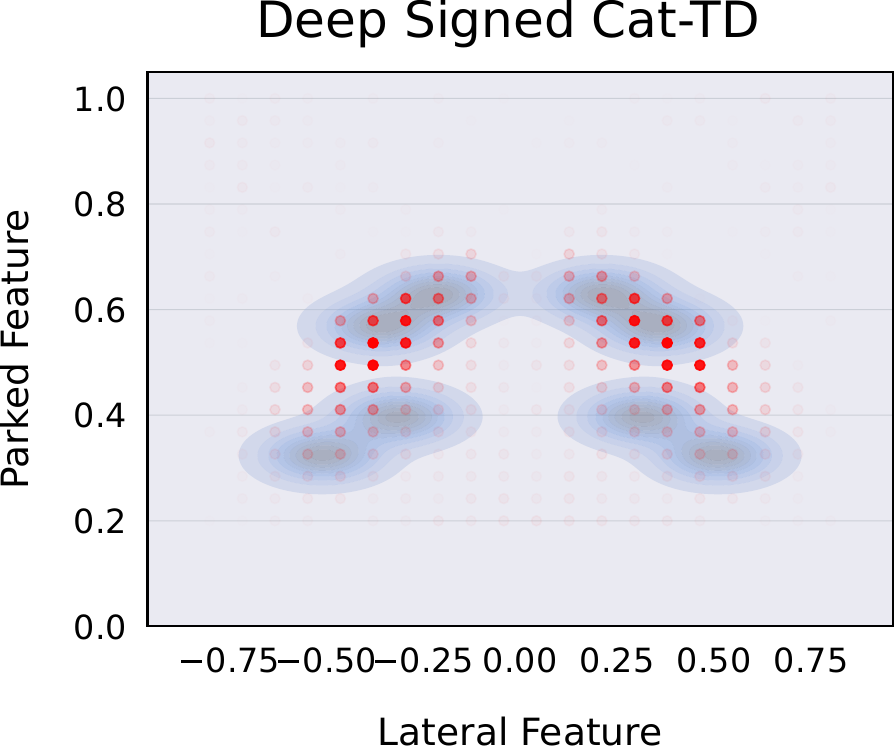}
    \includegraphics[width=0.22\linewidth]{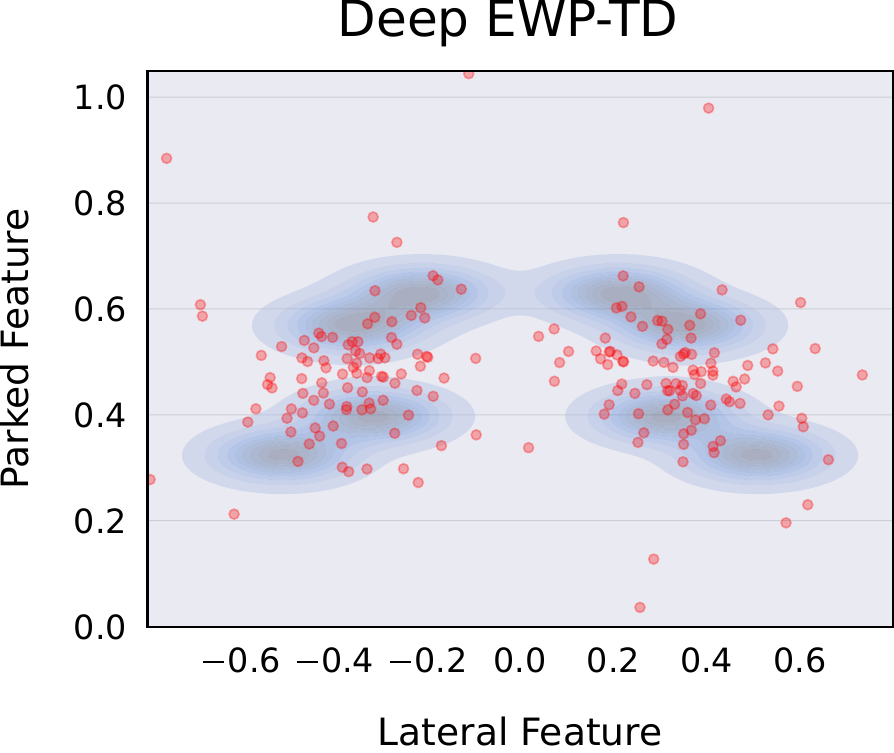}
    \hfill\vline\hfill
    \includegraphics[width=0.22\linewidth]{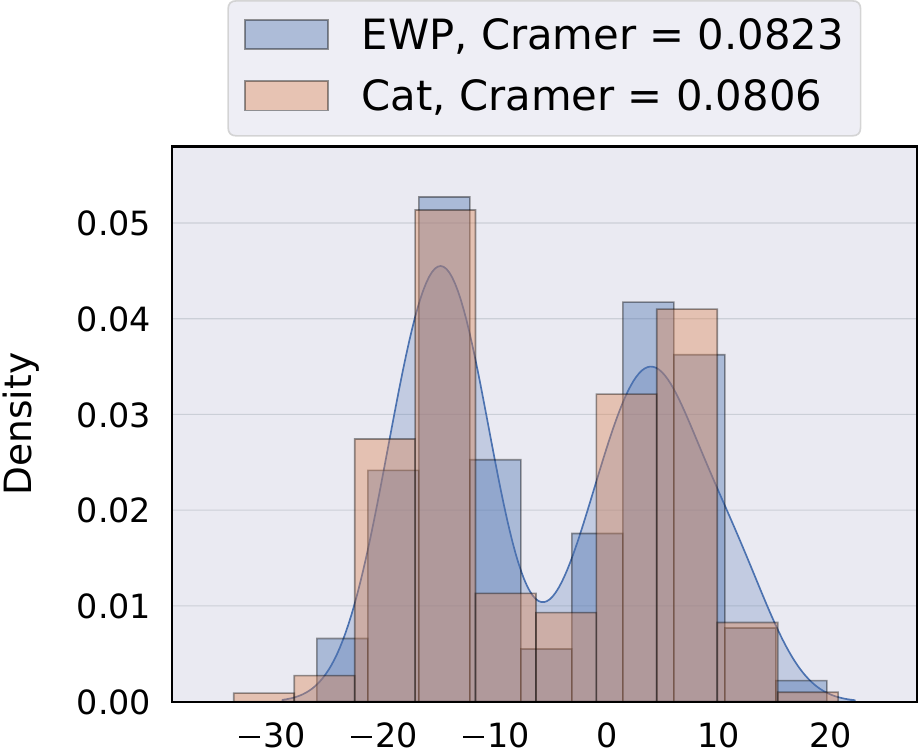}
    \includegraphics[width=0.205\linewidth]{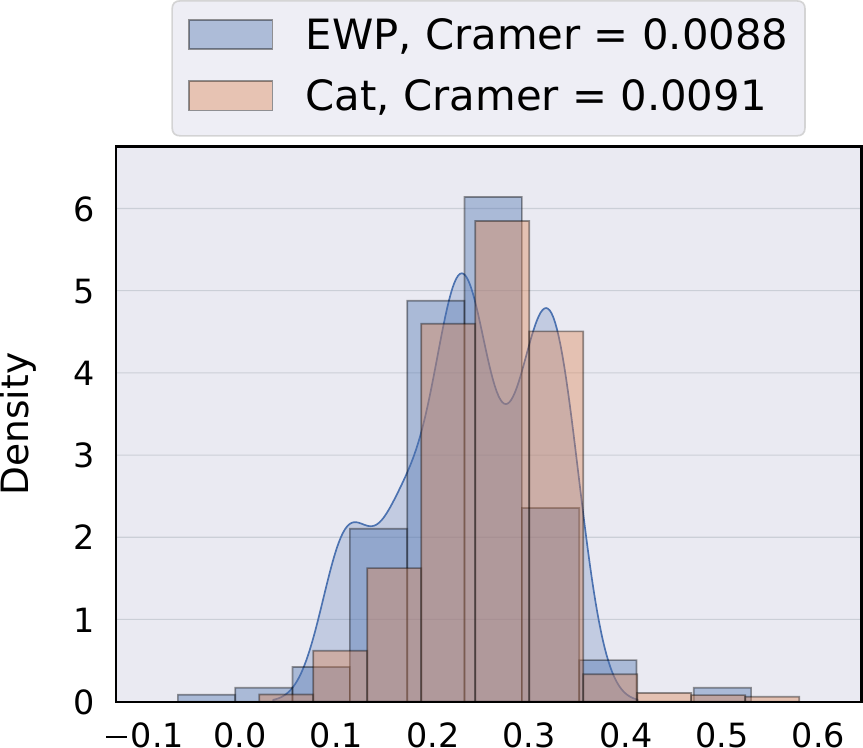}
    \caption{Multi-return distributions learned by signed categorical TD and EWP TD, as well as examples of predicted return distributions on two randomly sampled reward functions.}
    \label{fig:parking:results}
\end{figure}

\end{appendices}
\end{document}